\definecolor{navyblue}{rgb}{0.0,0.0,0.5}
\definecolor{ForestGreen}{RGB}{34,139,34}
\crefname{prop}{Proposition}{Propositions}
\crefname{thm}{Theorem}{Theorems}
\crefname{lem}{Lemma}{Lemmas}
\newcommand{\figwidththree}{0.31\textwidth}
\newcommand{\figwidthfour}{0.225\textwidth}
\newcommand{\figwidthfourplus}{0.28\textwidth}
\newcommand{\Actions}{\mathcal{A}}
\newcommand{\States}{\mathcal{S}}
\newcommand{\trainset}{\mathcal{T}}
\newcommand{\model}{\mathcal{P}}
\newcommand{\bsc}{B_{sc}}
\newcommand{\ber}{B_{er}}
\newcommand{\defeq}{\mathrel{\overset{\makebox[0pt]{\mbox{\normalfont\tiny\sffamily def}}}{=}}}
\newcommand{\eye}{\mathbf{I}}
\newcommand{\RR}{\mathbb{R}}
\DeclareMathOperator*{\expectation}{\mathbb{E}}
\newtheorem{thm}{Theorem}
\def\sgn{\text{sgn}}
\newcommand{\EE}{\mathbb{E}}
\newcommand{\E}{\mathbb{E}}
\newcommand{\sperrbound}{\epsilon_{s}}
\newcommand{\vmax}{V_{max}}
\newcommand{\rmax}{R_{max}}
\newcommand{\cubichit}{{\tilde{t}_{\epsilon}}}
\newcommand{\squarehit}{{t_{\epsilon}}}
\newif\ifconsiderlater
\newcommand{\todo}[1]{{\color{blue} \textbf{XXX [#1] XXX}}}
\newcommand{\todo}[1]{}
\newif\ifSupp
\title{Understanding and Mitigating the Limitations of Prioritized Experience Replay}
\author[1,3]{Yangchen Pan}
\author[1]{Jincheng Mei\thanks{Work done while at the University of Alberta. Equal contribution with Yangchen Pan. Correspondence to: Yangchen Pan <pan6@ualberta.ca> and Jincheng Mei <jmei2@ualberta.ca>.}}
\author[2,4]{Amir-massoud Farahmand}
\author[1,4]{Martha White}
\author[1]{\\Hengshuai Yao}
\author[3]{Mohsen Rohani}
\author[3]{Jun Luo}
\affil[1]{%
	University of Alberta
}
\affil[2]{%
	University of Toronto \& Vector Institute
}
\affil[3]{%
	Huawei Noah's Ark Lab
}
\affil[4]{%
	CIFAR AI Chair
}
\begin{document}
\maketitle
\begin{abstract}
Prioritized Experience Replay (ER) has been empirically shown to improve sample efficiency across many domains and attracted great attention; however, there is little theoretical understanding of why such prioritized sampling helps and its limitations. In this work, we take a deep look at the prioritized ER. In a supervised learning setting, we show the equivalence between the error-based prioritized sampling method for mean squared error and uniform sampling for cubic power loss. We then provide theoretical insight into why it improves convergence rate upon uniform sampling during early learning. Based on the insight, we further point out two limitations of the prioritized ER method: 1) outdated priorities and 2) insufficient coverage of the sample space. To mitigate the limitations, we propose our model-based stochastic gradient Langevin dynamics sampling method. We show that our method does provide states distributed close to an ideal prioritized sampling distribution estimated by the brute-force method, which does not suffer from the two limitations. We conduct experiments on both discrete and continuous control problems to show our approach's efficacy and examine the practical implication of our method in an autonomous driving application. 

\end{abstract}


\section{Introduction}

Experience Replay (ER)~\citep{lin1992self} has been a popular method for training large-scale modern Reinforcement Learning (RL) systems~\citep{degris2012continuous,adam2012experience,mnih2015humanlevel,hessel2018rainbow,vincent2018introrl}. In ER, visited experiences are stored in a buffer, and at each time step, a mini-batch of experiences is \emph{uniformly} sampled to update the training parameters in the value or policy function. Such a method is empirically shown to effectively stabilize the training and improve the sample efficiency of deep RL algorithms. Several follow-up works propose to improve upon it by designing non-uniform sampling distributions or re-weighting mechanisms of experiences~\citep{schaul2016prioritized,marcin2017her,junhyuk2018selfil,tim2018experienceselect,horgan2018distributed,daochen2019er,novati2019er,kumar2020discor,sun2020er,liu2021regret,kimin2021sunrise,firoozi2022er}. The most relevant one to our work is prioritized ER~\citep{schaul2016prioritized}, which attempts to improve the vanilla ER method by sampling those visited experiences proportional to their absolute Temporal Difference (TD) errors. Empirically, it can significantly improve sample efficiency upon vanilla ER on many domains. 

ER methods have a close connection to Model-Based RL (MBRL) methods~\citep{kaelbling1996rlsurvey,bertsekas2009neurodyna,sutton2018intro}. ER can be thought of as an instance of a classical model-based RL architecture---Dyna~\citep{sutton1991integrated}, using a (limited) non-parametric model given by the buffer~\citep{vanseijen2015adeeper,hasselt2019when}. A Dyna agent uses real experience to update its policy as well as its reward and dynamics model. In-between taking actions, the agent can get hypothetical experiences from the model and use them just like the real experiences to further improve the policy. The mechanism of generating those hypothetical experiences is termed \textbf{search-control}. When defining the search-control as sampling experiences from an ER buffer, the Dyna agent reduces to an ER agent. Existing works show that smart search-control strategies can further improve sample efficiency of a Dyna agent~\citep{sutton2008dyna,gu2016continuous,goyal2018recall,holland2018dynaplanshape,yangchen2018rem,dane2018mbrltabulation,janner2019trustmodel,chelu2020credit}. Particularly, prioritized sweeping~\citep{moore1993prioritized} is among the earliest work that improves upon vanilla Dyna. The idea behind prioritized sweeping is quite intuitive: we should give high priority to states whose absolute TD errors are large because they are likely to cause the most change in value estimates. Hence, the prioritized ER by \cite{schaul2016prioritized}, which applies TD error-based prioritized sampling to ER, is a natural idea in a model-free RL setting. However, there is little rigorous understanding towards prioritized ER method about why it can help and its limitations. 

This work provides a theoretical insight into the prioritized ER's advantage and points out its two drawbacks: outdated priorities and insufficient sample space coverage, which may significantly weaken its efficacy. To mitigate the two issues, we propose to use the Stochastic Gradient Langevin Dynamics (SGLD) sampling method to acquire states. Our method relies on applying an environment model to 1) simulate priorities of states and 2) acquire hypothetical experiences. Then these experiences are used for further improving the policy. We demonstrate that, comparing with the conventional prioritized ER method, the hypothetical experiences generated by our method are distributed closer to the \emph{ideal} TD error-based sampling distribution, which does not suffer from the two drawbacks. Finally, we demonstrate the utility of our approach on various benchmark discrete and continuous control domains and an autonomous driving application.

\section{Background} \label{sec:Background}

In this section, we firstly review basic concepts in RL. Then we briefly introduce the prioritized ER method, which will be examined in-depth in the next section. We conclude this section by discussing a classic MBRL architecture called Dyna~\citep{sutton1991integrated} and its recent variants, which are most relevant to our work. 

\textbf{Basic notations.} We consider a discounted Markov Decision Process (MDP) framework~\citep{csaba2010rlalgs}. An MDP can be denoted as a tuple $(\States, \Actions, \mathbb{P}, R,  \gamma)$ including state space $\States$, action space $\Actions$, probability transition kernel $\mathbb{P}$, reward function $R$, and discount rate $\gamma \in [0,1]$. At each environment time step $t$, an RL agent observes a state $s_t \in \States$, takes an action $a_t \in \Actions$, and moves to the next state $s_{t+1} \sim \mathbb{P}(\cdot| s_t, a_t)$, and receives a scalar reward signal  $r_{t+1} = R(s_t, a_t, s_{t+1})$. A policy is a mapping $\pi:\mathcal{S}\times\mathcal{A}\rightarrow[0,1]$ that determines the probability of choosing an action at a given state.

A popular algorithm to find an optimal policy is Q-learning~\citep{watkins1992q}. With function approximation, parameterized action-values $Q_\theta$ are updated using $\theta = \theta + \alpha \delta_t \nabla_\theta Q_\theta(s_t,a_t)$ for stepsize $\alpha > 0$ with TD-error $\delta_t \defeq r_{t+1} + \gamma \max_{a' \in \Actions} Q_\theta(s_{t+1}, a') - Q_\theta(s_t,a_t)$. The policy is defined by acting greedily w.r.t. these action-values.

\textbf{ER methods.} ER is critical when using neural networks to estimate $Q_\theta$, as used in DQN~\citep{mnih2015human}, both to stabilize and speed up learning. The vanilla ER method uniformly samples a mini-batch of experiences from those visited ones in the form of $(s_t, a_t, s_{t+1}, r_{t+1})$ to update neural network parameters. Prioritized ER~\citep{schaul2016prioritized} improves upon it by sampling prioritized experiences, where the probability of sampling a certain experience is proportional to its TD error magnitude, i.e., $p(s_t, a_t, s_{t+1}, r_{t+1})\propto |\delta_{t}|$. However, the underlying theoretical mechanism behind this method is still not well understood. 

\textbf{MBRL and Dyna.} With a model, an agent has more flexibility to sample hypothetical experiences. We consider a one-step model which takes a state-action pair as input and provides a distribution over the next state and reward. 
We build on the Dyna formalism~\citep{sutton1991integrated} for MBRL, and more specifically, the recently proposed (Hill Climbing) HC-Dyna~\citep{pan2019hcdyna} as shown in Algorithm~\ref{alg_hcdyna}. HC-Dyna provides some smart approach to \emph{Search-Control} (SC)---the mechanism of generating states or state-action pairs from which to query the model to get the next states and rewards. 

HC-Dyna proposes to employ stochastic gradient Langevin dynamics (SGLD) for sampling states, which relies on hill climbing on performing some criterion function $h(\cdot)$. The term ``Hill Climbing (HC)'' is used for generality as the SGLD sampling process can be thought of as doing some modified gradient ascent~\citep{pan2019hcdyna,pan2020frequencybased}. 

The algorithmic framework maintains two buffers: the conventional ER buffer storing experiences (i.e., an experience/transition has the form of $(s_t, a_t, s_{t+1}, r_{t+1})$) and a \emph{search-control queue} storing the states acquired by search-control mechanisms (i.e., SLGD sampling). At each time step $t$, a real experience $(s_t, a_t, s_{t+1}, r_{t+1})$ is collected and stored into the ER buffer. Then the HC search-control process starts to collect states and store them into the search-control queue. A hypothetical experience is obtained by first selecting a state $s$ from the search-control queue, then selecting an action $a$ according to the current policy, and then querying the model to get the next state $s'$ and reward $r$ to form an experience $(s, a, s', r)$. These hypothetical transitions are combined with real experiences into a single mini-batch to update the training parameters. The $n$ updates, performed before taking the next action, are called \emph{planning updates}~\citep{sutton2018intro}, as they improve the value/policy by using a model. The choice of pairing states with on-policy actions to form hypothetical experiences has been reported to be beneficial~\citep{gu2016continuous,yangchen2018rem,janner2019trustmodel}.

Two instances have been proposed for $h(\cdot)$: the value function $v(s)$ \citep{pan2019hcdyna} and the gradient magnitude $||\nabla_s v(s)||$ \citep{pan2020frequencybased}. The former is used as a measure of the utility of a state: doing HC on the learned value function should find high-value states without being constrained by the physical environment dynamics. The latter is considered as a measure of the value approximation difficulty, then doing HC provides additional states whose values are difficult to learn. The two suffer from several issues as we discuss in the Appendix~\ref{sec:background-dyna}. This paper will introduce a HC search-control method motivated by overcoming the limitations of the prioritized ER.

\begin{algorithm}[t]
\setlength{\textfloatsep}{0pt}
	\begin{algorithmic}\caption{HC-Dyna: Generic framework}
		\label{alg_hcdyna}
		\STATE \textbf{Input:}  Hill Climbing (HC) criterion function $h: \States \mapsto \RR$; batch-size $b$; Initialize empty search-control queue $\bsc$; empty ER buffer $\ber$; initialize policy and model $\model$; HC stepsize $\alpha_h$; mini-batch size $b$; environment $\model$; mixing rate $\rho$ decides the proportion of hypothetical experiences in a mini-batch. 
		\FOR {$t = 1, 2, \ldots$}
		\STATE Add $(s_t, a_t, s_{t+1}, r_{t+1})$ to $\ber$	
		\WHILE {within some budget time steps} 
		\STATE // \textcolor{red}{SGLD sampling for states}
		\STATE $s \gets s + \alpha_h \nabla_s h(s) + \text{Gaussian noise}$ // \textcolor{red}{Search-control, see Section~\ref{sec:td-hc} for details about SGLD sampling}
		\STATE Add $s$ into $\bsc$
		\ENDWHILE		
		\STATE // \textcolor{red}{$n$ planning updates/steps}
		\FOR {$n$ times}  
		\STATE $B \gets \emptyset$ // \textcolor{red}{initialize an empty mini-batch $B$}
		\FOR {$b\rho$ times} 
		\STATE Sample $s \sim \bsc$, on-policy action $a$
		\STATE Sample $s', r \sim \model(s, a)$ 
		\STATE Add $(s, a, s', r)$ into $B$
		\ENDFOR
		\STATE Sample $b(1-\rho)$ experiences from $\ber$, add to $B$ 
		\STATE // \textcolor{red}{NOTE: if $\rho=0$, then we only uniformly sample $b$ experiences from $\ber$ and use them as $B$, and the algorithm reduces to ER} 
		\STATE Update policy/value on mixed mini-batch $B$
		\ENDFOR
		\ENDFOR
	\end{algorithmic}
\setlength{\textfloatsep}{0pt}
\end{algorithm}

{
\newcommand{\zerodisplayskips}{%
	\setlength{\abovedisplayskip}{0pt}%
	\setlength{\belowdisplayskip}{0pt}%
	\setlength{\abovedisplayshortskip}{0pt}%
	\setlength{\belowdisplayshortskip}{0pt}}
\appto{\normalsize}{\zerodisplayskips}
\appto{\small}{\zerodisplayskips}
\appto{\footnotesize}{\zerodisplayskips}
\section{A Deeper Look at Error-based Prioritized Sampling}\label{sec:prioritizedsampling}

In this section, we provide theoretical motivation for error-based prioritized sampling by showing its equivalence to optimizing a cubic power objective with uniform sampling in a supervised learning setting. We prove that optimizing the cubic objective provides a faster convergence rate during early learning. Based on the insight, we discuss two limitations of the prioritized ER: 1) outdated priorities and 2) insufficient coverage of the sample space. We then empirically study the limitations. 


\subsection{Theoretical Insight into Error-based Prioritized Sampling}\label{sec:prioritizedsampling-cubic}

In the $l_2$ regression, we minimize the mean squared error $\min_\theta \frac{1}{2n}\sum_{i=1}^{n} (f_\theta(x_i) - y_i)^2$, for training set $\mathcal{T} = \{ (x_i, y_i)\}_{i=1}^n$ and function approximator $f_\theta$, such as a neural network. In error-based prioritized sampling, we define the priority of a sample $(x, y) \in \mathcal{T}$ as $|f_\theta(x) - y|$; the probability of drawing a sample $(x, y) \in \mathcal{T}$ is typically $q(x, y; \theta) \propto |f_\theta(x) - y|$. We employ the following form to compute the probability of a point $(x, y)\in \mathcal{T}$: 
\begin{equation}\label{eq:regression-prioritization}
q(x, y; \theta) \defeq \frac{|f_\theta(x) - y|}{\sum_{i=1}^n |f_\theta(x_i) - y_i|}.
\end{equation}
We can show an equivalence between the gradients of the squared objective with this prioritization and the cubic power objective $\frac{1}{3n}\sum_{i=1}^{n} |f_\theta(x_i) - y_i|^3$ in Theorem~\ref{thm-cubic-square-eq} below. See Appendix~\ref{sec:proof-thm-cubic-square-eq} for the proof.
\begin{thm}\label{thm-cubic-square-eq}
For a constant $c$ determined by $\theta, \trainset$, we have
\begin{align*}
c\E_{(x, y)\sim q(x, y;\theta)}[\nabla_\theta (1/2) (f_{\theta}(x)-y)^2] \\
= \E_{(x, y)\sim uniform(\trainset)}[ \nabla_\theta (1/3) |f_{\theta}(x)-y|^3].
\end{align*}
\end{thm}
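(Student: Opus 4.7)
The plan is to directly expand both expectations into finite sums over the training set $\trainset$ and match them term-by-term, which will simultaneously verify the identity and reveal the constant $c$.

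First I would compute the integrand on the left-hand side. Since $\nabla_\theta \tfrac{1}{2}(f_\theta(x)-y)^2 = (f_\theta(x)-y)\,\nabla_\theta f_\theta(x)$, plugging in the definition of $q(x,y;\theta)$ from Eq.~\eqref{eq:regression-prioritization} gives
\begin{equation*}
\E_{q}\!\left[(f_\theta(x)-y)\nabla_\theta f_\theta(x)\right]
= \frac{1}{Z(\theta)}\sum_{i=1}^{n} |f_\theta(x_i)-y_i|\,(f_\theta(x_i)-y_i)\,\nabla_\theta f_\theta(x_i),
\end{equation*}
where $Z(\theta) \defeq \sum_{j=1}^{n} |f_\theta(x_j)-y_j|$ is the normalizer of the prioritized distribution. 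The key algebraic identity I would then apply is $|a|\cdot a = \sgn(a)\,a^2$, which converts the summand into $\sgn(f_\theta(x_i)-y_i)\,(f_\theta(x_i)-y_i)^2\,\nabla_\theta f_\theta(x_i)$.

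Next I would handle the right-hand side by differentiating the cubic loss. Using $\nabla_\theta |a|^3 = 3|a|^2\,\sgn(a)\,\nabla_\theta a$ (valid for all $a$ with the convention $\sgn(0)=0$, since $|a|^3$ is continuously differentiable there), we obtain
\begin{equation*}
\E_{\text{unif}(\trainset)}\!\left[\nabla_\theta \tfrac{1}{3}|f_\theta(x)-y|^3\right]
= \frac{1}{n}\sum_{i=1}^{n} \sgn(f_\theta(x_i)-y_i)\,(f_\theta(x_i)-y_i)^2\,\nabla_\theta f_\theta(x_i).
\end{equation*}
The two expressions now have identical summands up to a scalar prefactor: the left has $1/Z(\theta)$ and the right has $1/n$. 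Setting $c \defeq Z(\theta)/n = \tfrac{1}{n}\sum_{j=1}^{n}|f_\theta(x_j)-y_j|$ — which is a scalar determined entirely by $\theta$ and $\trainset$ as the statement requires — yields the claimed equality.

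There is no real obstacle here; the only subtlety worth flagging is the non-smoothness of $|a|$ at zero. This is harmless for the cubic because $|a|^3$ is $C^1$ (even $C^2$) with derivative $3a^2\,\sgn(a)$, and for the prioritized squared loss because the non-differentiable factor $|f_\theta(x)-y|$ enters only through the sampling weight, not through the gradient being taken. I would state this convention explicitly so the identity holds pointwise for every $\theta$, including configurations where some residuals vanish.
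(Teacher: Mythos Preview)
Your proposal is correct and follows essentially the same approach as the paper: expand both expectations as finite sums over $\trainset$, differentiate, and match the scalar prefactors to read off $c = \tfrac{1}{n}\sum_j |f_\theta(x_j)-y_j|$. The only cosmetic difference is that the paper differentiates the cubic via $|a|^3 = (a^2)^{3/2}$ and keeps the factor $\partial (f_\theta(x)-y)^2/\partial\theta$ intact, whereas you expand through $\sgn(a)\,a^2\,\nabla_\theta f_\theta(x)$; both routes yield the same summand, and your explicit remark on the $C^1$ regularity of $|a|^3$ at $a=0$ is a welcome clarification the paper omits.
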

We empirically verify this equivalence in the Appendix~\ref{sec:additional-experiments}. This simple theorem provides an intuitive reason for why prioritized sampling can help improve sample efficiency: the gradient direction of the cubic function is sharper than that of the square function when the error is relatively large (Figure~\ref{fig:cubic-square}). We refer readers to the work by~\citet{fujimoto2020equivalence} regarding more discussions about the equivalence between prioritized sampling and of uniform sampling. Theorem~\ref{thm-hitting-time} below further proves that optimizing the cubic power objective by gradient descent has faster convergence rate than the squared objective, and this provides a solid motivation for using error-based prioritized sampling. See Appendix~\ref{sec:proof-thm-hitting-time} for a detailed version of the theorem below, and its proof and empirical simulations. 

\begin{thm}[Fast early learning, concise version]\label{thm-hitting-time}
	Let $n$ be a positive integer (i.e., the number of training samples). Let $x_t, \tilde{x}_t \in \RR^n$ be the target estimates of all samples at time $t, t\ge0$, and $x(i) (i\in [n], [n]\defeq\{1,2,...,n\})$ be the $i$th element in the vector. We define the objectives: 
	\begin{align*}
	\ell_2(x, y) &\defeq \frac{1}{2} \sum_{ i = 1}^{n}{ \left( x(i) - y(i) \right)^2 },\\
	\ell_3(x, y) &\defeq \frac{1}{3} \sum_{ i = 1}^{n}{ \left| x(i) - y(i) \right|^3 }.
    \end{align*}
    Let $\{ x_t \}_{ t \ge 0}$ and $\{ \tilde{x}_t \}_{ t \ge 0}$ be generated by using $\ell_2, \ell_3$ objectives respectively. Then define the total absolute prediction errors respectively: 
	\begin{align*}
	\delta_t \defeq \sum_{ i = 1}^{n}{ \delta_t(i) } &= \sum_{ i = 1}^{n}{ \left| x_t(i) - y(i) \right| }, \\
	\tilde{\delta}_t \defeq \sum_{ i = 1}^{n}{ \tilde{\delta}_t(i) } &= \sum_{ i = 1}^{n}{ \left| \tilde{x}_t(i) - y(i) \right| }, 
	 \end{align*}
	where $y(i) \in \mathbb{R}$ is the training target for the $i$th training sample. That is, $\forall i\in [n]$,
	\begin{align*}
	\frac{d x_t(i)}{d t} = - \eta \cdot \frac{d \ell_2(x_t, y) }{ d x_t(i) }, \quad
	\frac{d \tilde{x}_t(i)}{d t} = - \eta^\prime  \cdot \frac{d \ell_3(\tilde{x}_t, y) }{ d \tilde{x}_t(i) }.
    \end{align*}
    Given any $0 < \epsilon \le \delta_0 = \sum_{i=1}^{n}{ \delta_0(i) }$, define the following hitting time, 
    \[
    t_\epsilon \defeq \min_{t }\{ t \ge 0 : \delta_t \le \epsilon \}, \quad
    \tilde{t}_\epsilon \defeq \min_{t}\{ t \ge 0 : \tilde{\delta}_t \le \epsilon \}.
    \]
	Assume the same initialization $x_0 = \tilde{x}_0$. \textbf{We have the following conclusion}. \\
	If there exists $\delta_0 \in \mathbb{R}$ and $0 < \epsilon \le \delta_0$ such that
	\begin{align}\label{h0-ineq}
	\frac{1}{n} \cdot \sum_{i=1}^{n}{ \frac{1}{ \delta_0(i) } } \le \frac{ \eta}{ \eta^\prime } \cdot \frac{ \log{( \delta_0 / \epsilon )} }{ \frac{ \delta_0 }{ \epsilon } - 1 }, 
	\end{align}
	then we have $t_\epsilon \ge \tilde{t}_\epsilon$, which means gradient descent using the cubic loss function will achieve the total absolute error threshold $\epsilon$ faster than using the squared objective function.
\end{thm}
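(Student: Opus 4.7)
The plan is to solve both ODEs coordinatewise in closed form, read off $t_\epsilon$ exactly from the squared-loss decay, upper bound $\tilde t_\epsilon$ from the cubic dynamics by collapsing $\sum_i \tilde\delta_t(i)^2$ via Cauchy--Schwarz to a scalar comparison ODE, and then check that condition~\eqref{h0-ineq} on the initial errors is exactly what forces this upper bound below $t_\epsilon$. Since both $\delta_t$ and $\tilde\delta_t$ are monotone decreasing, the inequality $\tilde t_\epsilon \le t_\epsilon$ is equivalent to $\tilde\delta_{t_\epsilon} \le \epsilon$, so comparing the two flows at the common time $t_\epsilon$ is the natural target.

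For the squared objective, $d\ell_2/dx(i) = x(i) - y(i)$, so the dynamics decouple across coordinates into $d\delta_t(i)/dt = -\eta\,\delta_t(i)$, giving $\delta_t(i) = \delta_0(i) e^{-\eta t}$ and hence $t_\epsilon = \eta^{-1}\log(\delta_0/\epsilon)$ exactly. For the cubic objective, $d\ell_3/d\tilde x(i) = (\tilde x(i) - y(i))|\tilde x(i) - y(i)|$, so each coordinate satisfies $d\tilde\delta_t(i)/dt = -\eta'\,\tilde\delta_t(i)^2$, which integrates to $\tilde\delta_t(i) = 1/(1/\delta_0(i) + \eta' t)$; notably, $1/\tilde\delta_t(i)$ grows linearly in $t$ at the common rate $\eta'$ regardless of $i$.

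To upper bound $\tilde t_\epsilon$, I would sum the coordinate ODEs to get $d\tilde\delta_t/dt = -\eta'\sum_i \tilde\delta_t(i)^2$ and apply the power-mean inequality $\sum_i a_i^2 \ge (\sum_i a_i)^2/n$ to reduce to the scalar comparison ODE $d\tilde\delta_t/dt \le -(\eta'/n)\tilde\delta_t^2$. Separation of variables on the comparison ODE yields the majorant $\tilde\delta_t \le \delta_0/(1 + \eta' t\,\delta_0/n)$, and solving for the time the majorant crosses $\epsilon$ gives $\tilde t_\epsilon \le n(\delta_0/\epsilon - 1)/(\eta'\,\delta_0)$. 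Requiring this bound to be at most $t_\epsilon = \eta^{-1}\log(\delta_0/\epsilon)$ rearranges to an inequality of the structural form $n/\delta_0 \le (\eta'/\eta)\log(\delta_0/\epsilon)/(\delta_0/\epsilon - 1)$, which has the same shape as the right-hand side of~\eqref{h0-ineq}; the AM--HM bound $\frac{1}{n}\sum_i 1/\delta_0(i) \ge n/\delta_0$ is the natural bridge between the aggregated quantity produced by Cauchy--Schwarz and the coordinate-averaged reciprocal appearing in the hypothesis.

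The hardest part, I expect, is that the power-mean step is tight only when the $\delta_0(i)$ are all equal, so it replaces the average reciprocal $\frac{1}{n}\sum_i 1/\delta_0(i)$ by the coarser $n/\delta_0$ and thereby weakens the regime of applicability relative to~\eqref{h0-ineq}. To match the hypothesis in its stated form, I would complement the collapse-and-compare step with a coordinatewise comparison at time $t_\epsilon$: plugging $t_\epsilon$ into the closed forms, $\tilde\delta_{t_\epsilon}(i) \le \delta_{t_\epsilon}(i) = \delta_0(i)\epsilon/\delta_0$ simplifies to the pointwise constraint $1/\delta_0(i) \le (\eta'/\eta)\log(\delta_0/\epsilon)/(\delta_0/\epsilon - 1)$, whose coordinate average is exactly the content of~\eqref{h0-ineq}. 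The delicate remaining step is arguing that this averaged condition (rather than the stronger pointwise version) still suffices to push $\tilde\delta_{t_\epsilon}$ below $\epsilon$, which I would attempt via the concavity of $a \mapsto a/(1 + C a)$ (so the sum $\tilde\delta_t$ is majorized by its value at the mean of the $\delta_0(i)$) or by a direct majorization/Jensen argument on the explicit cubic solution.
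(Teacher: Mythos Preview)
Your Cauchy--Schwarz route already proves the theorem; the ``delicate remaining step'' you worry about is not needed, and your hesitation comes from reading the AM--HM bridge in the wrong direction. The collapse $\sum_i \tilde\delta_t(i)^2\ge \tilde\delta_t^2/n$ gives the majorant $\tilde\delta_t\le \delta_0/(1+\eta' t\,\delta_0/n)$ and hence $\tilde t_\epsilon\le \frac{n}{\eta'\delta_0}\bigl(\delta_0/\epsilon-1\bigr)$; forcing this below $t_\epsilon$ yields $n/\delta_0\le (\eta'/\eta)\,\log(\delta_0/\epsilon)/(\delta_0/\epsilon-1)$. Since $n/\delta_0\le H_0^{-1}=\frac{1}{n}\sum_i 1/\delta_0(i)$ by AM--HM, the stated hypothesis (with the ratio read as $\eta'/\eta$; the paper's own derivation also requires this orientation, so $\eta/\eta'$ in the statement appears to be a typo) \emph{implies} your condition and closes the argument. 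Your alternative plan---coordinatewise comparison at $t_\epsilon$ followed by Jensen on the concave map $a\mapsto a/(1+Ca)$---is redundant: carrying it through reproduces exactly the same $n/\delta_0$ bound, not the sharper $H_0^{-1}$ one you were hoping for.

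This is a genuinely different argument from the paper's. The paper never collapses to a scalar comparison ODE; instead it tracks the average reciprocal $H_t^{-1}:=\frac{1}{n}\sum_i 1/\tilde\delta_t(i)$, notes that under the cubic flow each $1/\tilde\delta_t(i)$ grows linearly at rate $\eta'$ so that $H_t^{-1}=H_0^{-1}+\eta' t$, and computes the time $\hat t=\frac{H_0^{-1}}{\eta'}(\delta_0/\epsilon-1)$ at which this quantity reaches the value $(\delta_0/\epsilon)H_0^{-1}=\frac{1}{n}\sum_i 1/\delta_{t_\epsilon}(i)$ attained by the square flow at $t_\epsilon$. That lands directly on $H_0^{-1}$ and matches the hypothesis on the nose without AM--HM, but it leaves tacit the step that matching the harmonic quantity forces $\tilde\delta_{\hat t}\le\epsilon$; your comparison-ODE argument needs no such step. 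In return, your route actually proves a slightly stronger statement, since the sufficient condition $n/\delta_0\le(\eta'/\eta)R$ is strictly weaker than $H_0^{-1}\le(\eta'/\eta)R$ whenever the initial errors are not all equal.
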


This theorem illustrates that when the total loss of all training examples is greater than some threshold, cubic power learns faster. For example, let the number of samples $n=1000$, and each sample has initial loss $\delta_0(i)=2$. Then $\delta_0=2000$. Setting $\epsilon=570$ (i.e., $\epsilon(i)\approx0.57$) satisfies the inequality~\eqref{h0-ineq}. This implies using the cubic objective is faster when reducing the total loss from $2000$ to $570$. Though it is not our focus here to investigate the practical utility of the high power objectives, we include some empirical results and discuss the practical utilities of such objectives in Appendix~\ref{sec:appendix-highpower}.

Note that, although the original PER raises the importance ratio to a certain power, which is annealing from  $1$ at the beginning to $0$ \citep{schaul2016prioritized}; our theorem still explains the improvement of sample efficiency during the early learning stage. It is because, the power is close to one and hence it is equivalent to using a higher power loss. This point has also been confirmed by a concurrent work \citep[Sec 5.1, Theorem 3]{fujimoto2020equivalence}. 


\subsection{Limitations of the Prioritized ER}

Inspired by the above theorems, we now discuss two drawbacks of prioritized sampling: \textbf{outdated priorities} and \textbf{insufficient sample space coverage}. Then we empirically examine their importance and effects in the next section.

The above two theorems show that the advantage of prioritized sampling comes from the faster convergence rate of cubic power objective during early learning. By Theorem~\ref{thm-cubic-square-eq}, such advantage requires to update the priorities of \emph{all training samples} by using the \emph{updated training parameters} $\theta$ at each time step. In RL, however, at the each time step $t$, the original prioritized ER method only updates the priorities of those experiences from the sampled mini-batch, leaving the priorities of the rest of experiences unchanged~\citep{schaul2016prioritized}. We call this limitation \textbf{outdated priorities}. It is typically infeasible to update the priorities of all experiences at each time step. 

In fact, in RL, ``\emph{all training samples}'' in RL are restricted to those visited experiences in the ER buffer, which may only contain a small subset of the whole state space, making the estimate of the prioritized sampling distribution inaccurate. There can be many reasons for the small coverage: the exploration is difficult, the state space is huge, or the memory resource of the buffer is quite limited, etc. We call this issue \textbf{insufficient sample space coverage}, which is also noted by~\citet{fedus2020er}. 

Note that \emph{insufficient sample space coverage} should not be considered equivalent to off-policy distribution issue. The latter refers to some old experiences in the ER buffer may be unlikely to appear under the current policy~\citep{novati2019er,daochen2019er,sun2020er,oh2021learning}. In contrast, the issue of insufficient sample space coverage can raise naturally. For example, the state space is large and an agent is only able to visit a small subset of the state space during early learning stage. 
We visualize the state space coverage issue on a RL domain in Section~\ref{sec:td-hc}. 

\subsection{Negative Effects of the Limitations}\label{sec:effects-limitations}
In this section, we empirically show that the outdated priorities and insufficient sample space coverage significantly blur the advantage of the prioritized sampling method.

\textbf{Experiment setup}. We conduct experiments on a supervised learning task. We generate a training set $\trainset$ by uniformly sampling $x \in [-2, 2]$ and adding zero-mean Gaussian noise with standard deviation $\sigma=0.5$ to the target $f_{\sin}(x)$ values. Define $f_{\sin}(x) \defeq \sin(8\pi x)$ if $x \in [-2, 0)$ and $f_{\sin}(x) = \sin(\pi x)$ if $x \in [0, 2]$. The testing set contains $1$k samples where the targets are not noise-contaminated. Previous work~\citep{pan2020frequencybased} shows that the high frequency region $[-2, 0]$ usually takes long time to learn. Hence we expect error-based prioritized sampling to make a clear difference in terms of sample efficiency on this dataset.  We use  $32\times 32$ tanh layers neural network for all algorithms. We refer to Appendix~\ref{sec:reproduce-experiments} for missing details and~\ref{sec:additional-experiments} for additional experiments.

\textbf{Naming of algorithms}. \textbf{L2}: the $l_2$ regression with uniformly sampling from $\trainset$. \textbf{Full-PrioritizedL2}: the $l_2$ regression with prioritized sampling according to the distribution defined in~\eqref{eq:regression-prioritization}, the priorities of \emph{all samples} in the training set are updated after each mini-batch update. \textbf{PrioritizedL2}: the only difference with \textbf{Full-PrioritizedL2} is that \emph{only} the priorities of those training examples sampled in the mini-batch are updated at each iteration, the rest of the training samples use the original priorities. This resembles the approach taken by the prioritized ER in RL~\citep{schaul2016prioritized}. 
We show the learning curves in Figure~\ref{fig:sin-square-limitations}. 


\textbf{Outdated priorities.} Figure~\ref{fig:sin-square-limitations} (a) shows that PrioritizedL2 without updating all priorities can be significantly worse than Full-PrioritizedL2. Correspondingly, we further verify this phenomenon on the classical Mountain Car domain~\citep{openaigym}. Figure~\ref{fig:sin-square-limitations}(c) shows the evaluation learning curves of different DQN variants in an RL setting. We use a small $16\times 16$ ReLu NN as the $Q$-function, which should highlight the issue of priority updating: every mini-batch update potentially perturbs the values of many other states. Hence many experiences in the ER buffer have the wrong priorities. Full-PrioritizedER does perform significantly better.


\begin{figure*}[t]
	\vspace{-0.4cm}
	\centering
	\subfigure[$|\trainset|=4000$]{
		\includegraphics[width=\figwidthfourplus]{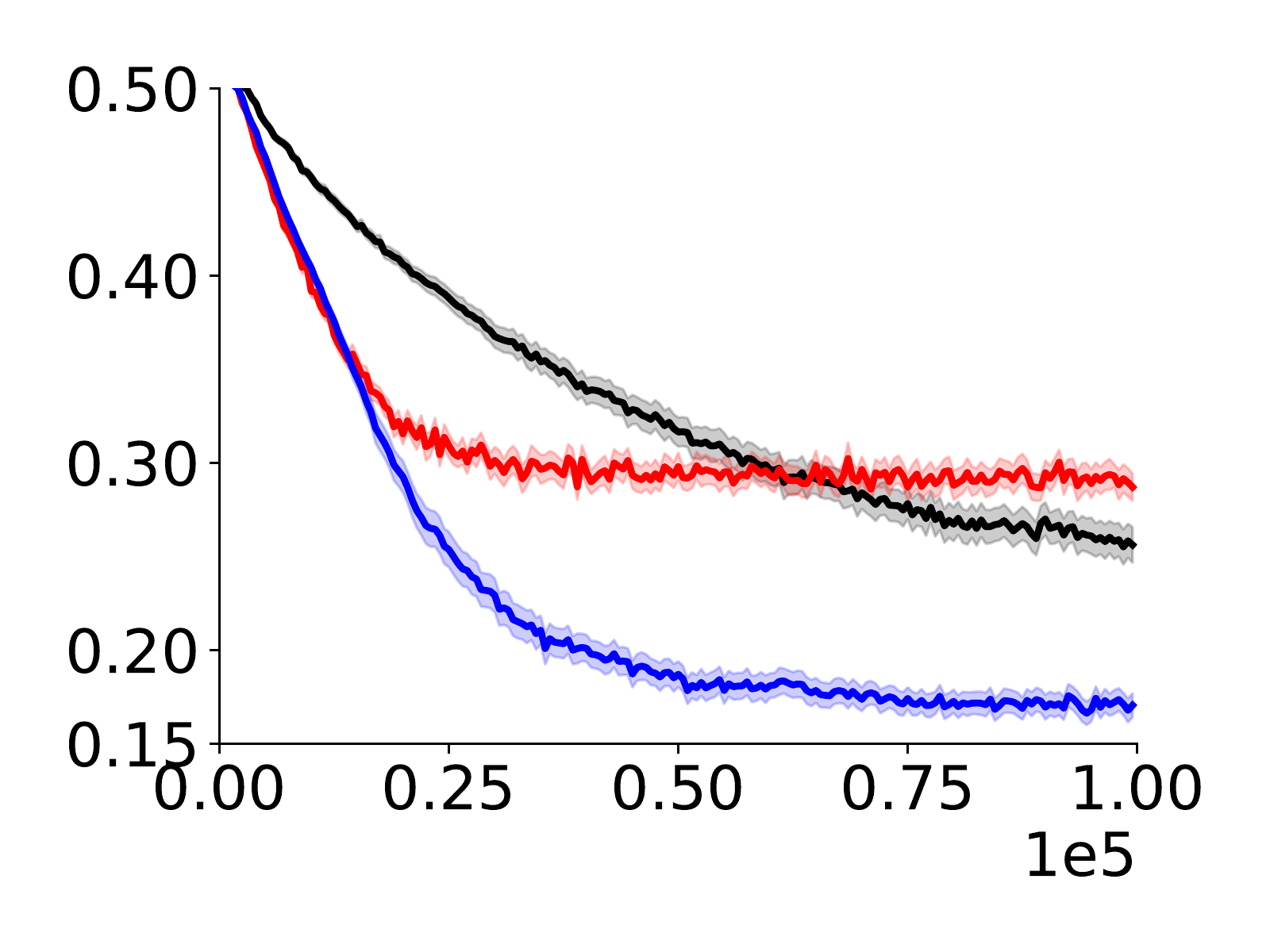}}
	\subfigure[$|\trainset|=400$]{
		\includegraphics[width=\figwidthfourplus]{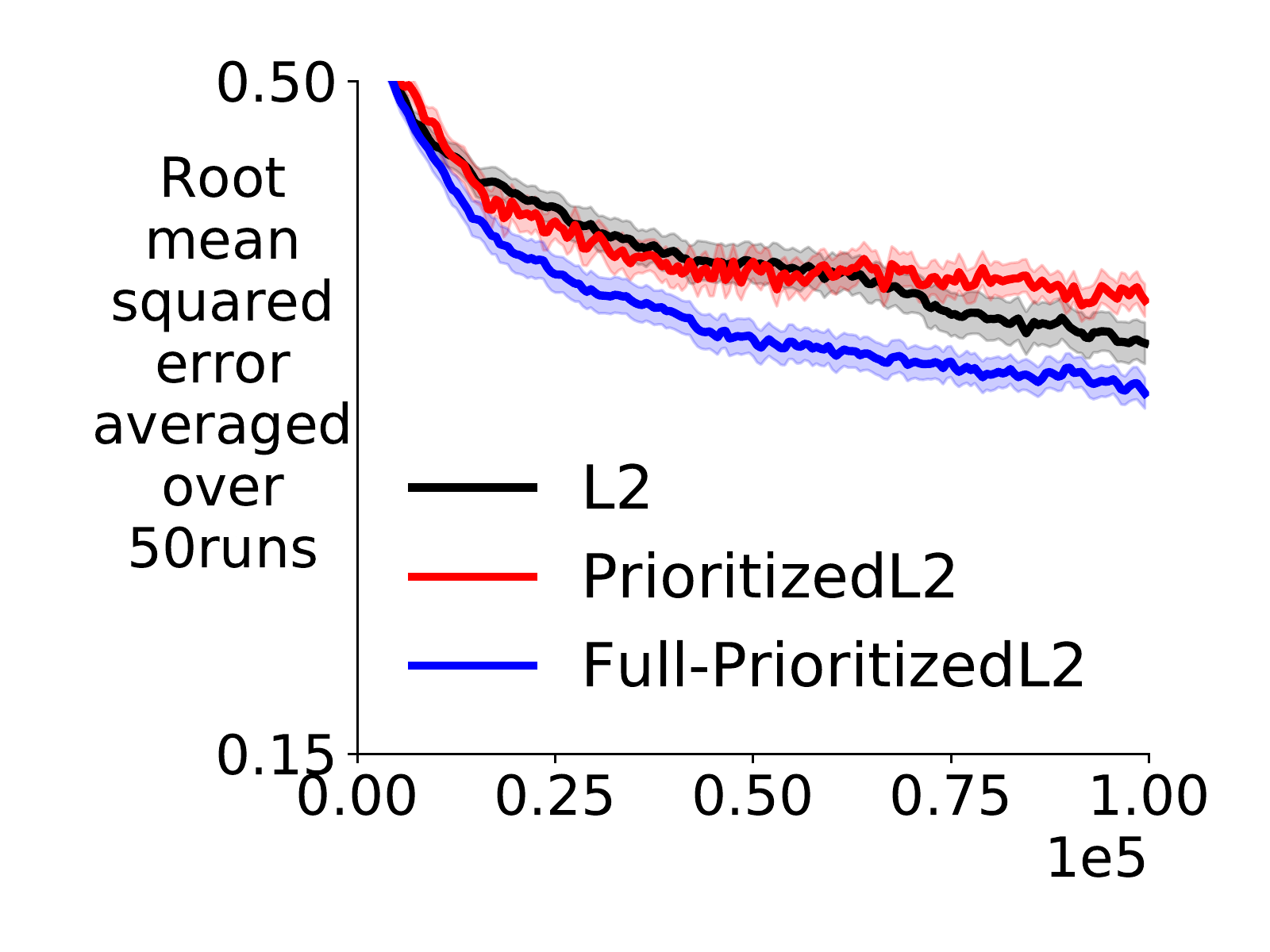}}
	\subfigure[Mountain Car.]{
		\includegraphics[width=\figwidthfourplus]{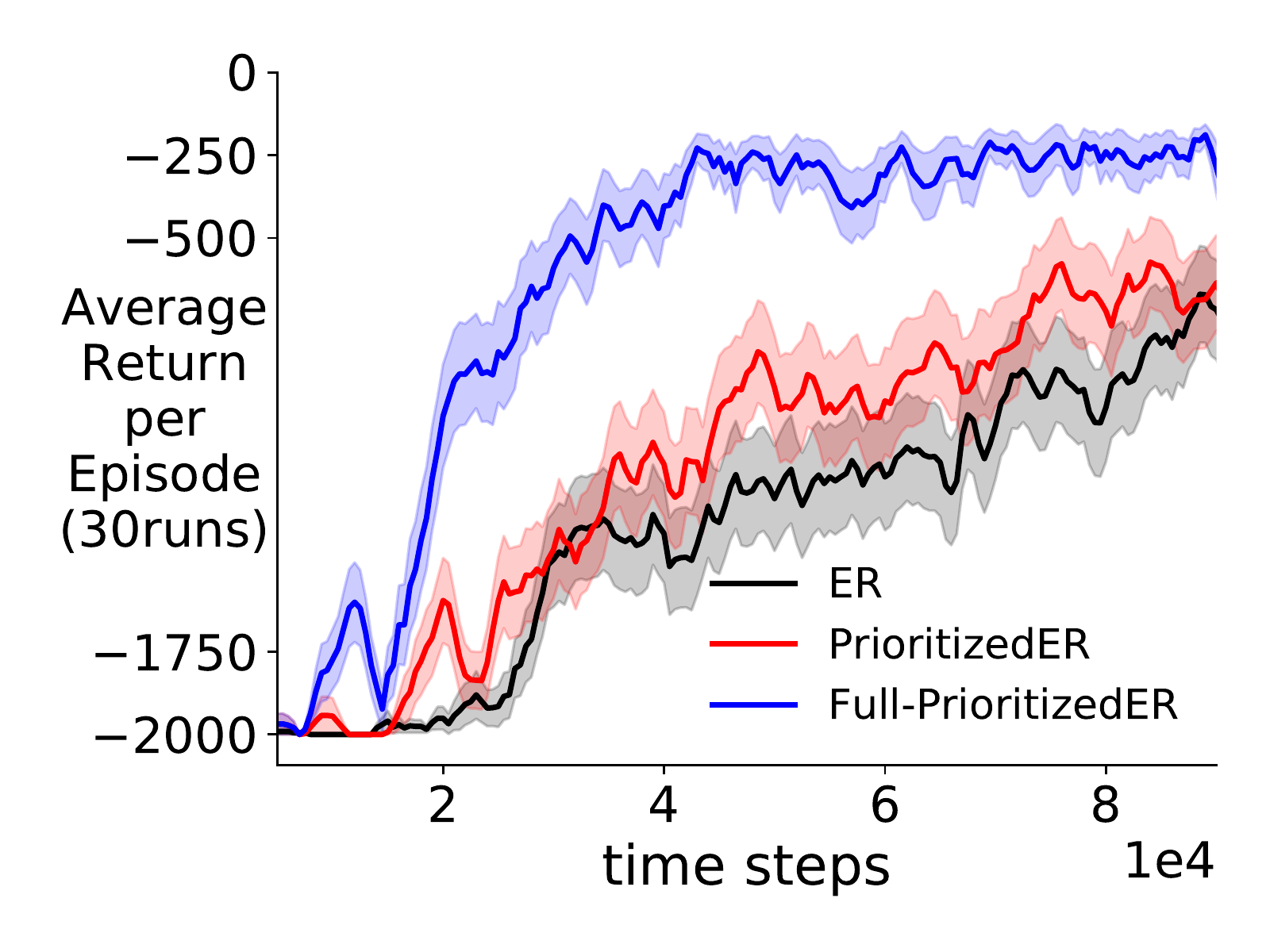}}
		\vspace{-0.1cm}
	\caption{
		Comparing \textcolor{black}{\textbf{L2 (black)}}, \textcolor{red}{\textbf{PrioritizedL2 (red)}}, and \textcolor{blue}{\textbf{Full-PrioritizedL2 (blue)}} in terms of testing RMSE v.s. number of mini-batch updates. (a)(b) show the results trained on a large and small training set, respectively. (c) shows the result of a corresponding RL experiment on mountain car domain. We compare episodic return v.s. environment time steps for \textcolor{black}{\textbf{ER (black)}}, \textcolor{red}{\textbf{PrioritizedER (red)}}, and \textcolor{blue}{\textbf{Full-PrioritizedER (blue)}}. Results are averaged over $50$ random seeds on (a), (b) and $30$ on (c). The shade indicates standard error. 
	}\label{fig:sin-square-limitations}
	\vspace{-0.2cm}
\end{figure*} 

\textbf{Sample space coverage.} To check the effect of insufficient sample space coverage, we examine how the relative performances of L2 and Full-PrioritizedL2 change when we train them on a smaller training dataset with only $400$ examples as shown in Figure~\ref{fig:sin-square-limitations}(b). The small training set has a small coverage of the sample space. Unsurprisingly, using a small training set makes all algorithms perform worse; however, \emph{it significantly narrows the gap between Full-PrioritizedL2 and L2.} 
This indicates that prioritized sampling needs sufficient samples across the sample space to estimate the prioritized sampling distribution reasonably accurate. We further verify the sample space coverage issue in prioritized ER on a RL problem in the next section.

\section{Addressing the Limitations}\label{sec:td-hc}

In this section, we propose a Stochastic Gradient Langevin Dynamics (SGLD) sampling method to mitigate the limitations of the prioritized ER method mentioned in the above section. Then we empirically examine our sampling distribution. We also describe how our sampling method is used for the search-control component in Dyna. 

\subsection{Sampling Method}

\textbf{SGLD sampling method.} Let $v^\pi(\cdot;\theta): \States\mapsto \RR$ be a differentiable value function under policy $\pi$ parameterized by $\theta$. For $s \in \States$, define $y(s)\defeq \EE_{r, s'\sim \model^\pi(s', r|s)}[r + \gamma v^\pi(s';\theta)]$, and denote the TD error as $\delta(s, y; \theta_t) \defeq y(s) - v(s; \theta_t)$. Given some initial state $s_0 \in \States$, let the state sequence $\{s_i\}$ be the one generated by updating rule $s_{i+1} \gets s_i + \alpha_h \nabla_s \log |\delta(s_i, y(s_i); \theta_t)| + X_i$, where $\alpha_h$ is a stepsize and $X_i$ is a Gaussian random variable with some constant variance. \footnote{The stepsize and variance decides the temperature parameter in the Gibbs distribution: $2\alpha_h/\sigma^2$ \citep{zhang2017sgld}. The two parameters are usually treated as hyper-parameters in practice.} Then $\{s_i\}$ converges to the distribution $p(s) \propto |\delta(s, y(s))|$ as $i \rightarrow \infty$. 
The proof is a direct consequence of the convergent behavior of Langevin dynamics stochastic differential equation (SDE) \citep{roberts1996discretelangevin,welling2011sgld,zhang2017sgld}. We include a brief background knowledge in Appendix~\ref{sec:langevin-discussion}.

It should be noted that, this sampling method enables us to acquire states \emph{1) whose absolute TD errors are estimated by using current parameter $\theta_t$ and 2) that are not restricted to those visited ones}. We empirically verify the two points in Section~\ref{sgldsampling-empirical}.

\textbf{Implementation}. In practice, we can compute the state value estimate by $v(s) = \max_{a} Q (s, a; \theta_t)$ as suggested by~\citet{pan2019hcdyna}. In the case that a true environment model is not available, we compute an estimate $\hat{y}(s)$ of $y(s)$ by a learned model. Then at each time step $t$, states approximately following the distribution $p(s) \propto |\delta(s, y(s))|$ can be generated by
\begin{equation}\label{eq:hc-td}
s \gets s + \alpha_h \nabla_s \log |\hat{y}(s) -  \max_{a} Q (s, a; \theta_t)| + X,
\end{equation}
where $X$ is a Gaussian random variable with zero-mean and some small variance. Observing that $\alpha_h$ is small, we consider $\hat{y}(s)$ as a constant given a state $s$ without backpropagating through it. Though this updating rule introduces bias due to the usage of a learned model, fortunately, the difference between the sampling distribution acquired by the true model and the learned model can be upper bounded as we show in Theorem~\ref{thm-error-effect} in Appendix~\ref{sec:error-effect}. 

\textbf{Algorithmic details.} We present our algorithm called \textbf{Dyna-TD} in the Algorithm~\ref{alg_hctddyna} in Appendix~\ref{sec:reproduce-experiments}. Our algorithm follows the general steps in Algorithm~\ref{alg_hcdyna}. Particularly, we choose the function $h(s)\defeq \log |\hat{y}(s) -  \max_{a} Q (s, a; \theta_t)|$ for HC search-control process, i.e., run the updating rule~\ref{eq:hc-td} to generate states. 

\subsection{Empirical Verification of TD Error-based Sampling Method}\label{sgldsampling-empirical}

We visualize the distribution of the sampled states by our method and those from the buffer of the prioritized ER, verifying that our sampled states have an obviously larger coverage of the state space. We then empirically verify that our sampling distribution is closer to a brute-force calculated prioritized sampling distribution---which does not suffer from the two limitations---than the prioritized ER method. Finally, we discuss concerns regarding computational cost. Please see Appendix~\ref{sec:reproduce-experiments} for any missing details. 

\textbf{Large sample space coverage}. During early learning, 
we visualize $2$k states sampled from 1) DQN's buffer trained by prioritized ER and 2) our algorithm Dyna-TD's Search-Control (SC) queue on the continuous state GridWorld (Figure~\ref{fig:gd-check-statedist}(a)). Figure~\ref{fig:gd-check-statedist} (b-c) visualize state distributions with different sampling methods via heatmap. Darker color indicates higher density. (b)(c) show that DQN's ER buffer, no matter with or without prioritized sampling, does not cover well the top-left part and the right half part on the GridWorld. In contrast, Figure~\ref{fig:gd-check-statedist} (d) shows that states from our SC queue are more diversely distributed on the square. These visualizations verify that our sampled states cover better the sample space than the prioritized ER does. 

\begin{figure*}
		\vspace{-0.4cm}
	\centering
	\subfigure[GridWorld]{
		\includegraphics[width=0.26\textwidth]{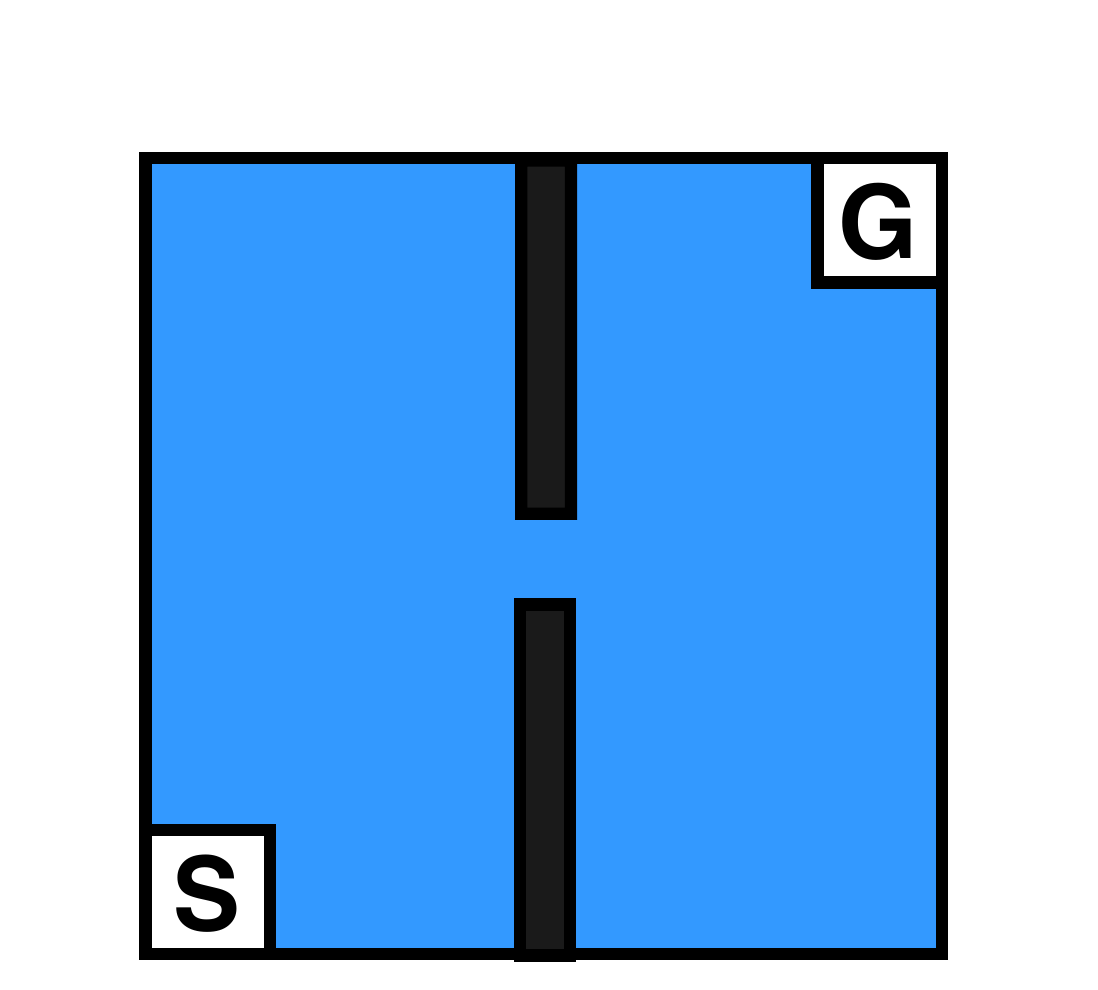}}
	\subfigure[PER (uniform)]{
		\includegraphics[width=\figwidthfour]{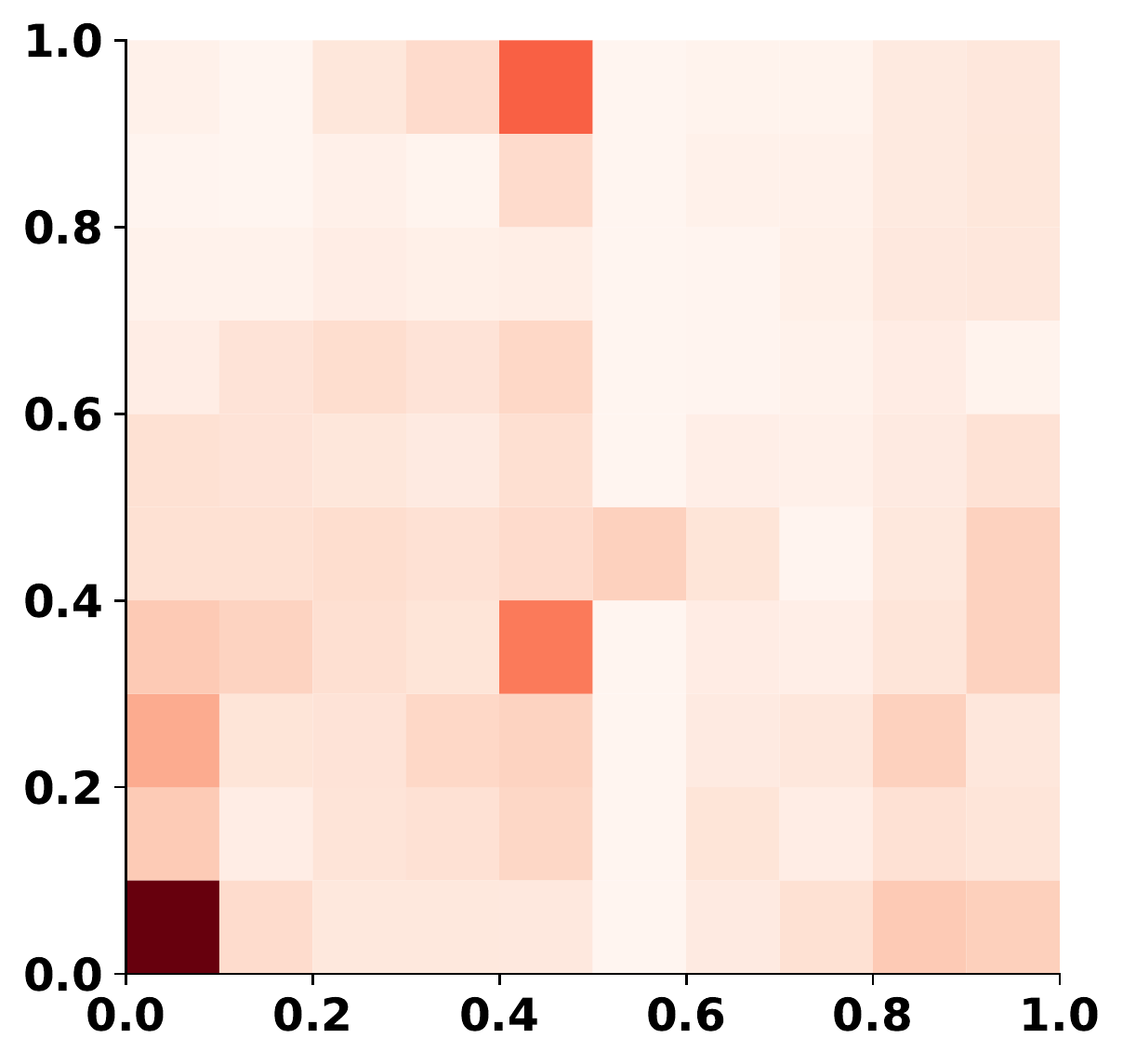}}
	\subfigure[PER (prioritized)]{
		\includegraphics[width=\figwidthfour]{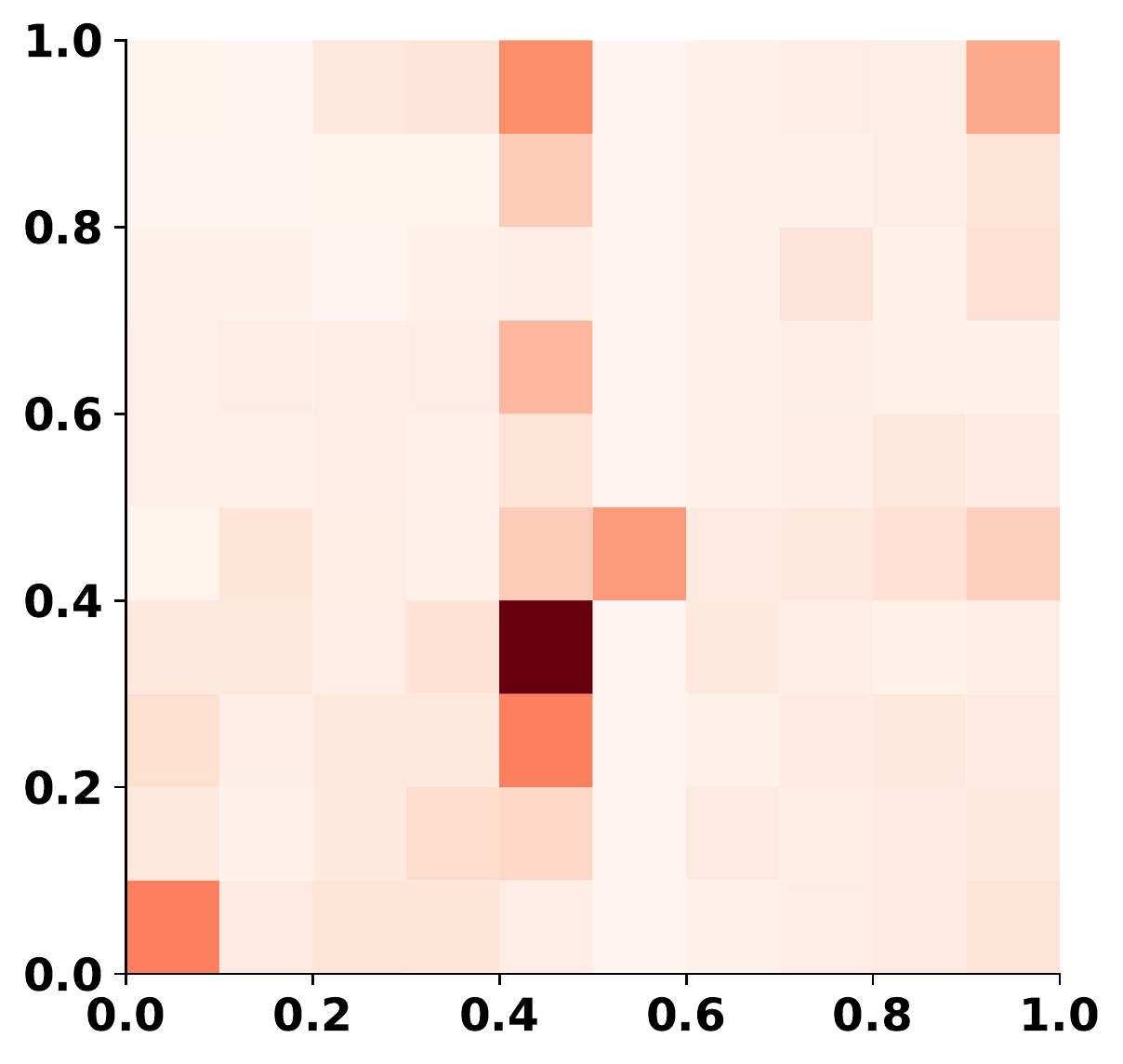}}
	\subfigure[Dyna-TD SC queue]{
		\includegraphics[width=\figwidthfour]{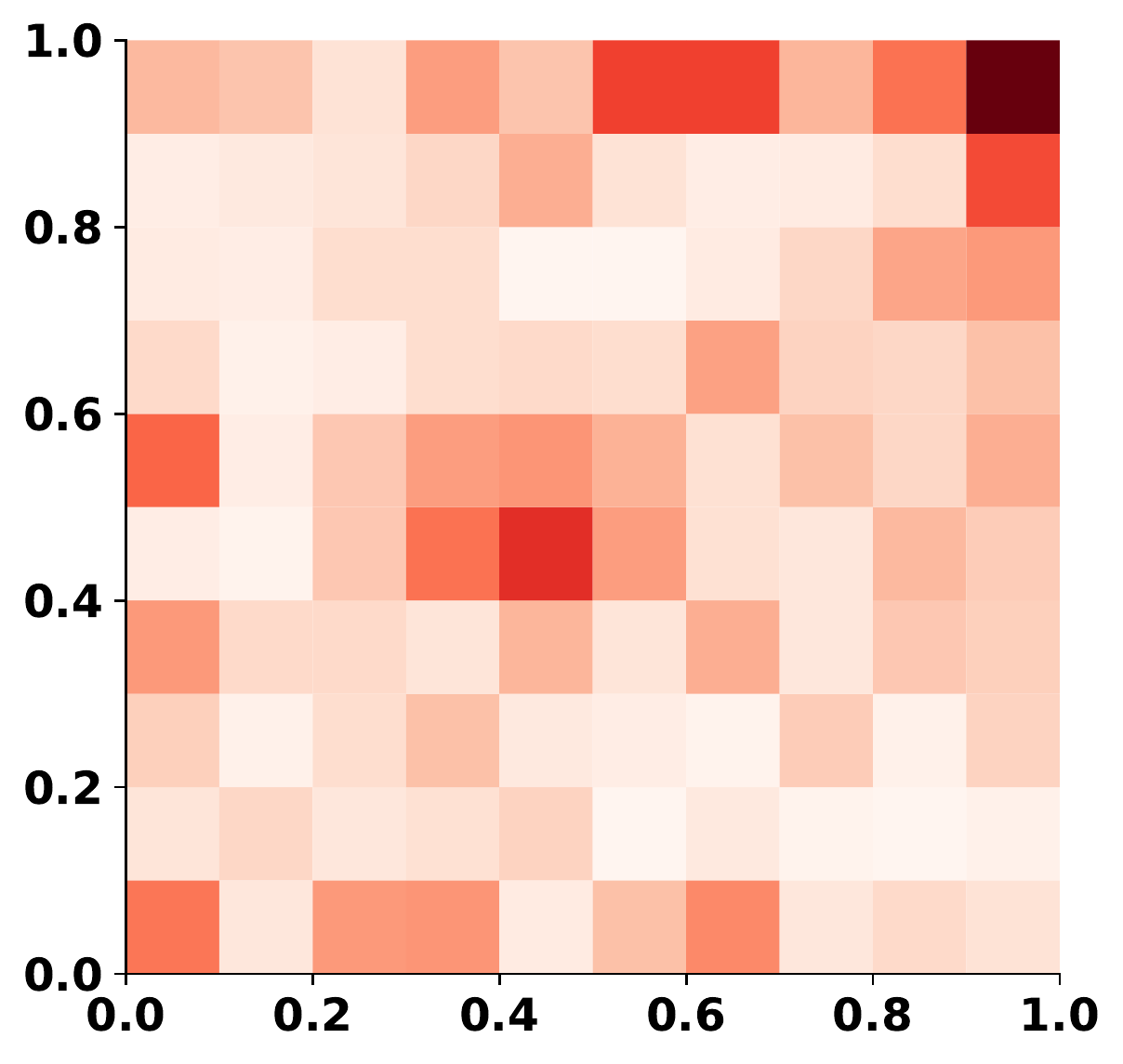}}
	\caption{
		(a) shows the GridWorld~\citep{pan2019hcdyna}. It has $\States = [0, 1]^2, \Actions = \{up, down, right, left\}$. The agent starts from the left bottom and learn to reach the right top within as few steps as possible. 
		(b) and (c) respectively show the state distributions with uniform and prioritized sampling methods from the ER buffer of prioritized ER. 
		(d) shows the SC queue state distribution of our Dyna-TD. 
	}\label{fig:gd-check-statedist}
\end{figure*}

\textbf{Notations and experiment setting}. We denote our sampling distribution as $p_1(\cdot)$, the one acquired by conventional prioritized ER as $p_2(\cdot)$, and the one computed by thorough priority updating of enumerating all states in the state space as $p^\ast(\cdot)$ (this one should be unrealistic in practice and we call it the ideal distribution as it does not suffer from the two limitations we discussed). We visualize how well $p_1(\cdot)$ and $p_2(\cdot)$ can approximate $p^\ast(\cdot)$ on the GridWorld domain, where the state distributions can be conveniently estimated by discretizing the continuous state GridWorld to a $50\times50$ one. We compute the distances of $p_1, p_2$ to $p^\ast$ by two sensible weighting schemes: 1) on-policy weighting: $\sum_{j=1}^{2500} d^\pi(s_j) |p_i(s_j) - p^\ast(s_j)|, i\in\{1, 2\}$, where $d^\pi$ is approximated by uniformly sample $3$k states from a recency buffer; 2) uniform weighting: $\frac{1}{2500} \sum_{j=1}^{2500} |p_i(s_j) - p^\ast(s_j)|, i\in\{1, 2\}$. 

\textbf{Sampling distribution is close to the ideal one}. We plot the distances change when we train our Algorithm~\ref{alg_hctddyna} and the prioritized ER in Figure~\ref{fig:gd-check-dist}(a)(b). They show that the HC procedure in our algorithm Dyna-TD-Long produces a state distribution with a significantly closer distance to the desired sampling distribution $p^\ast$ than PrioritizedER under both weighting schemes. In contrast, the state distribution acquired from PrioritizedER, which suffers from the two limitations, is far away from $p^\ast$. Note that the suffix ``-Long'' of Dyna-TD-Long indicates that we run a large number of SGLD steps (i.e., $1$k) to reach stationary behavior. This is a sanity check but impractical; hence, we test the version with only a few SGLD steps. 

\textbf{Sampling distribution with much fewer SGLD steps}. In practice, we probably only want to run a small number of SGLD steps to save time. As a result, we include a practical version of Dyna-TD, which only runs $30$ SGLD steps, with either a true or learned model. Figure~\ref{fig:gd-check-dist}(a)(b) show that even a few SGLD steps can give better sampling distribution than the conventional PrioritizedER does. 

\textbf{Computational cost.} 
Let the mini-batch size be $b$, and the number of HC steps be $k_{HC}$. If we assume one mini-batch update takes $\mathcal{O}(c)$, then the time cost of our sampling is $\mathcal{O}(c k_{HC}/b)$, which is reasonable. On the GridWorld, Figure~\ref{fig:gd-check-dist}(c) shows that given the same time budget, our algorithm achieves better performance.
This makes the additional time spent on search-control worth it. 

\begin{figure*}[t]
	\centering
	\subfigure[on-policy weighting]{
		\includegraphics[width=\figwidthfourplus]{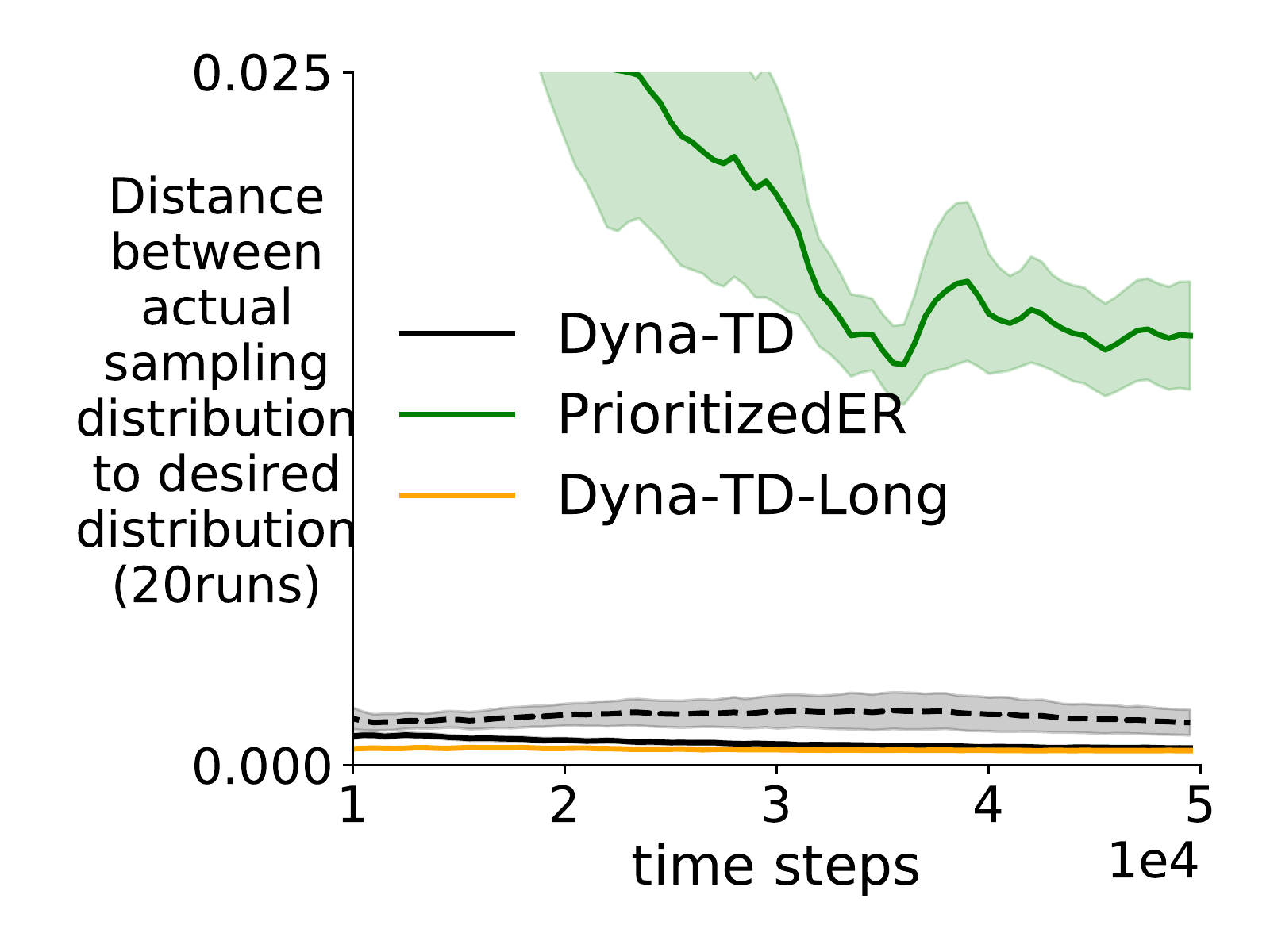}}
	\subfigure[uniform weighting]{
		\includegraphics[width=\figwidthfourplus]{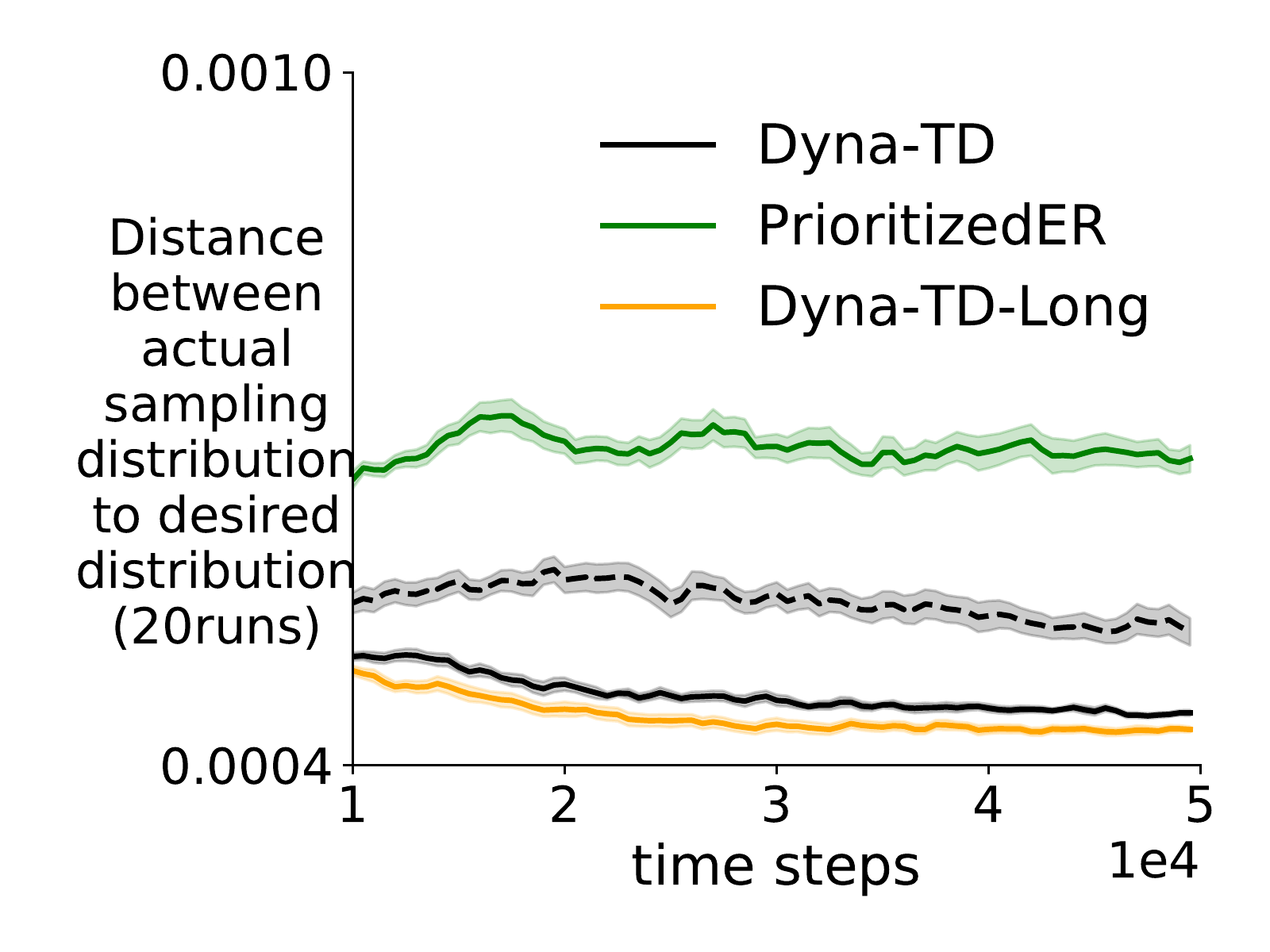}}
	\subfigure[time cost v.s. performance]{
		\includegraphics[width=\figwidthfourplus]{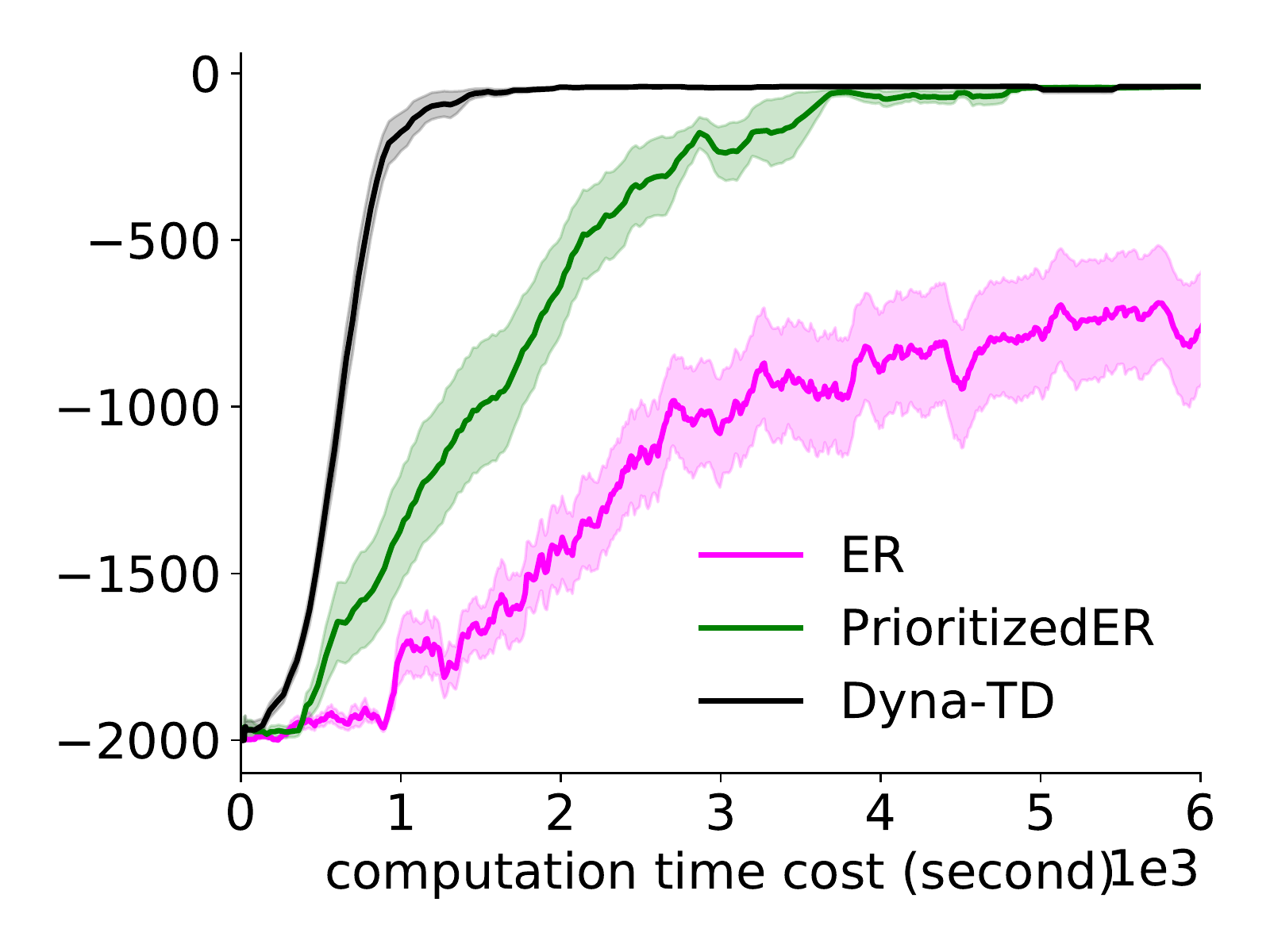}}
	\caption{
		(a)(b) show the distance change as a function of environment time steps for \textcolor{black}{\textbf{Dyna-TD (black)}}, \textcolor{ForestGreen}{\textbf{PrioritizedER (forest green)}}, and \textcolor{orange}{\textbf{Dyna-TD-Long (orange)}}, with different weighting schemes. The \textbf{dashed} line corresponds to our algorithm with an online learned model. The corresponding evaluation learning curve is in the Figure~\ref{fig:overall}(c). (d) shows the policy evaluation performance as a function of running time (in seconds) with \textcolor{magenta}{\textbf{ER(magenta)}}. All results are averaged over $20$ random seeds. The shade indicates standard error. 
	}\label{fig:gd-check-dist}
	 \vspace{-0.15cm}
\end{figure*}
}

\section{Experiments}\label{sec:experiments}

In this section, we firstly introduce baselines and the basic experimental setup. Then we design experiments in the three paragraphs 1) \emph{performances on benchmarks}, 2) \emph{Dyna variants comparison}, and 3) \emph{a demo for continuous control} to answer below three corresponding questions. 
\begin{enumerate}
    \item By mitigating the limitations of the conventional prioritized ER method, can Dyna-TD outperform the prioritized ER under various planning budgets in different environments?
    \item Can Dyna-TD outperform the existing Dyna variants?
    \item How effective is Dyna-TD under an online learned model, particularly for more realistic applications where actions are continuous? 
\end{enumerate}



\textbf{Baselines and basic setup.} \textbf{ER} is DQN with a regular ER buffer without prioritized sampling. \textbf{PrioritizedER} is the one by~\citet{schaul2016prioritized}, which has the drawbacks as discussed in our paper. \textbf{Dyna-Value}~\citep{pan2019hcdyna} is the Dyna variant which performs HC on the learned value function to acquire states to populate the SC queue. \textbf{Dyna-Frequency}~\citep{pan2020frequencybased} is the Dyna variant which performs HC on the norm of the gradient of the value function to acquire states to populate the SC queue. For fair comparison, at each environment time step, we stochastically sample the same number of mini-batches to train those model-free baselines as the number of planning updates in Dyna variants. 
We are able to fix the same HC hyper-parameter setting across all environments. Whenever it involves an online learned model, we use the mean squared error to learn a deterministic model, which we found to be reasonably good on those tested domains in this paper. Please see Appendix~\ref{sec:reproduce-experiments} for experiment details\footnote{The code is released at \url{https://github.com/yannickycpan/reproduceRL.git}.}. We also refer readers to Appendix~\ref{sec:appendix-auto-driving} for experiments on the autonomous driving domain. 

\begin{figure*}[thpb]
		\vspace{-0.5cm}
		\subfigure[MountainCar, $n=10$]{
		\includegraphics[width=\figwidthfour]{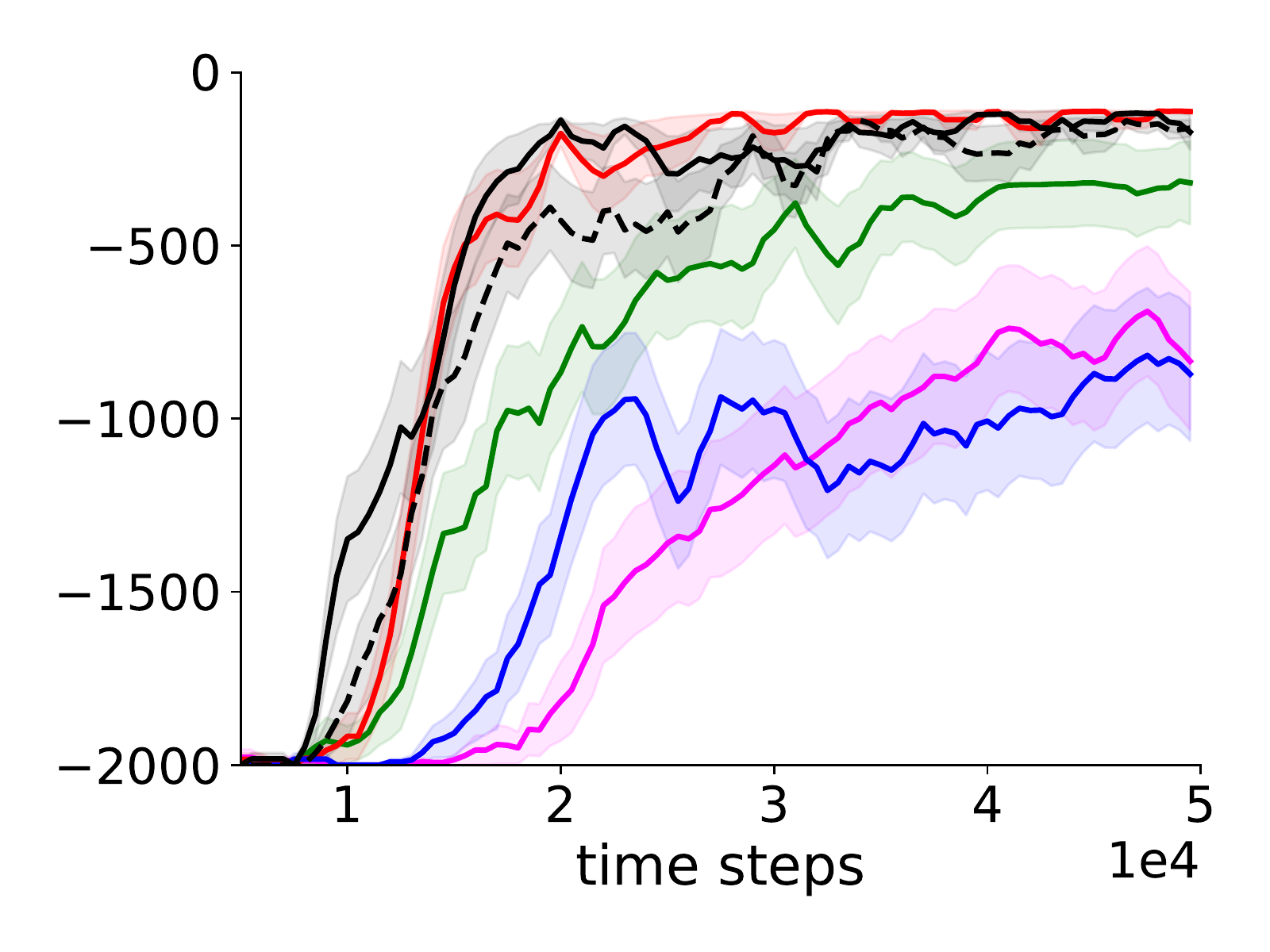}}
	\subfigure[MountainCar, $n=30$]{
		\includegraphics[width=\figwidthfour]{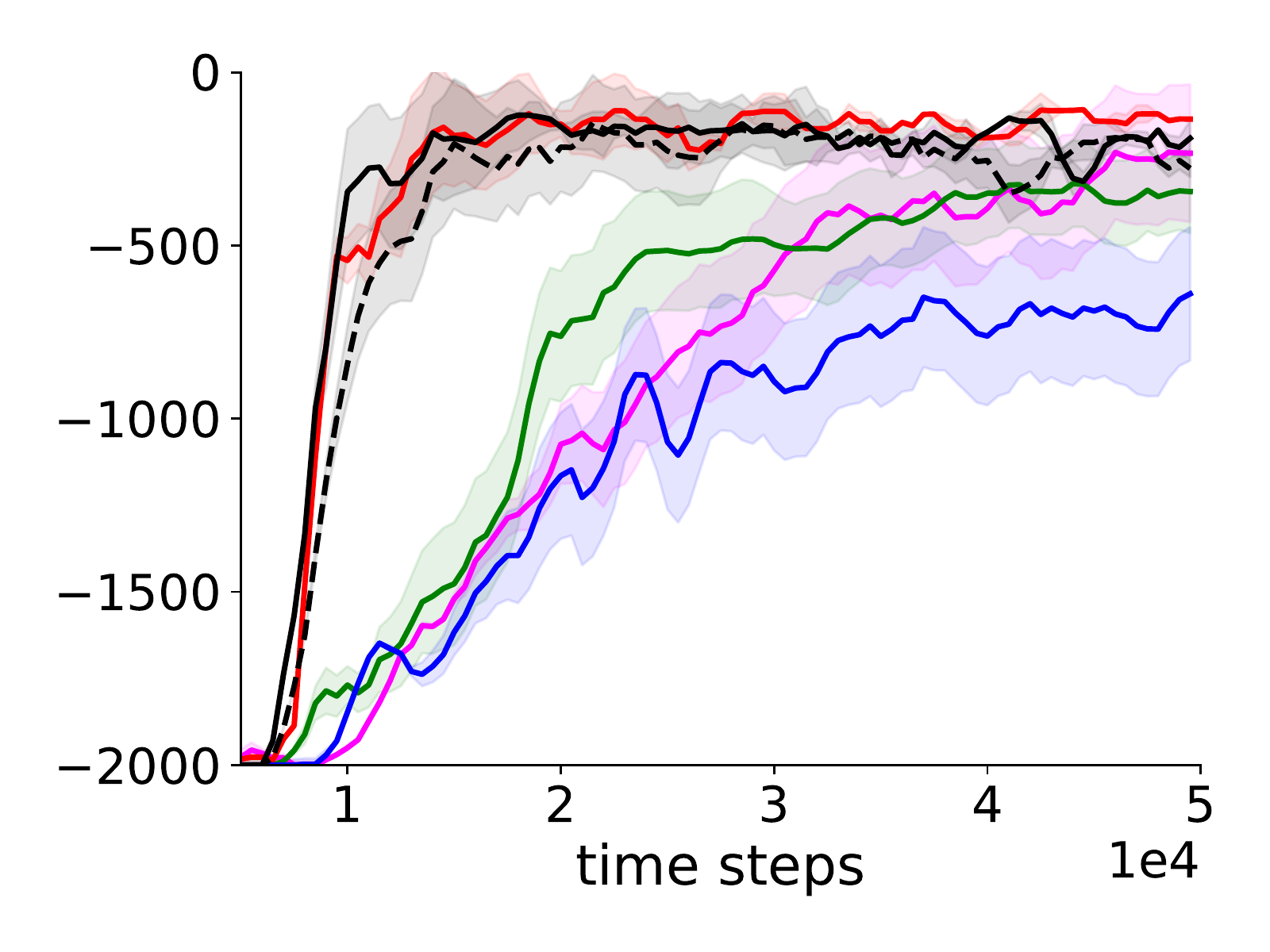} }
	\subfigure[Acrobot, $n=10$]{
		\includegraphics[width=\figwidthfour]{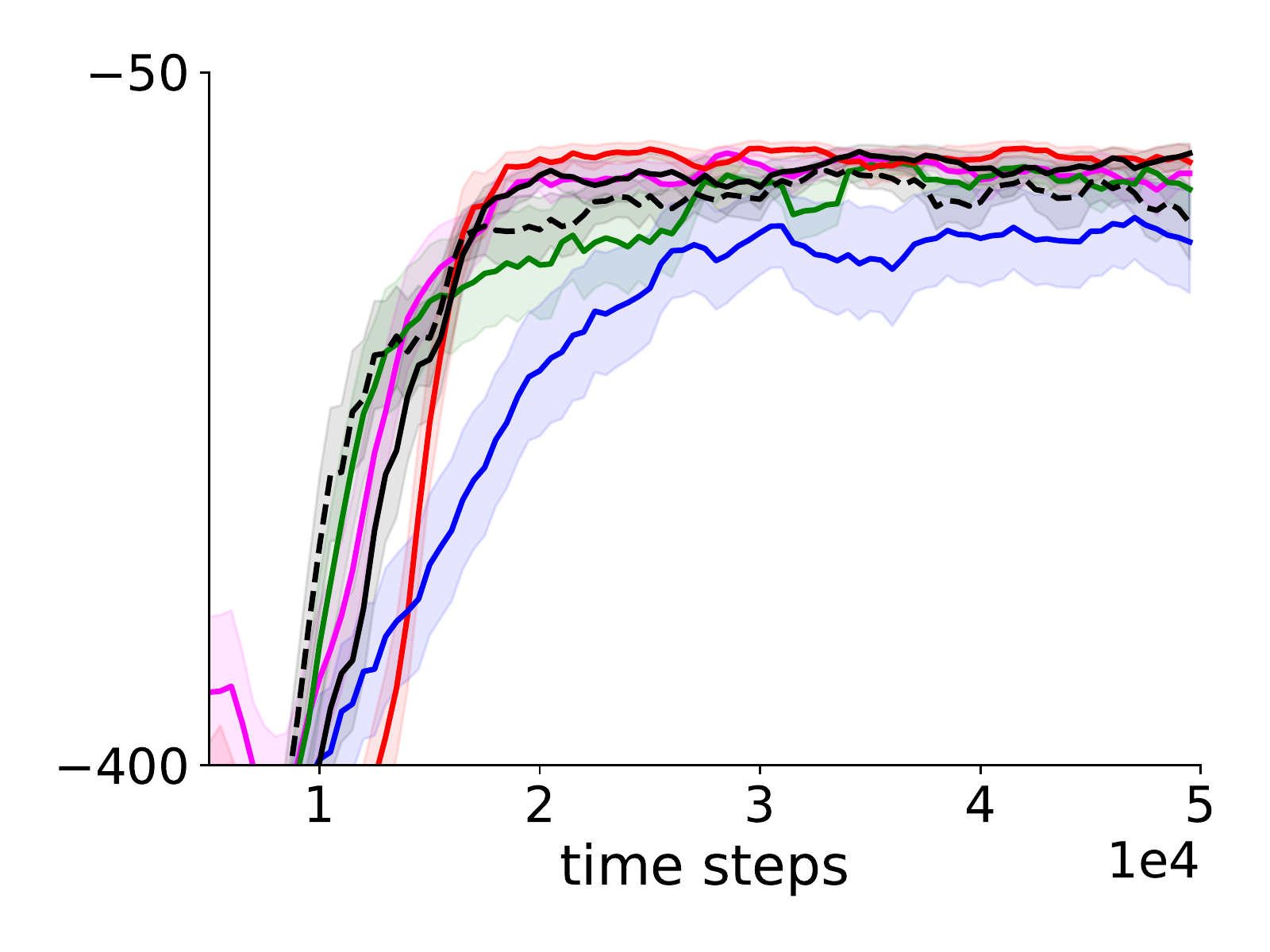}}
	\subfigure[Acrobot, $n=30$]{
		\includegraphics[width=\figwidthfour]{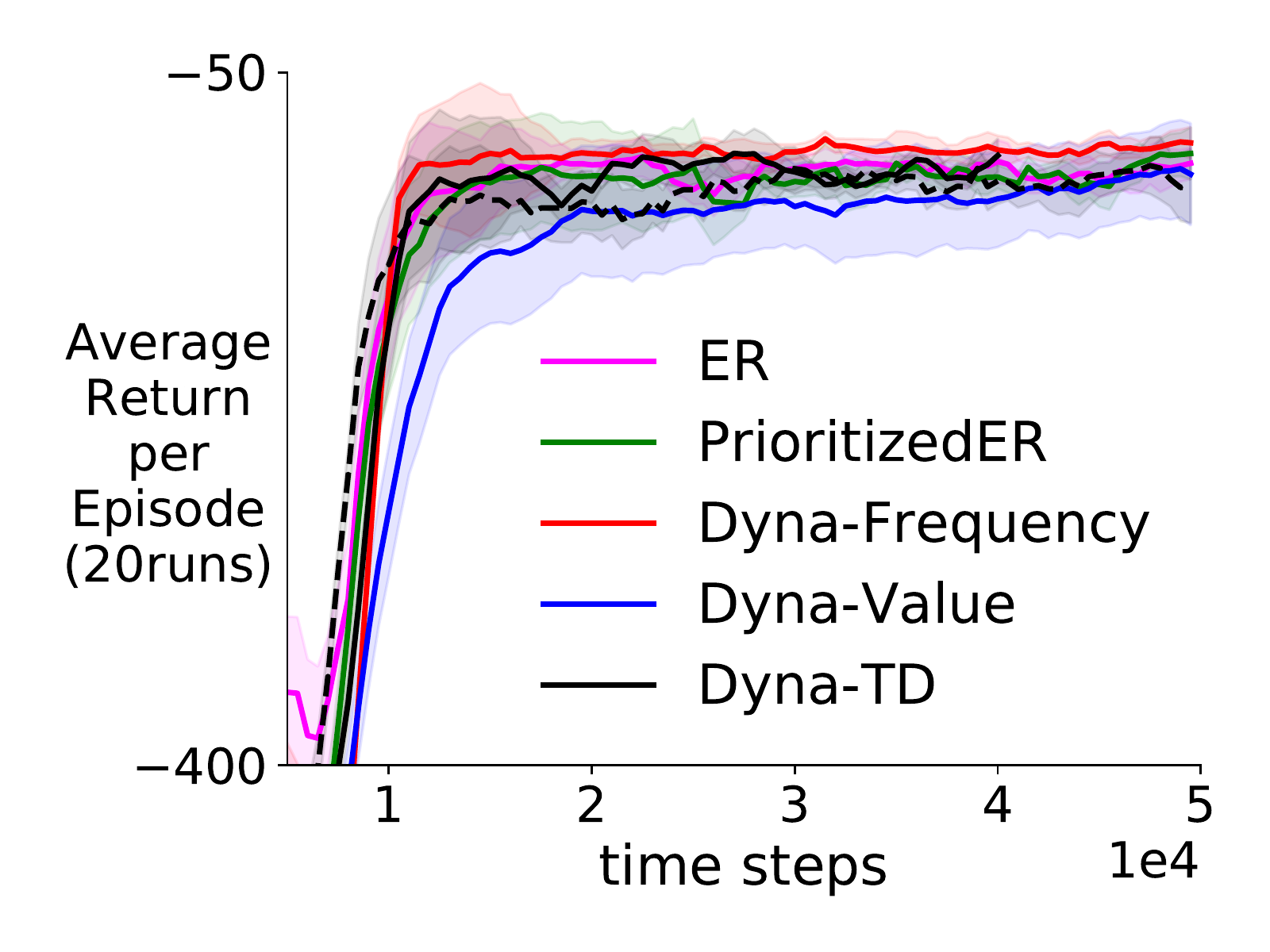}}
	\\
	\subfigure[GridWorld, $n=10$]{
		\includegraphics[width=\figwidthfour]{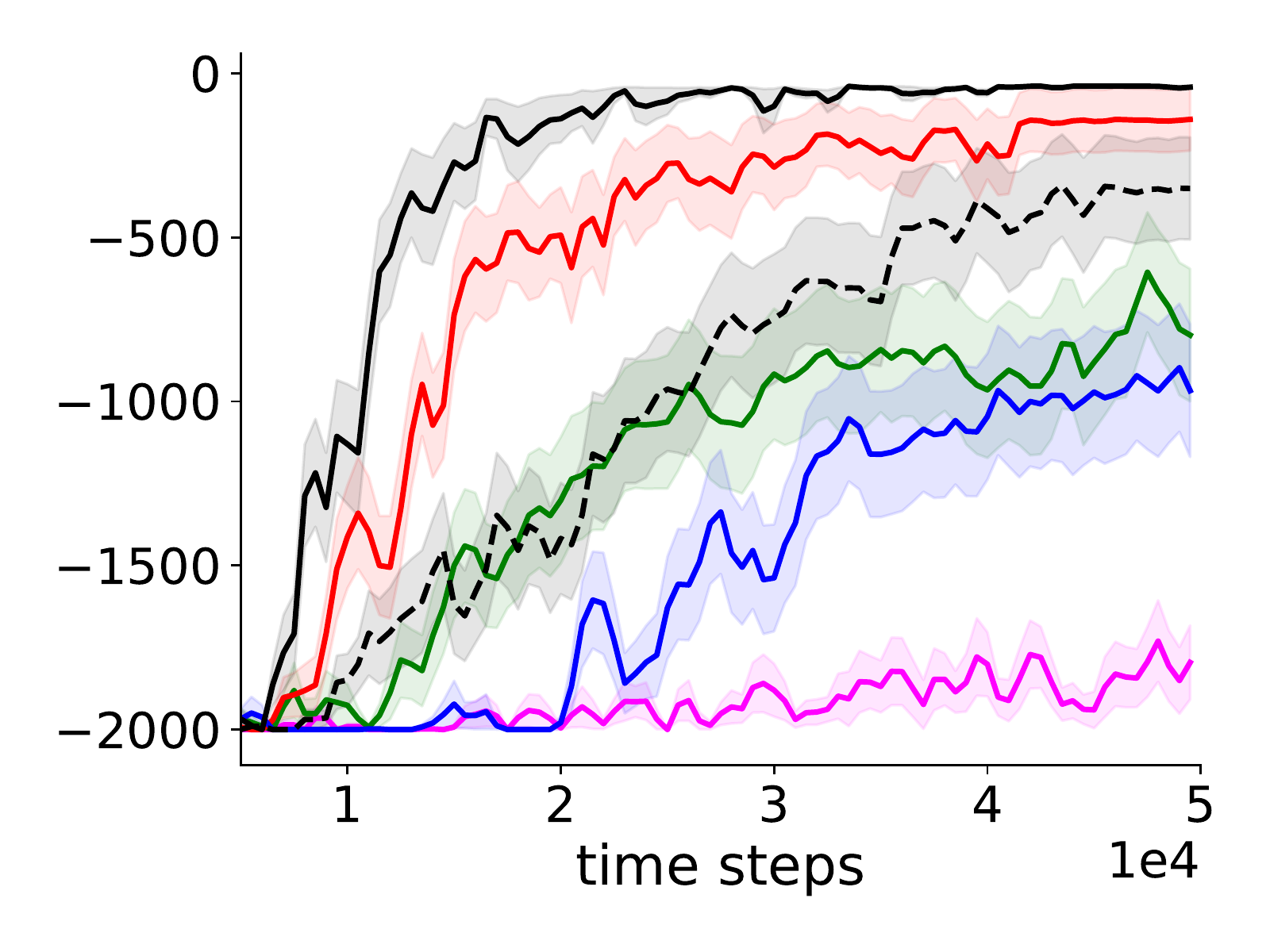} }
	\subfigure[GridWorld, $n=30$]{
		\includegraphics[width=\figwidthfour]{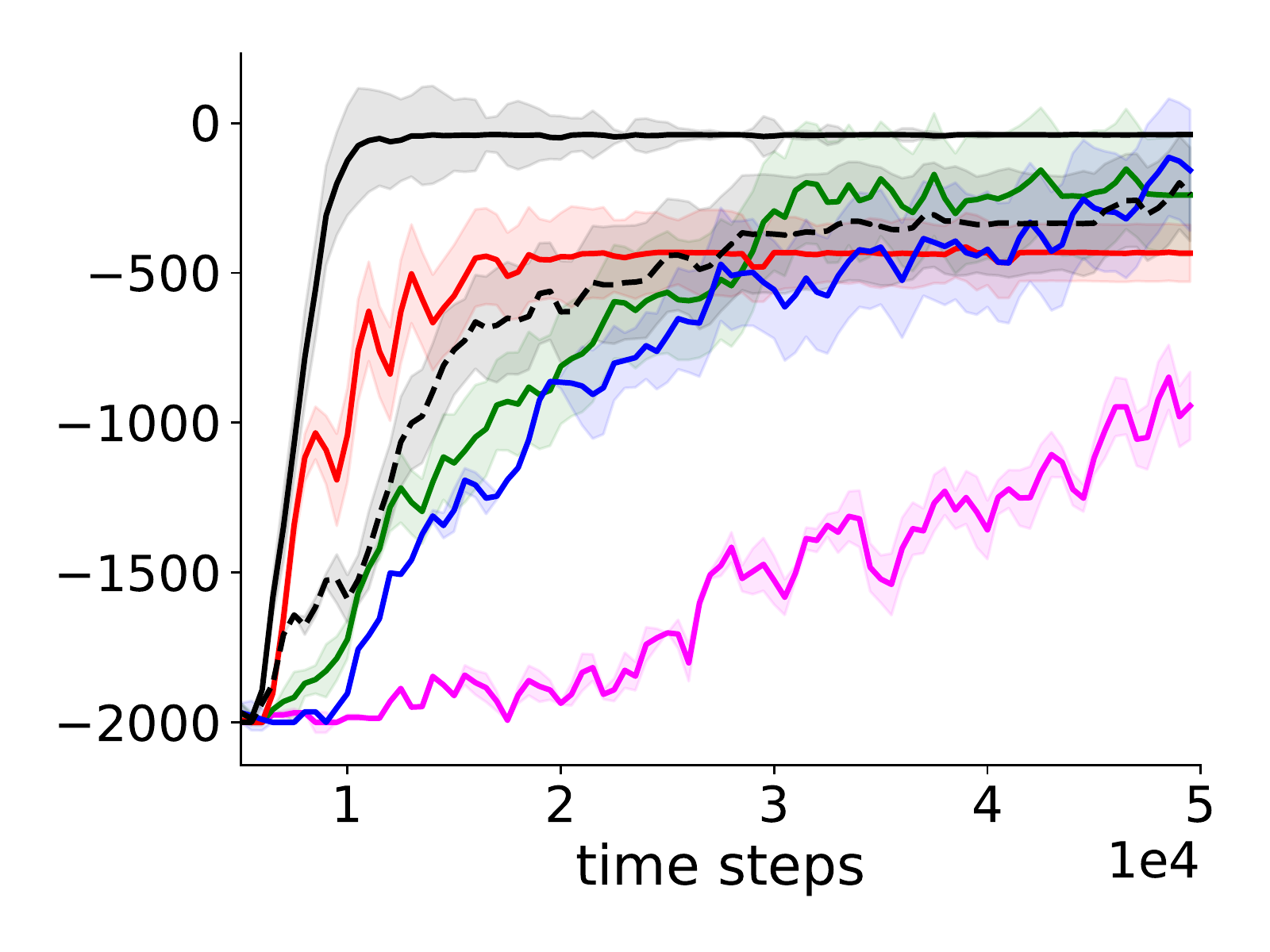} }
	\subfigure[CartPole, $n=10$]{
		\includegraphics[width=\figwidthfour]{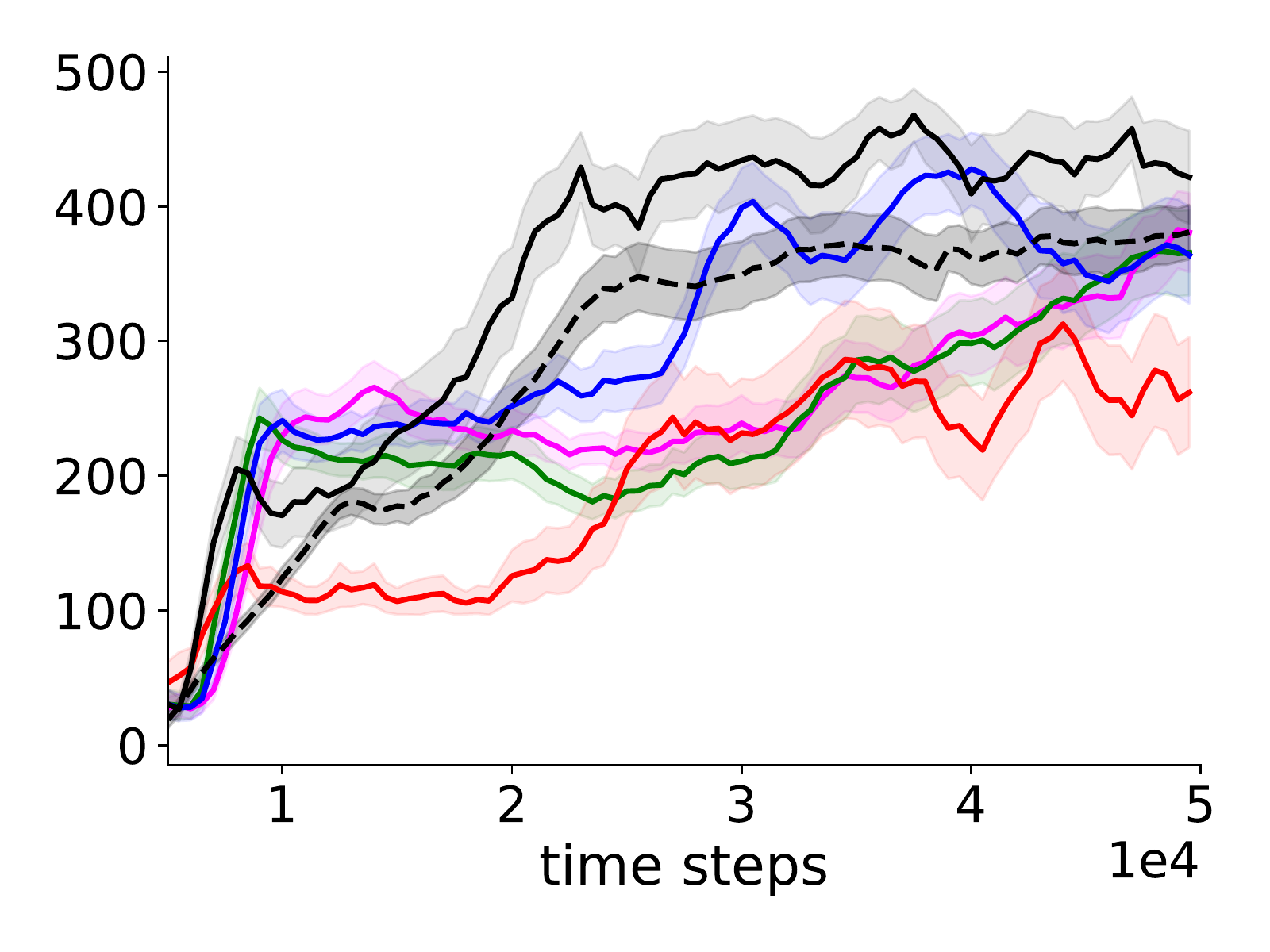} }
	\subfigure[CartPole, $n=30$]{
		\includegraphics[width=\figwidthfour]{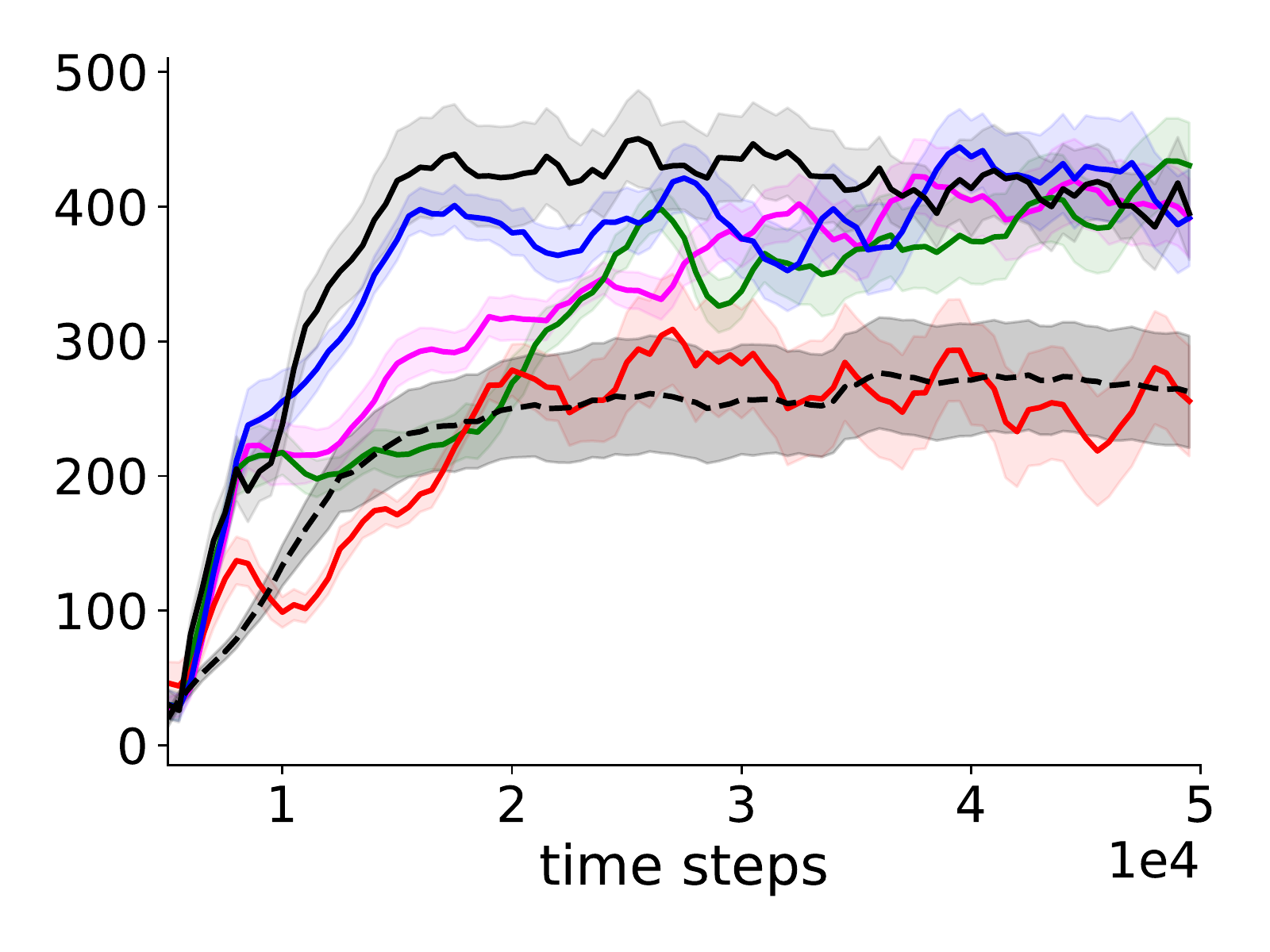} }
	\caption{
		Episodic return v.s. environment time steps: evaluation learning curves of \textcolor{black}{\textbf{Dyna-TD (black)}}, \textcolor{red}{\textbf{Dyna-Frequency (red)}}, \textcolor{blue}{\textbf{Dyna-Value (blue)}}, \textcolor{ForestGreen}{\textbf{PrioritizedER (forest green)}}, and \textcolor{magenta}{\textbf{ER(magenta)}} with planning updates $n=10,30$. The \textbf{dashed} line denotes Dyna-TD with an online learned model. All results are averaged over $20$ random seeds after smoothing over a window of size $30$. The shade indicates standard error. Results with planning updates $n=5$ are in Appendix~\ref{sec:appendix-discrete-plan5}.
	}\label{fig:overall}
 \vspace{-0.2cm}
\end{figure*}

\textbf{Performances on benchmarks.} Figure~\ref{fig:overall} shows the performances of different algorithms on MountainCar, Acrobot, GridWorld~(Figure~\ref{fig:gd-check-statedist}(a)), and CartPole. On these small domains, we focus on studying our sampling distribution and hence we need to isolate the effect of model errors (by using a true environment model), though we include our algorithm Dyna-TD with an online learned model for curiosity. We have the following observations. First, our algorithm Dyna-TD consistently outperforms PrioritizedER across domains and planning updates. In contrast, the PrioritizedER may not even outperform regular ER, as occurred in the previous supervised learning experiment. 

Second, Dyna-TD's performance significantly improves and even outperforms other Dyna variants when increasing the planning budget (i.e., planning updates $n$) from $10$ to $30$. This validates the utility of those additional hypothetical experiences acquired by our sampling method. In contrast, both ER and PrioritizedER show limited gain when increasing the planning budget (i.e., number of mini-batch updates), which implies the limited utility of those visited experiences. 

\textbf{Dyna variants comparison}. Dyna-Value occasionally finds a sub-optimal policy when using a large number of planning updates, while Dyna-TD always finds a better policy. We hypothesize that Dyna-Value results in a heavy sampling distribution bias even during the late learning stage, with density always concentrated around the high-value regions. We verified our hypothesis by checking the entropy of the sampling distribution in the late training stage, as shown in Figure~\ref{fig:gd-check-statedist-late}. A high entropy indicates the sampling distribution is more dispersed than the one with low entropy. We found that the sampling distribution of Dyna-Value has lower entropy than Dyna-TD. 

Dyna-Frequency suffers from explosive or zero gradients. It requires computing third-order differentiation $\nabla_s ||H_v (s)||$ (i.e., taking the gradient of the Hessian). It is hence sensitive to domains and parameter settings such as learning rate choice and activation type. This observation is consistent with the description from~\cite{pan2020frequencybased}. 

\begin{figure}[H]
\vspace{-0.1cm}
	\subfigure[Dyna-TD]{
		\includegraphics[width=\figwidthfour]{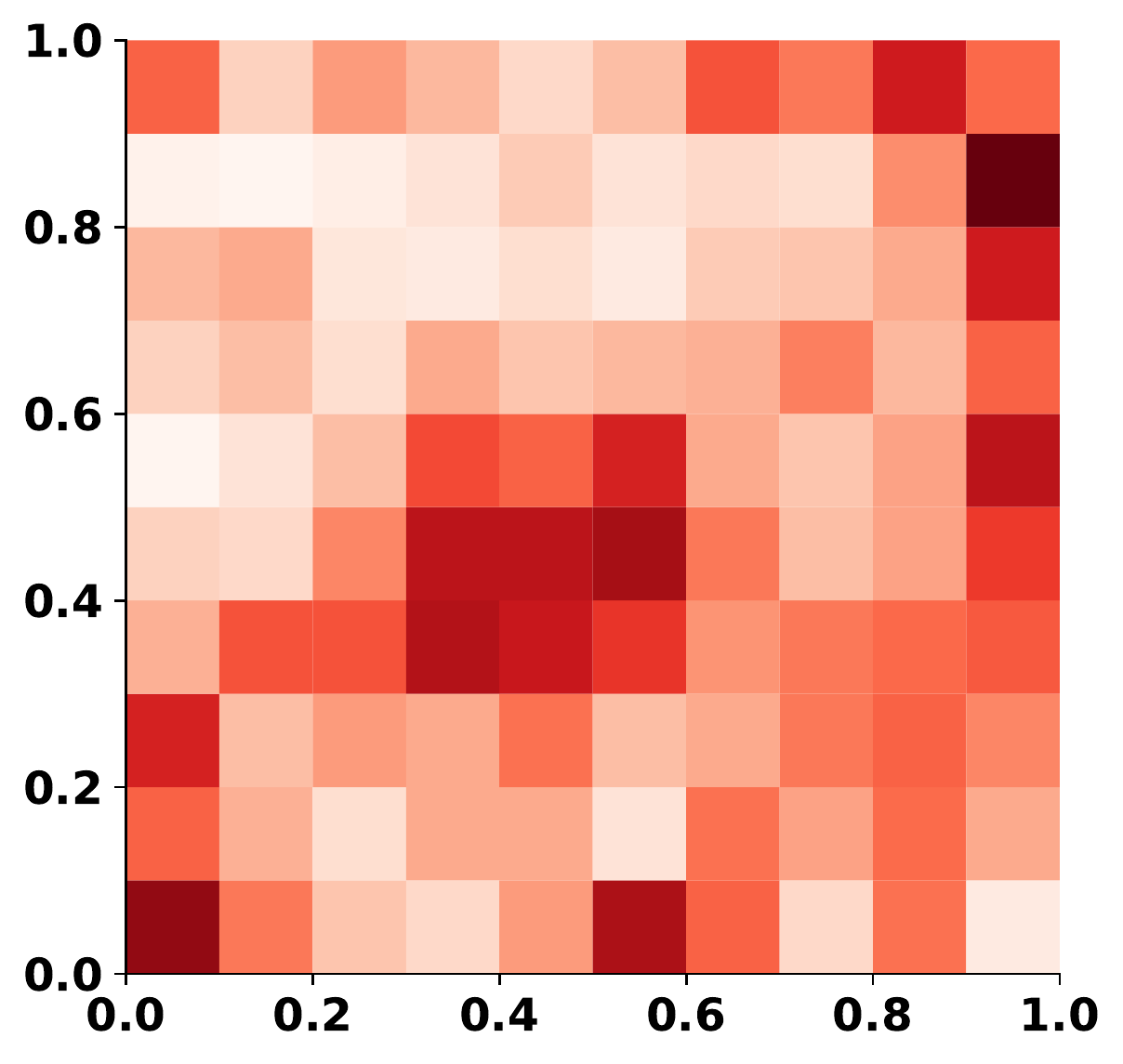}}
	\subfigure[Dyna-Value]{
		\includegraphics[width=\figwidthfour]{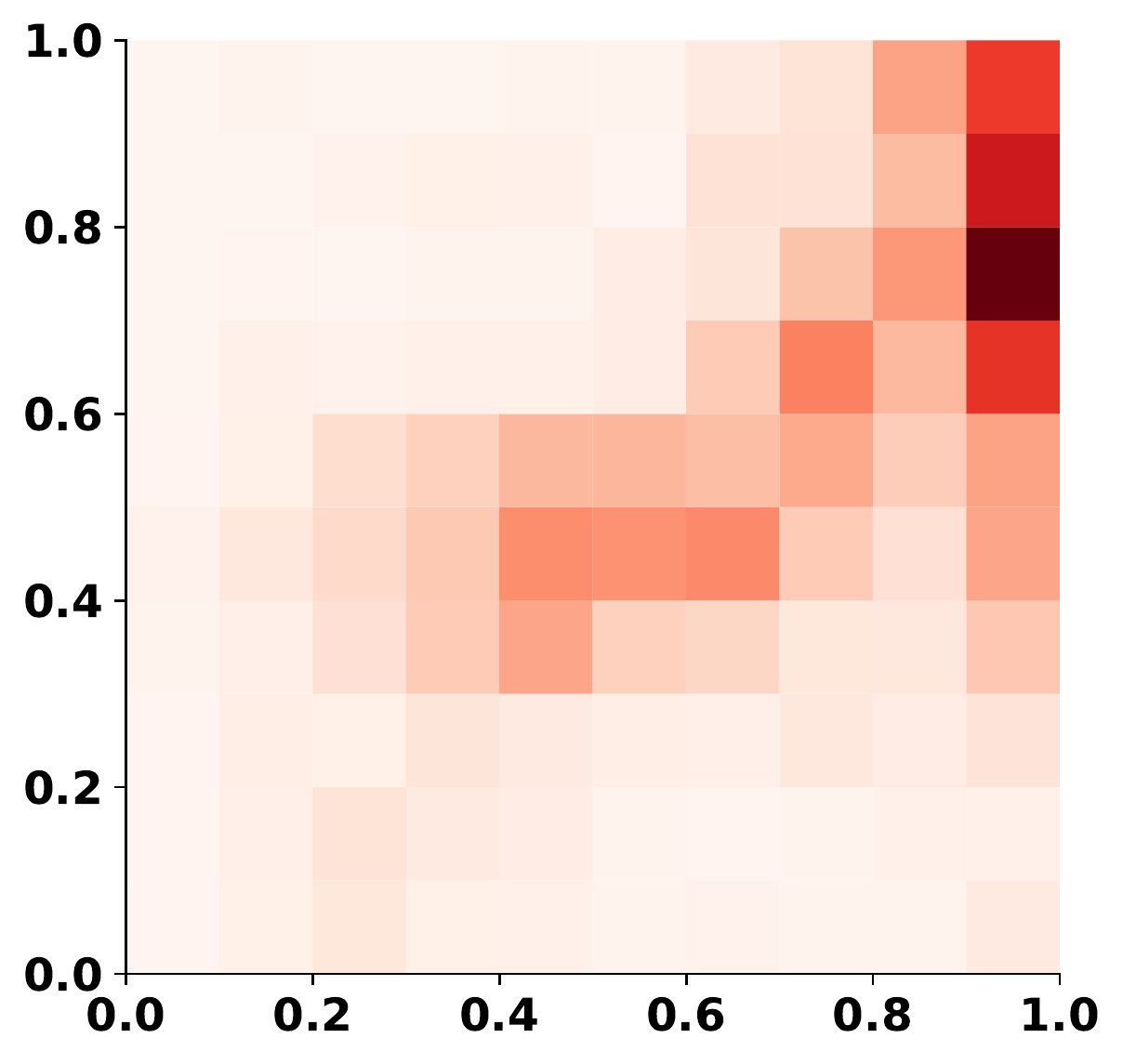}}
		\vspace{-0.2cm}
	\caption{ 
	Sampling distributions on the GridWorld visualized by building 2D histogram from sampled states. Heavy color indicates high visitations/state density. The concrete way of generating the distribution is the same as Figure~\ref{fig:gd-check-statedist}. (a) has entropy around $4.5$ and (b) has entropy around $3.9$. 
	}\label{fig:gd-check-statedist-late}
	\vspace{-0.3cm}
\end{figure}

\textbf{A demo for continuous control.} We demonstrate that our approach can be applied for Mujoco~\citep{todorov2012mujoco} continuous control problems with an online learned model and still achieve superior performance. We use DDPG (Deep Deterministic Policy Gradient)~\citep{tim2016ddpg,silver2014dpg} as an example for use inside our Dyna-TD. Let $\pi_{\theta'}: \States\mapsto \Actions$ be the actor, then we set the HC function as $h(s)\defeq \log |\hat{y} - Q_\theta(s, \pi_{\theta'}(s))|$ where $\hat{y}$ is the TD target. Figure~\ref{fig:conti-control} (a)(b) shows the learning curves of DDPG trained with ER, PrioritizedER, and our Dyna-TD on Hopper and Walker2d respectively. Since other Dyna variants never show an advantage and are not relevant to the purpose of this experiment, we no longer include them. Dyna-TD shows quick improvement as before. This indicates our sampled hypothetical experiences could be helpful for actor-critic algorithms that are known to be prone to local optimums. Additionally, we note again that ER outperforms PrioritizedER, as occurred in the discrete control and supervised learning (PrioritizedL2 is worse than L2) experiments.

\begin{figure}[H]
	\vspace{-0.1cm}
	\subfigure[Hopper-v2]{
		\includegraphics[width=\figwidthfour]{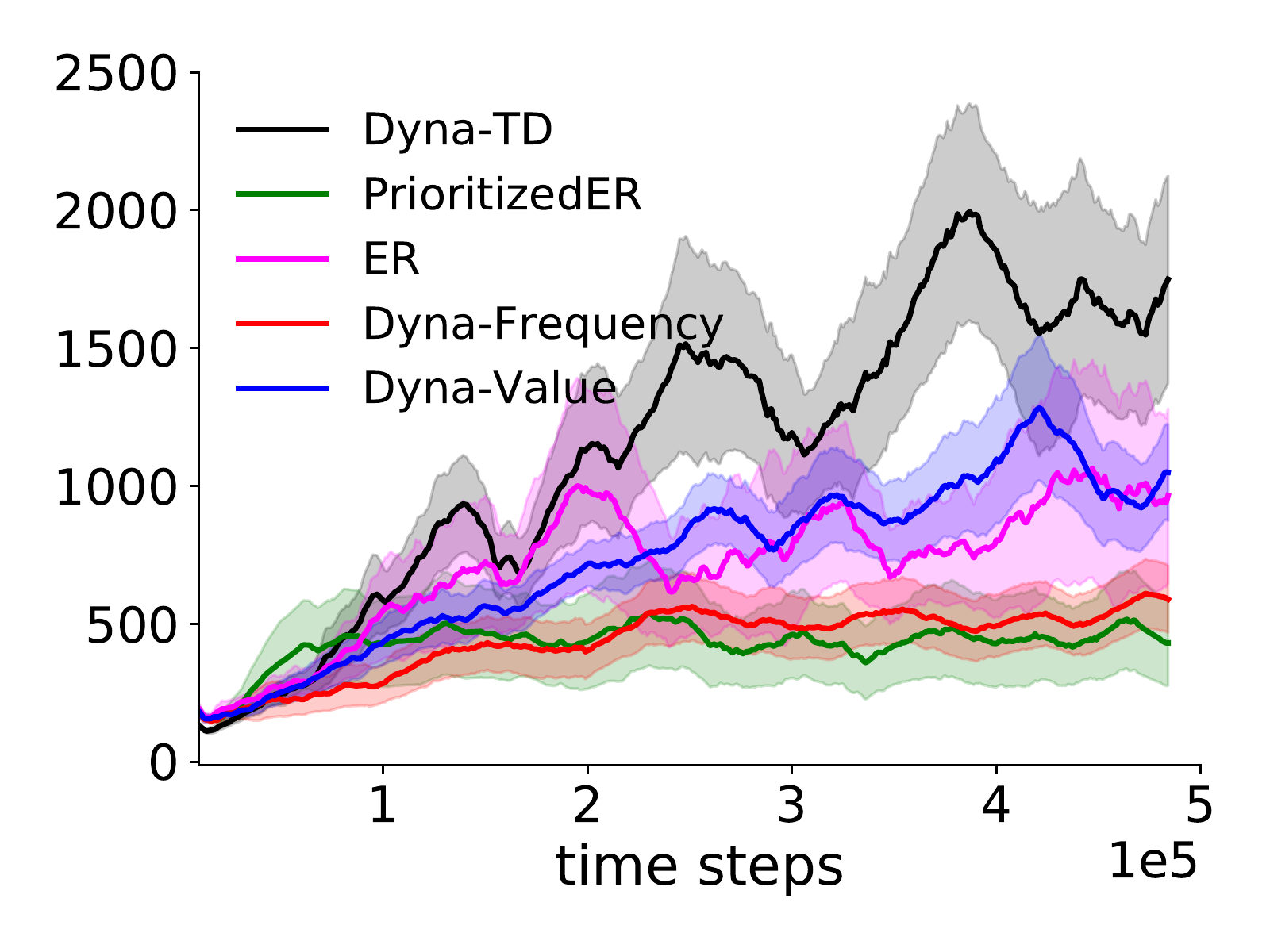}}
		\subfigure[Walker2d-v2]{
		\includegraphics[width=\figwidthfour]{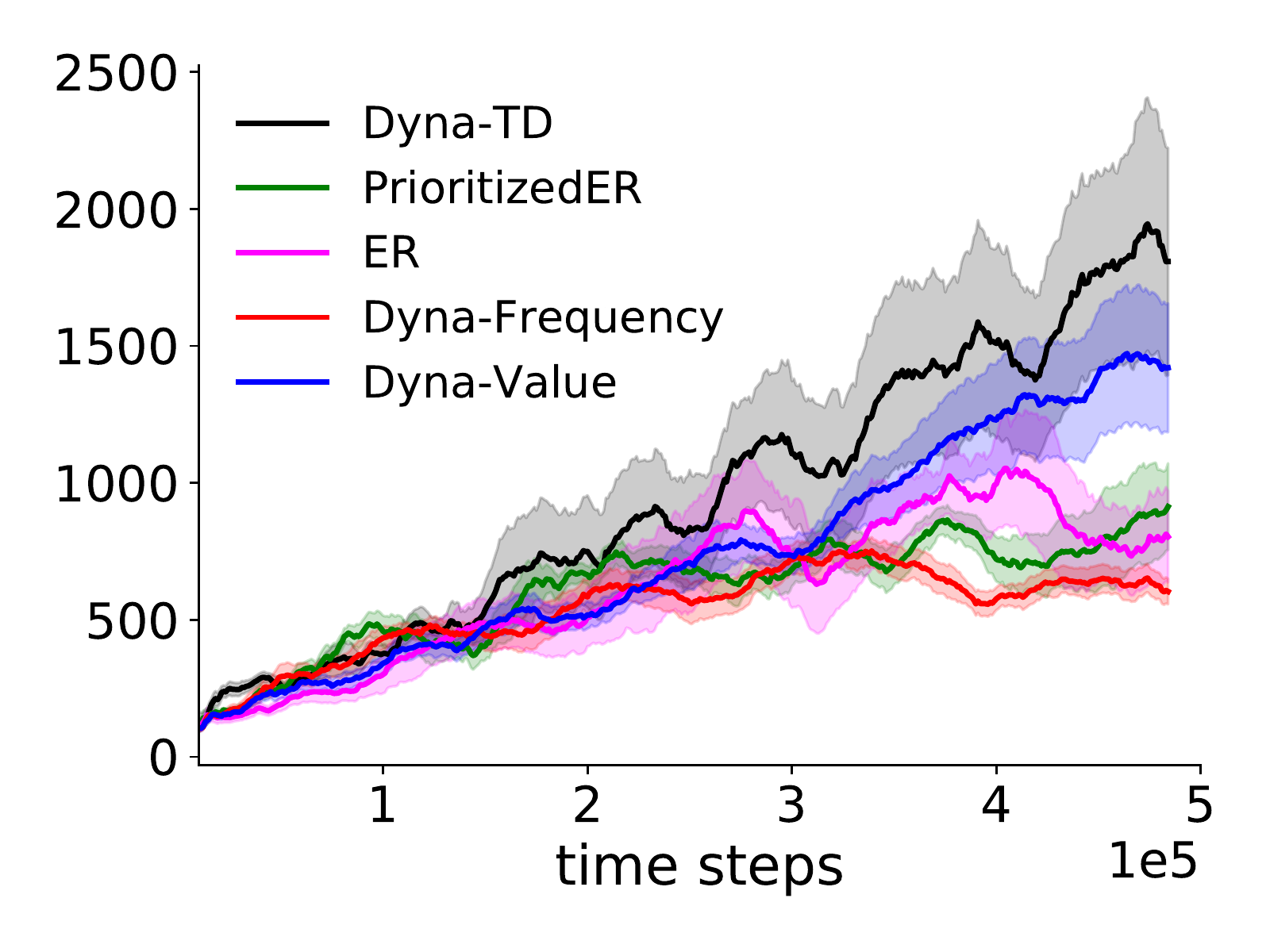}}
				 \vspace{-0.2cm}
		\caption{
		Episodic returns v.s. environment time steps of \textcolor{black}{\textbf{Dyna-TD (black)}} with an online learned model, and other competitors. Results are averaged over $5$ random seeds after smoothing over a window of size $30$. The shade indicates standard error.
		}\label{fig:conti-control}
 \vspace{-0.2cm}
\end{figure}

\section{Discussion}
We provide theoretical insight into the error-based prioritized sampling by establishing its equivalence to the uniform sampling for a cubic power objective in a supervised learning setting. Then we identify two drawbacks of prioritized ER: outdated priorities and insufficient sample space coverage. We mitigate the two limitations by SGLD sampling method with empirical verification. Our empirical results on both discrete and continuous control domains show the efficacy of our method. 

There are several promising future directions. First, a natural question is how a model should be learned to benefit a particular sampling method, as this work mostly focuses on sampling hypothetical experiences without considering model learning algorithms.
Existing results show that learning a model while considering how to use it should make the policy robust to model errors~\citep{farahmand2017vaml,farahmand2018vaml}. Second, one may apply our approach with a model in latent space~\citep{hamilton2014efficient,wahlstrom2015model,ha2018world,hafner2019latentmodel,julian2020mbrlnature}, which enables our method to scale to large domains. Third, since there are existing works examining how ER is affected by bootstrap return ~\citep{daley2019lambdareplay}, by buffer or mini-batch size~\citep{zhang2017er,liu2017buffersize}, and by environment steps taken per gradient step~\citep{fu2019diagnoseqlearning,hasselt2018deeprltriad,fedus2020er}, it is worth studying the theoretical implications of those design choices and their effects on prioritized ER's efficacy. 

Last, as our cubic objective explains only one version of the error-based prioritization, efforts should also be made to theoretically interpret other sampling distributions, such as distribution location or reward-based prioritization~\citep{lambert2020objective}. It is interesting to explore whether these alternatives can be formulated as surrogate objectives. Furthermore, a recent work by~\citet{fujimoto2020equivalence} establishes an equivalence between various distributions and uniform sampling for different loss functions. Studying if those general loss functions have faster convergence rate as shown in our Theorem~\ref{thm-hitting-time} could help illuminate their benefits.


\if0
\section{Ethic Statement}

This work is about the methodology of how to sample hypothetical experiences in model-based reinforcement learning efficiently. The potential impact of this work is likely to be further improvement of sample efficiency of reinforcement learning methods, which should be generally beneficial to the reinforcement learning research community. We have not considered specific applications or practical scenarios as the goal of this work. Hence, it does not have any direct ethical consequences. 

\section{Reproducible Statement}

We commit to ensuring that other researchers with reasonable background knowledge in our area can reproduce our theoretical and empirical results. We provide detailed theoretical derivation for our theorems~\ref{thm-cubic-square-eq} and~\ref{thm-hitting-time} in the Appendix, where there is a clear table of contents pointing to different concrete mathematical derivations for these theorems. We also provide sufficient experimental details to reproduce our empirical results in Appendix~\ref{sec:reproduce-experiments}. We will provide our repository upon acceptance or reviewer's request.
\fi 

\subsubsection*{Acknowledgements}
We would like to thank all anonymous reviewers for their helpful feedback during multiple submissions of this paper. We acknowledge the funding from the Canada CIFAR AI Chairs program and Alberta Machine Intelligence Institute.


{
\small
\bibliography{paper}
}

\iftrue
\clearpage
\newpage
\appendix

\onecolumn
\section{Appendix}

The appendix includes the following contents:

\begin{enumerate}
	\item Section~\ref{sec:background-dyna}: background in Dyna architecture and two Dyna variants.
	\item Section~\ref{sec:langevin-discussion}: background of Langevin dynamics.
	\item Section~\ref{sec:proof-thm-cubic-square-eq}: the full proof of Theorem~\ref{thm-cubic-square-eq}. 
	\item Section~\ref{sec:proof-thm-hitting-time}: the full proof of Theorem~\ref{thm-hitting-time} and its simulations. 
	\item Section~\ref{sec:error-effect}: the theorem characterizing the error bound between the sampling distribution estimated by using a true model and a learned model. It includes the full proof.
	\item Section~\ref{sec:appendix-highpower}: a discussion and some empirical study of using high power objectives. 
	\item Section~\ref{sec:additional-experiments}: supplementary experimental results: training error results to check the negative effects of the limitations of prioritized sampling; results to verify the equivalence between prioritized sampling and cubic power; additional results on the discrete control domains~\ref{sec:appendix-discrete-plan5};  results on the autonomous driving application; results on MazeGridWorld from~\cite{pan2020frequencybased}. 
	\item Section~\ref{sec:reproduce-experiments}: details for reproducible research.
\end{enumerate}

\subsection{Background in Dyna}\label{sec:background-dyna}
Dyna integrates model-free and model-based policy updates in an online RL setting~\citep{sutton1990integrated}. As shown in Algorithm~\ref{alg_dyna}, at each time step, a Dyna agent uses the real experience to learn a model and performs a model-free policy update. During the \emph{planning} stage, simulated experiences are acquired from the model to further improve the policy. It should be noted that, in Dyna, the concept of \emph{planning} refers to any computational process which leverages a model to improve policy, according to~\citet{sutton2018intro}, Chapter 8. The mechanism of generating states or state-action pairs from which to query the model is called \emph{search-control}, which is of critical importance to improving sample efficiency. The below algorithm shows a naive search-control strategy: simply use visited state-action pairs and store them into the search-control queue. During the planning stage, these pairs are uniformly sampled according to the original paper. 

The recent works by~\citet{pan2019hcdyna,pan2020frequencybased} propose two search-control strategies to generate states. The first one is to search high-value states actively, and the second one is to search states whose values are difficult to learn. 

However, there are several limitations of the two previous works. First, they do not provide any theoretical justification to use the stochastic gradient ascent trajectories for search-control. Second, HC on gradient norm and Hessian norm of the learned value function~\citep{pan2020frequencybased} suffers from great computation cost and zero or explosive gradient due to the high order differentiation (i.e., $\nabla_s ||\nabla_s v(s)||$) as suggested by the authors. When using ReLu as activation functions, such high order differentiation almost results in zero gradients. We empirically verified this phenomenon. And this phenomenon can also be verified by intuition from the work by~\citet{goodfellow2015advexample}, which suggests that ReLU neural networks are locally almost linear. Then it is not surprising to have zero higher order derivatives. Third, the two methods are prone to result in sub-optimal policies: consider that the values of states are relatively well-learned and fixed, then value-based search-control (Dyna-Value) would still find those high-value states even though they might already have low TD error.    



\begin{algorithm}
	\caption{Tabular Dyna}
	\label{alg_dyna}
	\begin{algorithmic}
		\STATE Initialize $Q(s,a)$; initialize model $\mathcal{M}(s, a)$, $\forall (s, a) \in \States \times \Actions$
		\WHILE {true}
		\STATE observe $s$, take action $a$ by $\epsilon$-greedy w.r.t $Q(s, \cdot)$
		\STATE execute $a$, observe reward $R$ and next State $s'$
		\STATE Q-learning update for $Q(s, a)$
		\STATE update model $\mathcal{M}(s, a)$ (i.e. by counting)
		\STATE store $(s, a)$ into search-control queue // \textcolor{red}{this is a naive search-control strategy}
		\FOR{i=1:d}
		\STATE sample $(\tilde{s}, \tilde{a})$ from search-control queue 
		\STATE $(\tilde{s}', \tilde{R}) \gets \mathcal{M}(\tilde{s}, \tilde{a})$ // \textcolor{red}{simulated transition} 
		\STATE Q-learning update for $Q(\tilde{s}, \tilde{a})$ // \textcolor{red}{planning updates/steps} 
		\ENDFOR
		\ENDWHILE
	\end{algorithmic}
\end{algorithm}

\subsection{Discussion on the Langevin Dynamics Monte Carlo Method}\label{sec:langevin-discussion}

\textbf{Theoretical mechanism}. Define a SDE: $\mathrm{d} W(t) = \nabla U(W_t) \mathrm{d}t + \sqrt{2}\mathrm{d}B_t$, where $B_t \in \RR^d$ is a $d$-dimensional Brownian motion and $U$ is a continuous differentiable function. It turns out that the Langevin diffusion $(W_t)_{t\ge 0}$ converges to a unique invariant distribution $p(x) \propto \exp{(U(x))}$ \citep{chiang1987diffusionopt}. By applying the Euler-Maruyama discretization scheme to the SDE, we acquire the discretized version $Y_{k+1} = Y_k + \alpha_{k+1} \nabla U(Y_k) + \sqrt{2\alpha_{k+1}}Z_{k+1}$ where $(Z_k)_{k\ge 1}$ is an i.i.d. sequence of standard $d$-dimensional Gaussian random vectors and $(\alpha_k)_{k\ge 1}$ is a sequence of step sizes. It has been proved that the limiting distribution of the sequence $(Y_k)_{k\ge 1}$ converges to the invariant distribution of the underlying SDE~\citep{roberts1996discretelangevin,durmus2017unajustedla}. As a result, considering $U(\cdot)$ as $\log |\delta(\cdot)|$, $Y$ as $s$ justifies our SGLD sampling method.. 

\subsection{Proof for Theorem~\ref{thm-cubic-square-eq}}\label{sec:proof-thm-cubic-square-eq}

{\bf \cref{thm-cubic-square-eq}.}
For a constant $c$ determined by $\theta, \trainset$, we have
\begin{equation*} 
	\expectation_{ (x, y)\sim uniform(\trainset) }{ \left[ \frac{1}{3} \cdot  \frac{\partial \left|f_{\theta}(x)-y \right|^3 }{\partial \theta} \right]} = c \cdot \expectation_{ (x, y)\sim q(x, y;\theta) }{ \left[ \frac{1}{2} \cdot  \frac{\partial \left(f_{\theta}(x)-y \right)^2 }{\partial \theta} \right] }
\end{equation*}

\begin{proof}
For the l.h.s., we have,
\begin{align}
	&\expectation_{ (x, y)\sim uniform(\trainset) }{ \left[ \frac{1}{3} \cdot  \frac{\partial \left|f_{\theta}(x)-y \right|^3 }{\partial \theta} \right]} \\
	&= \frac{1}{3 \cdot n} \cdot \sum_{i=1}^{n}{  \frac{\partial \left|f_{\theta}(x(i))-y(i) \right|^3 }{\partial \theta} }  \\
	&=\frac{1}{3 \cdot n} \cdot  \sum_{i=1}^{n}\frac{\partial \Big( \left(f_{\theta}(x(i))-y(i) \right)^2 \Big)^{\frac{3}{2}} }{\partial \theta} \\
	&= \frac{1}{3 \cdot n } \cdot  \sum_{i=1}^{n} \frac{\partial \left( \left(f_{\theta}(x(i))-y(i) \right)^2 \right)^{\frac{3}{2}} }{\partial \left(f_{\theta}(x)-y \right)^2 } \cdot \frac{\partial \left(f_{\theta}(x(i))-y(i) \right)^2 }{\partial \theta} \\
	&= \frac{1}{2 \cdot n } \cdot  \sum_{i=1}^{n} \left|f_{\theta}(x(i))-y(i) \right| \cdot \frac{\partial \left(f_{\theta}(x(i))-y(i) \right)^2 }{\partial \theta}.
\end{align}
On the other hand, for the r.h.s., we have,
\begin{align}
	&\expectation_{ (x, y)\sim q(x, y;\theta) }{ \left[ \frac{1}{2} \cdot  \frac{\partial \left(f_{\theta}(x)-y \right)^2 }{\partial \theta} \right]} \\
	&= \frac{1}{2} \cdot \sum_{i=1}^{n}{ q(x_i, y_i; \theta) \cdot \frac{\partial \left(f_{\theta}(x(i))-y(i) \right)^2}{\partial \theta} } \\
	&= \frac{n}{ \sum_{j=1}^n |f_\theta(x_j) - y_j| } \cdot \left[ \frac{1}{2 \cdot n} \cdot \sum_{i=1}^{n}{ \left|f_{\theta}(x(i))-y(i) \right| \cdot \frac{\partial \left(f_{\theta}(x(i))-y(i) \right)^2}{\partial \theta}  } \right] \\
	&= \frac{n}{ \sum_{j=1}^n |f_\theta(x_j) - y_j| } \cdot \expectation_{ (x, y)\sim uniform(\trainset) }{ \left[ \frac{1}{3} \cdot  \frac{\partial \left|f_{\theta}(x)-y \right|^3 }{\partial \theta} \right]}.
\end{align}
Setting $c = \frac{\sum_{i=1}^n |f_\theta(x_i) - y_i|}{n}$ completes the proof.
\end{proof}

\subsection{Proof for Theorem~\ref{thm-hitting-time}}\label{sec:proof-thm-hitting-time}

{\bf \cref{thm-hitting-time}.} (Fast early learning, detailed version) Let $n$ be a positive integer (i.e., the number of training samples). Let $x_t, \tilde{x}_t \in \RR^n$ be the target estimates of all samples at time $t, t\ge0$, and $x(i) (i\in [n], [n]\defeq\{1,2,...,n\})$ be the $i$th element in the vector. We define the objectives: 
\begin{align*}
\ell_2(x, y) \defeq \frac{1}{2} \sum_{ i = 1}^{n}{ \left( x(i) - y(i) \right)^2 }, \quad
\ell_3(x, y) \defeq \frac{1}{3} \sum_{ i = 1}^{n}{ \left| x(i) - y(i) \right|^3 }.
\end{align*}
Let $\{ x_t \}_{ t \ge 0}$ and $\{ \tilde{x}_t \}_{ t \ge 0}$ be generated by using $\ell_2, \ell_3$ objectives respectively. Then define the total absolute prediction errors respectively: 
\begin{align*}
\delta_t \defeq \sum_{ i = 1}^{n}{ \delta_t(i) } = \sum_{ i = 1}^{n}{ \left| x_t(i) - y(i) \right| }, \quad
\tilde{\delta}_t \defeq \sum_{ i = 1}^{n}{ \tilde{\delta}_t(i) } = \sum_{ i = 1}^{n}{ \left| \tilde{x}_t(i) - y(i) \right| }, 
 \end{align*}
where $y(i) \in \mathbb{R}$ is the training target for the $i$th training sample. That is, $\forall i\in [n]$,
\begin{align*}
\frac{d x_t(i)}{d t} = - \eta \cdot \frac{d \ell_2(x_t, y) }{ d x_t(i) }, \quad
\frac{d \tilde{x}_t(i)}{d t} = - \eta^\prime  \cdot \frac{d \ell_3(\tilde{x}_t, y) }{ d \tilde{x}_t(i) }.
\end{align*}

Assume the same initialization $x_0 = \tilde{x}_0$. Then: \\

\textbf{(i)} For all $i \in [n]$, define the following hitting time, which is the minimum time that the absolute error takes to be $\le \epsilon(i)$,
\begin{align*}
t_\epsilon(i) \defeq \min_{t }\{ t \ge 0 : \delta_t(i) \le \epsilon(i) \}, \quad
\tilde{t}_\epsilon(i) \defeq \min_{t}\{ t \ge 0 : \tilde{\delta}_t(i) \le \epsilon(i) \}.
\end{align*}
 Then, $\forall i \in [n]$ s.t. $\delta_0(i) > \frac{\eta}{ \eta^\prime }$, given an absolute error threshold $\epsilon(i)\ge 0$, there exists $\epsilon_0(i) \in (0, \frac{\eta}{ \eta^\prime })$, such that for all $\epsilon(i) > \epsilon_0(i), t_\epsilon(i) \ge \tilde{t}_\epsilon(i)$.

\textbf{(ii)} Define the following quantity, for all $t \ge 0$,
\begin{align}
	H_t^{-1} \defeq \frac{1}{n} \cdot \sum_{i=1}^{n}{ \frac{1}{ \delta_t(i) } } = \frac{1}{n} \cdot \sum_{i=1}^{n}{ \frac{1}{ \left| x_t(i) - y(i) \right| } }.
\end{align}
Given any $0 < \epsilon \le \delta_0 = \sum_{i=1}^{n}{ \delta_0(i) }$, define the following hitting time, which is the minimum time that the total absolute error takes to be $\le \epsilon$,
\begin{align}
	t_\epsilon \defeq \min_{t }\{ t \ge 0 : \delta_t \le \epsilon \}, \quad
	\tilde{t}_\epsilon \defeq \min_{t}\{ t \ge 0 : \tilde{\delta}_t \le \epsilon \}.
\end{align}
If there exists $\delta_0 \in \mathbb{R}$ and $0 < \epsilon \le \delta_0$ such that the following holds,
\begin{align}
\label{eq:hm-hitting-time-assumption-H-zero-inverse}
	H_0^{-1} \le \frac{ \eta}{ \eta^\prime } \cdot \frac{ \log{( \delta_0 / \epsilon )} }{ \frac{ \delta_0 }{ \epsilon } - 1 },
\end{align}
then we have, $t_\epsilon \ge \tilde{t}_\epsilon$, which means gradient descent using the cubic loss function will achieve the total absolute error threshold $\epsilon$ faster than using the square loss function.

\begin{proof}
\textbf{First part. (i).} For the $\ell_2$ loss function, for all $i \in [n]$ and $t \ge 0$, we have,
\begin{align}
\label{eq:thm-hitting-time-intermediate-1}
	\frac{d \delta_t(i)}{d t} &= \sum_{j = 1}^{n}{  \frac{d \delta_t(i)}{ d x_t(j) } \cdot \frac{ d x_t(j) }{d t} } \\
	&= \frac{d \delta_t(i)}{ d x_t(i) } \cdot \frac{ d x_t(i) }{d t} \qquad \left( \frac{d \delta_t(i)}{ d x_t(j) } = 0 \text{ for all } i \not= j \right) \\
	&= \sgn\{ x_t(i) - y(i) \}  \cdot \left( - \eta \right) \cdot \frac{d \ell_2(x_t, y) }{ d x_t(i) } \\
	&= \sgn\{ x_t(i) - y(i) \}  \cdot \left( - \eta \right) \cdot \left( x_t(i) - y(i) \right) \\
	&= - \eta \left| x_t(i) - y(i) \right| \\
	&= - \eta \cdot \delta_t(i),
\end{align}
which implies that,
\begin{align}
	\frac{d \{ \log{ \delta_t(i) } \} }{d t} = \frac{ 1 }{ \delta_t(i) } \cdot \frac{d \delta_t(i) }{d t} = - \eta.
\end{align}
Taking integral, we have,
\begin{align}
	\log{ \delta_t(i) } - \log{ \delta_0(i) } = - \eta \cdot t.
\end{align}
Let $\delta_t(i) = \epsilon(i)$. We have,
\begin{align}
	t_\epsilon(i) &\defeq \frac{1}{ \eta } \cdot \log{ \left( \frac{\delta_0(i)}{\delta_t(i)} \right) } = \frac{1}{ \eta } \cdot \log{ \left( \frac{\delta_0(i)}{\epsilon(i)} \right) }.
\end{align}

On the other hand, for the $\ell_3$ loss function, we have,
\begin{align}
\label{eq:thm-hitting-time-intermediate-2}
	\frac{d \{ \tilde{\delta}_t(i)^{-1} \}}{d t} &= \sum_{j = 1}^{n}{  \frac{d \tilde{\delta}_t(i)^{-1} }{ d \tilde{x}_t(j) } \cdot \frac{ d \tilde{x}_t(j) }{d t} } \\
	&= \frac{d \tilde{\delta}_t(i)^{-1} }{ d \tilde{x}_t(i) } \cdot \frac{ d \tilde{x}_t(i) }{d t} \\
	&= - \frac{1}{ \tilde{\delta}_t(i)^2 } \cdot \frac{d \tilde{\delta}_t(i) }{ d \tilde{x}_t(i) } \cdot \frac{ d \tilde{x}_t(i) }{d t} \\
	&= - \frac{1}{ \left( \tilde{x}_t(i) - y(i) \right)^2 } \cdot \sgn\{ \tilde{x}_t(i) - y(i) \}  \cdot \left( - \eta^\prime \right) \cdot \frac{d \ell_3(\tilde{x}_t, y) }{ d \tilde{x}_t(i) } \\
	&= - \frac{ \sgn\{ \tilde{x}_t(i) - y(i) \} }{ \left( \tilde{x}_t(i) - y(i) \right)^2 }  \cdot \left( - \eta^\prime \right) \cdot \left( \tilde{x}_t(i) - y(i) \right)^2 \cdot \sgn\{ \tilde{x}_t(i) - y(i) \} \\
	&= \eta^\prime.
\end{align}
Taking integral, we have,
\begin{align}
	\frac{1}{ \tilde{\delta}_t(i) } - \frac{1}{ \tilde{\delta}_0(i) } = \eta^\prime \cdot t.
\end{align}
Let $\tilde{\delta}_t(i) = \epsilon(i)$. We have,
\begin{align}
	\tilde{t}_\epsilon(i) &\defeq \frac{1}{ \eta^\prime } \cdot \left( \frac{1}{ \tilde{\delta}_t(i) } - \frac{1}{ \tilde{\delta}_0(i) } \right) = \frac{1}{ \eta^\prime } \cdot \left( \frac{1}{ \epsilon(i) } - \frac{1}{ \tilde{\delta}_0(i) } \right).
\end{align}
Then we have,
\begin{align}
	t_\epsilon(i) - \tilde{t}_\epsilon(i) &= \frac{1}{ \eta } \cdot \log{ \left( \frac{\delta_0(i)}{\epsilon(i)} \right) } - \frac{1}{ \eta^\prime } \cdot \left( \frac{1}{ \epsilon(i) } - \frac{1}{ \tilde{\delta}_0(i) } \right) \\
	&= \frac{1}{ \eta } \cdot \left[  \left(  \log{ \frac{1}{\epsilon(i)}  } - \frac{\eta}{ \eta^\prime } \cdot \frac{ 1 }{\epsilon(i)}\right) - \left( \log{ \frac{1}{ \delta_0(i) }  } - \frac{\eta}{ \eta^\prime } \cdot \frac{ 1 }{ \tilde{\delta}_0(i) } \right) \right].
\end{align}
According to $x_0(i) = \tilde{x}_0(i)$, we have
\begin{align}
	\delta_0(i) &= \left| x_t(i) - y(i) \right| \\
	&= \left| \tilde{x}_t(i) - y(i) \right| \\
	&= \tilde{\delta}_0(i).
\end{align}
Define the following function, for all $x > 0$, 
\begin{align}
	f(x) = \log{ \frac{1}{x} } - \frac{\eta}{ \eta^\prime } \cdot \frac{1}{x}.
\end{align}
We have, the continuous function $f$ is monotonically increasing for $x \in (0, \frac{\eta}{ \eta^\prime }]$ and monotonically decreasing for $x \in (\frac{\eta}{ \eta^\prime }, \infty)$. Also, note that, $\max_{x > 0}{f(x)} = f(\frac{\eta}{ \eta^\prime })  = \log{ \frac{\eta^\prime}{ \eta } } -1$, $\lim_{x \to 0}{ f(x) } = \lim_{x \to \infty}{ f(x) } = - \infty$.

Given $\delta_0(i) = \tilde{\delta}_0(i) > \frac{\eta}{ \eta^\prime }$, we have $f(\delta_0(i)) < f(\frac{\eta}{ \eta^\prime }) = \log{ \frac{\eta^\prime}{ \eta } } -1$. According to  the intermediate value theorem, there exists $ \epsilon_0(i) \in (0, \frac{\eta}{ \eta^\prime })$, such that $f(\epsilon_0(i)) = f(\delta_0(i))$. Since $f(\cdot)$ is monotonically increasing on $(0, \frac{\eta}{ \eta^\prime }]$ and monotonically decreasing on $(\frac{\eta}{ \eta^\prime }, \infty)$, for all $\epsilon(i) \in [\epsilon_0(i), \delta_0(i)]$, we have $f(\epsilon(i)) \ge f(\delta_0(i))$\footnote{Note that $\epsilon(i) < \delta_0(i)$ by the design of using gradient descent updating rule. If the two are equal, $t_\epsilon(i) = \tilde{t}_\epsilon(i) = 0$ holds trivially.}. Therefore, we have,
\begin{align}
	t_\epsilon(i) - \tilde{t}_\epsilon(i) = \frac{1}{\eta} \cdot \left( f(\epsilon(i)) - f(\delta_0(i)) \right) \ge 0.
\end{align}

\textbf{Second part. (ii).} For the square loss function, we have, for all $t \ge 0$,
\begin{align}
	\frac{d \delta_t}{d t} &= \sum_{i = 1}^{n}{ \frac{d \delta_t(i)}{d t} } \\
	&= - \eta \cdot \sum_{i = 1}^{n}{ \delta_t(i) } \qquad \left( \text{by \cref{eq:thm-hitting-time-intermediate-1}} \right) \\
	&= - \eta \cdot  \delta_t,
\end{align}
which implies that,
\begin{align}
	\frac{d \{ \log{ \delta_t } \} }{d t} = \frac{ 1 }{ \delta_t } \cdot \frac{d \delta_t }{d t} = - \eta.
\end{align}
Taking integral, we have,
\begin{align}
	\log{ \delta_t } - \log{ \delta_0 } = - \eta \cdot t.
\end{align}
Let $\delta_t = \epsilon$. We have,
\begin{align}
\label{eq:thm-hitting-time-intermediate-3}
	t_\epsilon &\defeq \frac{1}{ \eta } \cdot \log{ \left( \frac{\delta_0}{\delta_t} \right) } = \frac{1}{ \eta } \cdot \log{ \left( \frac{\delta_0}{\epsilon} \right) }.
\end{align}
After $t_\epsilon$ time, for all $i \in [n]$, we have,
\begin{align}
\label{eq:thm-hitting-time-intermediate-4}
	\delta_{t_\epsilon}(i) &= \delta_0(i) \cdot \exp\{ - \eta \cdot t_\epsilon \}.
\end{align}
On the other hand, for the cubic loss function, we have, for all $t \ge 0$,
\begin{align}
	\frac{d H_t^{-1} }{d t} &= \frac{1}{n} \cdot \sum_{i=1}^{n}{ \frac{d \{ \tilde{\delta}_t(i)^{-1} \}}{d t} } \\
	&= \eta^\prime. \qquad \left( \text{by \cref{eq:thm-hitting-time-intermediate-2}} \right)
\end{align}
Taking integral, we have,
\begin{align}
	H_t^{-1} - H_0^{-1} = \eta^\prime \cdot t,
\end{align}
which means given a $H_t^{-1}$ value, we can calculate the hitting time as,
\begin{align}
	t = \frac{1}{ \eta^\prime } \cdot \left( H_t^{-1} - H_0^{-1} \right).
\end{align}
Now consider after $t_\epsilon$ time, using gradient descent with the square loss function we have $\delta_{t_\epsilon}(i) = \delta_0(i) \cdot \exp\{ - \eta \cdot t_\epsilon \} $ for all $i \in [n]$, which corresponds to,
\begin{align}
	H_{t_\epsilon}^{-1} &= \frac{1}{n} \cdot \sum_{i=1}^{n}{ \frac{1}{ \delta_{t_\epsilon}(i) } } \\
	&= \frac{1}{n} \cdot \sum_{i=1}^{n}{ \frac{1}{ \delta_0(i) \cdot \exp\{ - \eta \cdot t_\epsilon \} } }. \qquad \left( \text{by \cref{eq:thm-hitting-time-intermediate-4}} \right)
\end{align}
Therefore, the hitting time of using gradient descent with the cubic loss function to achieve the $H_{t_\epsilon}^{-1}$ value is, 
\begin{align}
	\tilde{t}_{\epsilon} &= \frac{1}{ \eta^\prime } \cdot \left( H_{t_\epsilon}^{-1} - H_0^{-1} \right) \\
	&= \frac{1}{ \eta^\prime }  \cdot \left( \frac{1}{n} \cdot \sum_{i=1}^{n}{ \frac{1}{ \delta_0(i) \cdot \exp\{ - \eta \cdot t_\epsilon \} } } - \frac{1}{n} \cdot \sum_{i=1}^{n}{ \frac{1}{ \delta_0(i) } }  \right) \\
	&= \frac{1}{ \eta^\prime } \cdot \left( \exp\{ \eta \cdot t_\epsilon \} - 1 \right) \cdot H_0^{-1} \\
	&\le \frac{1}{ \eta } \cdot \left( \exp\{ \eta \cdot t_\epsilon \} - 1 \right) \cdot \frac{ \log{( \delta_0 / \epsilon )} }{ \frac{ \delta_0 }{ \epsilon } - 1 } \qquad \left( \text{by \cref{eq:hm-hitting-time-assumption-H-zero-inverse}} \right) \\
	&= \frac{1}{ \eta } \cdot \left( \frac{ \delta_0 }{ \epsilon } - 1 \right) \cdot \frac{ \log{( \delta_0 / \epsilon )} }{ \frac{ \delta_0 }{ \epsilon } - 1 } \\
	&= \frac{1}{ \eta } \cdot \log{ \left( \frac{\delta_0}{\epsilon} \right) } \\
	&= t_\epsilon, \qquad \left( \text{by \cref{eq:thm-hitting-time-intermediate-3}}\right)
\end{align}
finishing the proof.
\end{proof}

\textbf{Remark}. Figure~\ref{fig:tsquare_ge_tcubic} shows the function $f(x) = \ln \frac{1}{x} - \frac{1}{x}, x>0$. Fix arbitrary $x'>1$, there will be another root $\epsilon_0<1$ s.t. $f(\epsilon_0)=f(x')$. However, there is no real-valued solution for $\epsilon_0$. The solution in $\mathbb{C}$ is $\epsilon_0 = -\frac{1}{W(\log{1/\delta_0}-1/\delta_0-\pi i)}$, where $W(\cdot)$ is a Wright Omega function. Hence, finding the exact value of $\epsilon_0$ would require a definition of ordering on complex plane. Our current theorem statement is sufficient for the purpose of characterizing convergence rate. The theorem states that there always exists some desired low error level $<1$, minimizing the square loss converges slower than the cubic loss.  

\begin{figure*}[!htpb]
	\centering
	\includegraphics[width=\figwidththree]{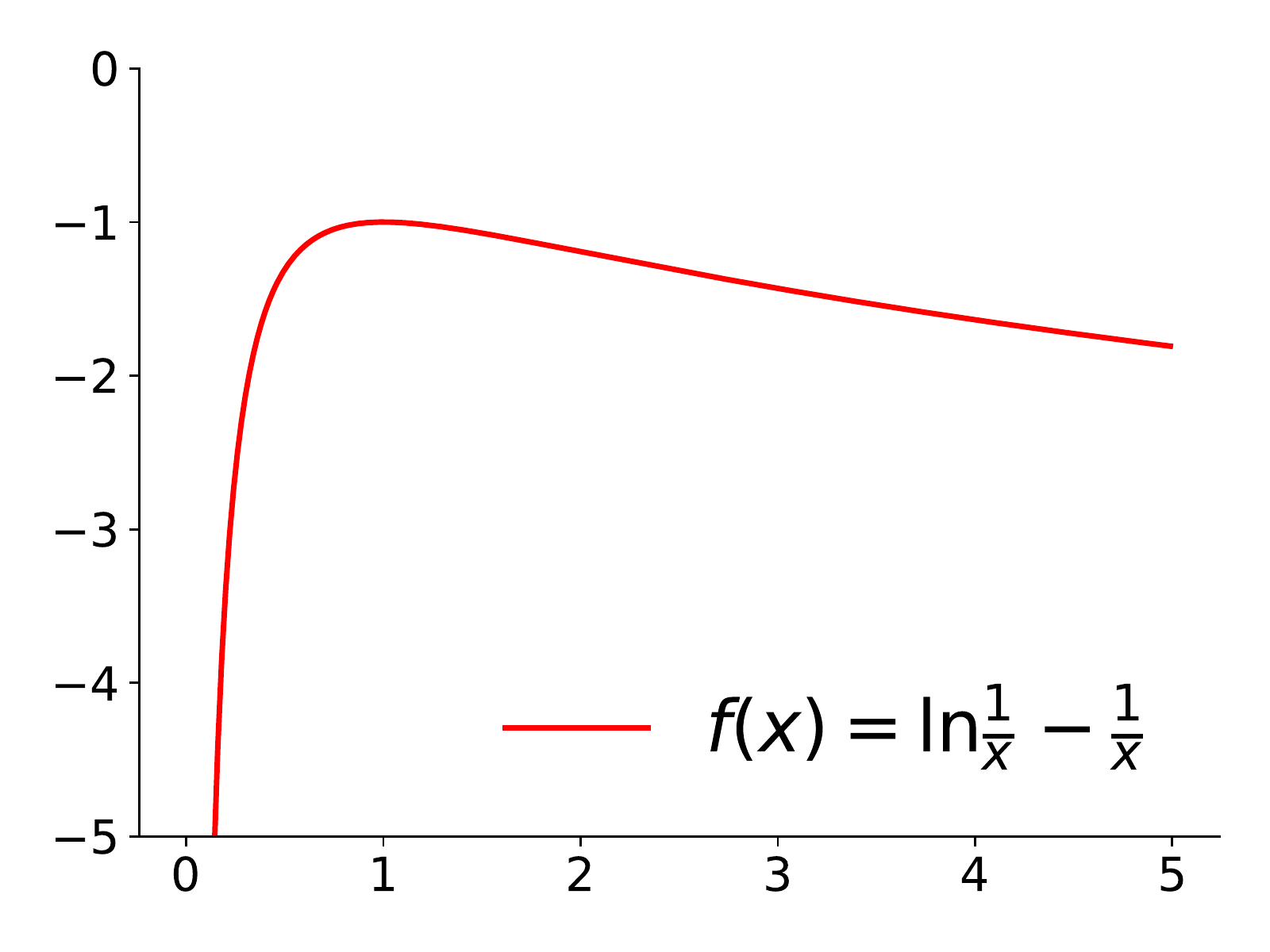}
	\caption{
		The function $f(x) = \ln \frac{1}{x} - \frac{1}{x}, x>0$. The function reaches maximum at $x=1$.
	}\label{fig:tsquare_ge_tcubic}
\end{figure*} 

\textbf{Simulations}. The theorem says that if we want to minimize our loss function to certain small nonzero error level, the cubic loss function offers faster convergence rate. Intuitively, cubic loss provides sharper gradient information when the loss is large as shown in Figure~\ref{fig:cubic-square}(a)(b). Here we provides a simulation. Consider the following minimization problems:  $\min_{x\ge0} x^2$ and $\min_{x\ge0} x^3$. For implementation and visualization convenience, we use the hitting time formulae $\squarehit = \tfrac{1}{\eta} \cdot \ln\left\{ \frac{ \delta_0 }{ \epsilon } \right\}, 	\cubichit =  \frac{1}{\eta} \cdot \left( \frac{1}{\epsilon} - \frac{1}{\delta_0} \right)$ derived in the proof, to compute the hitting time ratio $\frac{\squarehit}{\cubichit}$ under different initial values $x_0$ and final error value $\epsilon$. In Figure~\ref{fig:cubic-square}(c)(d), we can see that it usually takes a significantly shorter time for the cubic loss to reach a certain $x_t$ with various initial $x_0$ values.

\begin{figure}
	\centering
	\includegraphics[width=1.0\linewidth]{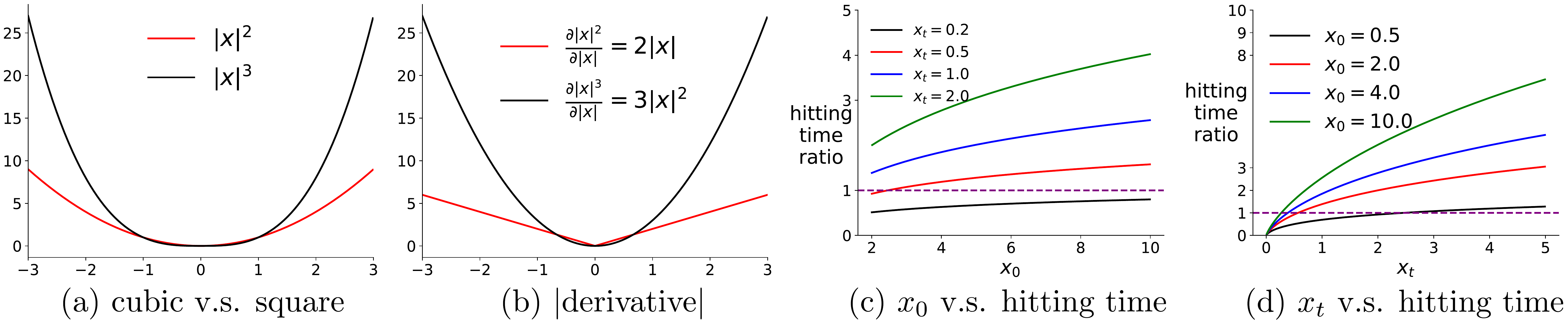}
	\caption{
		\small
		(a) show cubic v.s. square function. (b) shows their absolute derivatives. (c) shows the hitting time ratio v.s. initial value $x_0$ under different target value $x_t$. (d) shows the ratio v.s. the target $x_t$ to reach under different $x_0$. Note that a ratio larger than $1$ indicates a longer time to reach the given $x_t$ for the square loss.}\label{fig:cubic-square}
\end{figure}

\subsection{Error Bound between Sampling Distributions}\label{sec:error-effect}

We now provide the error bound between the sampling distribution estimated by using a true model and a learned model. We denote the transition probability distribution under policy $\pi$ and the true model as $\model^\pi(r, s'|s)$, and the learned model as $\hat{\model}^\pi(r, s'|s)$. Let $p(s)$ and $\hat{p}(s)$ be the convergent distributions described in the above sampling method by using the true and learned models respectively. Let $d_{tv}(\cdot, \cdot)$ be the total variation distance between the two probability distributions. Define 
$
u(s) \defeq |\delta(s, y(s))|, \hat{u}(s)\defeq |\delta(s, \hat{y}(s))|, 
Z\defeq \int_{s\in \States} u(s)ds, \hat{Z}\defeq \int_{s\in \States} \hat{u}(s)ds.
$
Then we have the following bound. 
\begin{thm}\label{thm-error-effect}
	Assume: 1) the reward magnitude is bounded $|r|\le \rmax$ and define $\vmax\defeq \frac{\rmax}{1-\gamma}$; 2) the largest model error for a single state is $\sperrbound \defeq \max_s d_{tv}(\model^\pi(\cdot|s), \hat{\model}^\pi(\cdot|s))$ and the total model error is bounded, i.e. $\epsilon\defeq \int_{s\in \States} \sperrbound ds < \infty$. Then $
	\forall s\in \States, |p(s)-\hat{p}(s)| \le \min(\frac{\vmax (p(s)\epsilon + \sperrbound)}{\hat{Z}}, \frac{\vmax (\hat{p}(s)\epsilon + \sperrbound)}{Z} )
	$.
\end{thm}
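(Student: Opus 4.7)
\textbf{Proof proposal for Theorem~\ref{thm-error-effect}.}
The plan is to first control the pointwise discrepancy between $u(s)$ and $\hat{u}(s)$, then propagate that control to the normalizing constants $Z$ and $\hat{Z}$, and finally combine the two via a standard ``add-and-subtract'' decomposition of the quotient $u(s)/Z - \hat{u}(s)/\hat{Z}$.

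First I would observe that $y(s) - \hat{y}(s) = \mathbb{E}_{r,s'\sim \model^\pi(\cdot|s)}[r+\gamma v^\pi(s')] - \mathbb{E}_{r,s'\sim \hat{\model}^\pi(\cdot|s)}[r+\gamma v^\pi(s')]$, and since the integrand $r+\gamma v^\pi(s')$ is bounded in absolute value by $\rmax+\gamma\vmax=\vmax$ under assumption (1), the standard bound relating expectations of bounded functions to total variation yields $|y(s)-\hat{y}(s)|\le \vmax\, d_{tv}(\model^\pi(\cdot|s),\hat{\model}^\pi(\cdot|s))\le \vmax\,\sperrbound$. Applying the reverse triangle inequality to $u(s)=|y(s)-v(s;\theta_t)|$ and $\hat{u}(s)=|\hat{y}(s)-v(s;\theta_t)|$ immediately gives the pointwise bound $|u(s)-\hat{u}(s)|\le \vmax\,\sperrbound$. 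Integrating this over $\States$ and using assumption (2) yields $|Z-\hat{Z}|\le \int_\States |u(s)-\hat{u}(s)|\,ds \le \vmax\,\epsilon$.

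Next I would decompose the quantity of interest as
\begin{align*}
p(s)-\hat{p}(s) \;=\; \frac{u(s)}{Z}-\frac{\hat{u}(s)}{\hat{Z}} \;=\; \frac{u(s)-\hat{u}(s)}{\hat{Z}} \;+\; \frac{u(s)}{Z}\cdot\frac{\hat{Z}-Z}{\hat{Z}} \;=\; \frac{u(s)-\hat{u}(s)}{\hat{Z}} \;+\; p(s)\cdot\frac{\hat{Z}-Z}{\hat{Z}}.
\end{align*}
Taking absolute values and plugging in the two bounds from the previous step produces
\begin{align*}
|p(s)-\hat{p}(s)| \;\le\; \frac{\vmax\,\sperrbound}{\hat{Z}} + p(s)\cdot\frac{\vmax\,\epsilon}{\hat{Z}} \;=\; \frac{\vmax(p(s)\,\epsilon+\sperrbound)}{\hat{Z}},
\end{align*}
which is the first of the two upper bounds in the min. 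The second bound follows by performing the symmetric decomposition, splitting over $Z$ instead of $\hat{Z}$ and factoring out $\hat{p}(s)$: one writes $p(s)-\hat{p}(s)=(u(s)-\hat{u}(s))/Z + \hat{u}(s)(1/Z-1/\hat{Z})$ and applies the same two bounds. Taking the minimum of the two upper bounds yields the claim.

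I do not expect any major obstacle; the argument is essentially bookkeeping once the TV-to-expectation bound is in place. The one point to be careful about is the constant hidden in that inequality, since depending on the convention used for $d_{tv}$ one could pick up an extra factor of two; I would adopt the paper's convention so that $\vmax$ appears without a multiplier, which is exactly what is stated in the theorem. A second small subtlety is that the min over the two bounds follows only after independently deriving both decompositions, so I would present them as two parallel paragraphs rather than invoking an informal ``symmetry'' shortcut.
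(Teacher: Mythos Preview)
Your proposal is correct and follows essentially the same route as the paper: bound $|u(s)-\hat u(s)|\le \vmax\,\sperrbound$ via the TV--expectation inequality, integrate to get $|Z-\hat Z|\le \vmax\,\epsilon$, and then combine these in the quotient $u/Z-\hat u/\hat Z$. The only cosmetic difference is that the paper argues by a sign case split (first $p(s)>\hat p(s)$, then the reverse) while your add-and-subtract plus triangle inequality handles both cases at once; this is a mild streamlining but not a different idea.
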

\begin{proof}
First, we bound the estimated temporal difference error. Fix an arbitrary state $s\in \States$, it is sufficient the consider the case $u(s) > \hat{u}(s)$, then
\begin{align*}
& |u(s) - \hat{u}(s)| = u(s) - \hat{u}(s) \\
=&\EE_{(r,s')\sim \model^\pi} [r+\gamma v^\pi(s')]-\EE_{(r,s')\sim \hat{\model}^\pi}[r+\gamma v^\pi(s')]\\
=& \int_{s, r}^{} (r + \gamma v^\pi(s))(\model^\pi(s',r|s) - \hat{\model}^\pi(s',r|s)) ds'dr\\
\le & (\rmax + \gamma \frac{\rmax}{1-\gamma}) \int_{s, r}^{} (\model^\pi(s',r|s) - \hat{\model}^\pi(s',r|s)) ds'dr\\
\le & \vmax d_{tv}(\model^\pi(\cdot|s), \hat{\model}^\pi(\cdot|s)) \le \vmax \sperrbound
\end{align*}

Now, we show that $|Z-\hat{Z}|\le \vmax \epsilon$.

\begin{align*}
	|Z-\hat{Z}| &= |\int_{s\in \States} u(s)ds - \int_{s\in \States} \hat{u}(s)ds| = |\int_{s\in \States} (u(s) - \hat{u}(s)) ds| \\
	&\le \int_{s\in \States} |u(s) - \hat{u}(s)| ds \le \vmax \int_{s\in \States} \sperrbound ds = \vmax \epsilon
\end{align*}

Consider the case $p(s) > \hat{p}(s)$ first. 
\begin{align*}
p(s) - \hat{p}(s) &= \frac{u(s)}{Z} - \frac{\hat{u}(s)}{\hat{Z}} \\
&\le \frac{u(s)}{Z} - \frac{u(s) - \vmax \epsilon_s}{\hat{Z}} = \frac{{u}(s) \hat{Z} - u(s)Z + Z \vmax \sperrbound}{Z\hat{Z}} \\
& \le \frac{{u}(s) \vmax \epsilon + Z \vmax \sperrbound}{Z\hat{Z}} = \frac{\vmax (p(s)\epsilon + \sperrbound)}{\hat{Z}} 
\end{align*}
Meanwhile, below inequality should also hold: 
\begin{align*}
	p(s) - \hat{p}(s) &= \frac{u(s)}{Z} - \frac{\hat{u}(s)}{\hat{Z}} 
	\le \frac{\hat{u}(s) + \vmax \epsilon_s}{Z} - \frac{\hat{u}(s)}{\hat{Z}} \\
	& = \frac{\hat{u}(s) \hat{Z} - \hat{u}(s)Z + \hat{Z} \vmax \sperrbound}{Z\hat{Z}} \le \frac{\vmax (\hat{p}(s)\epsilon + \sperrbound)}{Z} 
\end{align*}
Because both the two inequalities must hold, when $p(s) - \hat{p}(s) > 0$, we have:
\[
p(s) - \hat{p}(s) \le \min(\frac{\vmax (p(s)\epsilon + \sperrbound)}{\hat{Z}}, \frac{\vmax (\hat{p}(s)\epsilon + \sperrbound)}{Z} )
\]
It turns out that the bound is the same when $p(s) \le \hat{p}(s)$. This completes the proof. 
\end{proof}


\subsection{High Power Loss Functions}\label{sec:appendix-highpower}

We would like to point out that directly using a high power objective in general problems is unlikely to have an advantage. 

First, notice that our convergence rate is characterized w.r.t. to the expected updating rule, not stochastic gradient updating rule. When using a stochastic sample to estimate the gradient, high power objectives are sensitive to the outliers as they augment the effect of noise. Robustness to outliers is also the motivation behind the Huber loss~\cite{huber1964} which, in fact, uses low power error in most places so it can be less sensitive to outliers. 

We conduct experiments to examine the effect of noise on using high power objectives. We use the same dataset as described in Section~\ref{sec:effects-limitations}. We use a training set with $4$k training examples. The naming rules are as follows. \textbf{Cubic} is minimizing the cubic objective (i.e. $\min_\theta \frac{1}{n}\sum_{i=1}^{n} |f_\theta(x_i) - y_i|^3$) by uniformly sampling, and \textbf{Power4} is $\min_\theta \frac{1}{n}\sum_{i=1}^{n} (f_\theta(x_i) - y_i)^4$ by uniformly sampling. 

Figure~\ref{fig:sinhc-highpower} (a)(b) shows the learning curves of uniformly sampling for Cubic and for Power4 trained by adding noises with standard deviation $\sigma=0.1, 0.5$ respectively to the training targets. It is not surprising that all algorithms learn slower when we increase the noise variance added to the target variables. However, one can see that \emph{high power objectives is more sensitive to noise variance added to the targets than the regular L2}: when $\sigma=0.1$, the higher power objectives perform better than the regular L2; after increasing $\sigma$ to $0.5$, Cubic becomes almost the same as L2, while Power4 becomes worse than L2. 

Second, it should be noted that in our theorem, we do not characterize the convergence rate to the minimum; instead, we show the convergence rate to a certain low error solution, corresponding to early learning performance. In optimization literature, it is known that cubic power would converge slower to the minimizer as it has a relatively flat bottom. However, it may be an interesting future direction to study how to combine objectives with different powers so that optimizing the hybrid objective leads to a faster convergence rate to the optimum and is robust to outliers. 

\begin{figure}[t]
	\centering
	\subfigure[$\sigma=0.1$]{
		\includegraphics[width=\figwidththree]{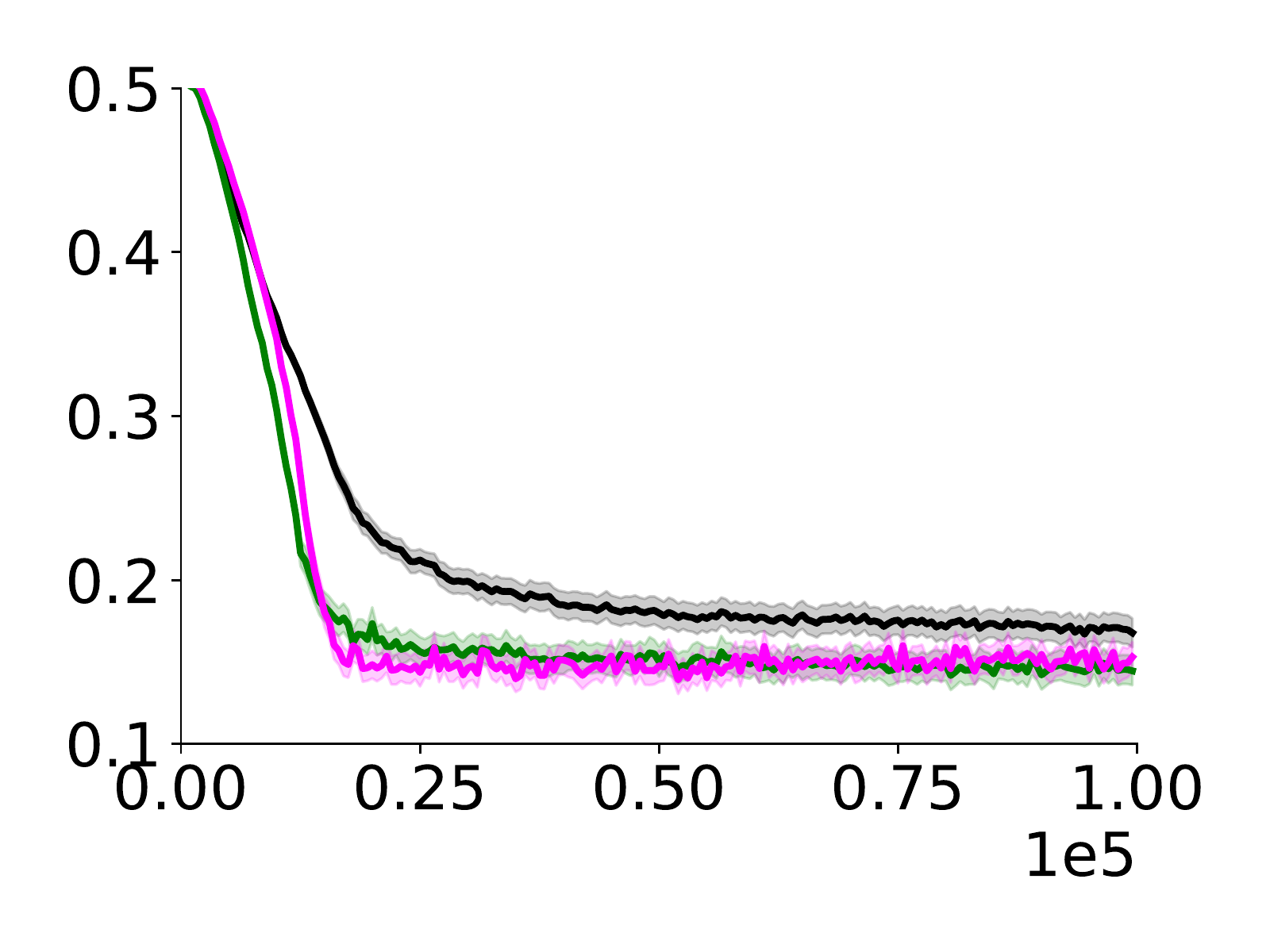}}
	\subfigure[$\sigma=0.5$]{
		\includegraphics[width=\figwidththree]{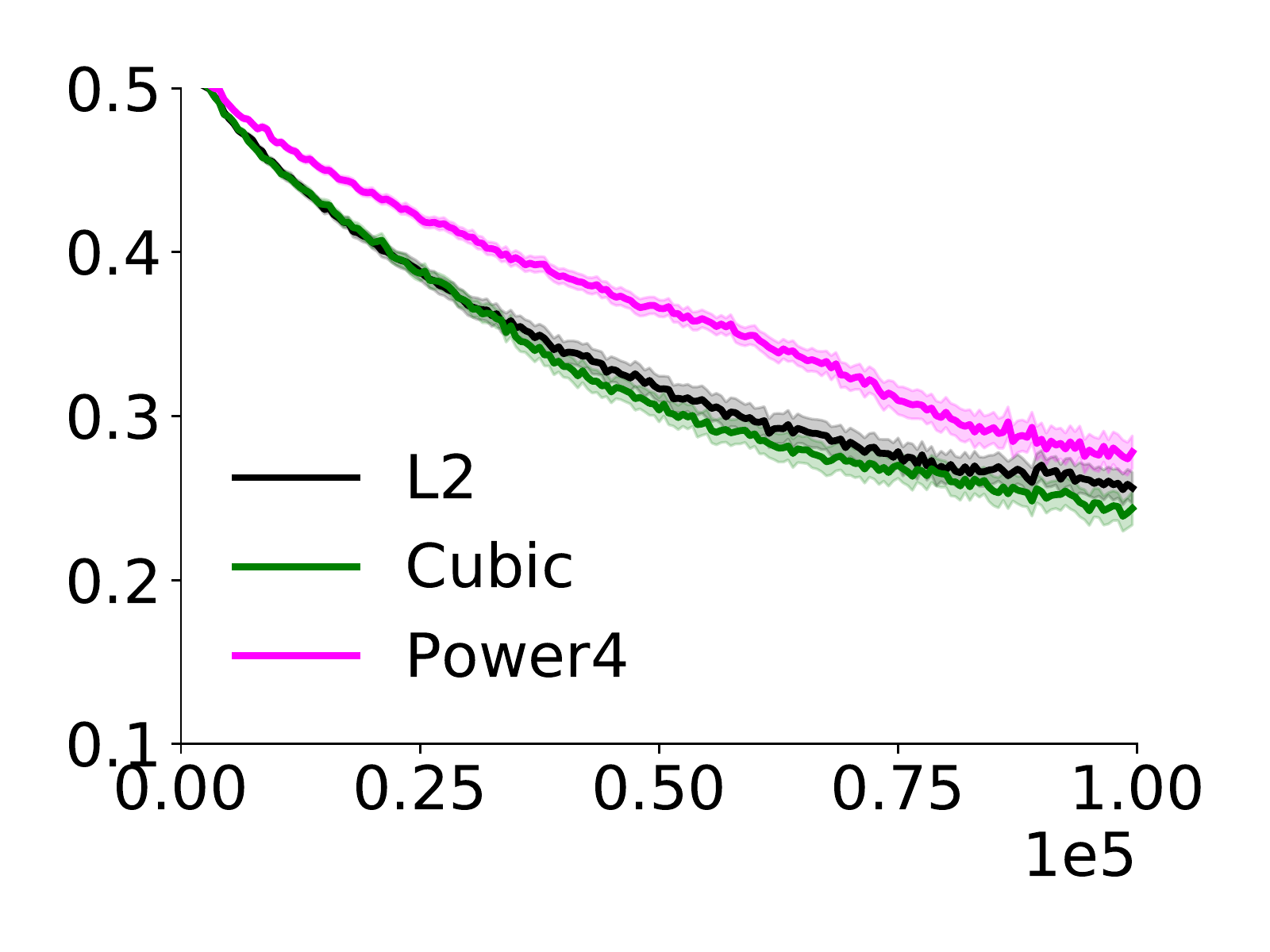}}
	\caption{
		Figure(a)(b) show the testing RMSE as a function of number of mini-batch updates with increasing noise standard deviation $\sigma$ added to the training targets. We compare the performances of \textcolor{magenta}{\textbf{Power4(magenta)}}, \textcolor{black}{\textbf{L2 (black)}}, \textcolor{ForestGreen}{\textbf{Cubic (forest green)}}. The results are averaged over $50$ random seeds. The shade indicates standard error. Note that the testing set is not noise-contaminated.
	}\label{fig:sinhc-highpower}
\end{figure}



\subsection{Additional Experiments}\label{sec:additional-experiments}

In this section, we include the following additional experimental results:

\begin{enumerate}
	\item As a supplementary to Figure~\ref{fig:sin-square-limitations} from Section~\ref{sec:effects-limitations}, we show the learning performance measured by training errors to show the negative effects of the two limitations.
	\item Empirical verification of Theorem~\ref{thm-cubic-square-eq} (prioritized sampling and uniform sampling on cubic power equivalence).
	\item Additional results on discrete domains~\ref{sec:appendix-discrete-plan5}.
	\item Results on an autonomous driving application~\ref{sec:appendix-auto-driving}. 
	\item Results on MazeGridWorld from~\citet{pan2020frequencybased}.
\end{enumerate}

\subsubsection{Training Error Corresponding to Figure~\ref{fig:sin-square-limitations} from Section~\ref{sec:effects-limitations}}

Note that our Theorem~\ref{thm-cubic-square-eq} and ~\ref{thm-hitting-time} characterize the expected gradient calculated on the training set; hence it is sufficient to examine the learning performances measured by training errors. However, the testing error is usually the primary concern, so we put the testing error in the main body. As a sanity check, we also investigate the learning performances measured by training error and find that those algorithms behave similarly as shown in Figure~\ref{fig:sin-square-limitations-supp} where the algorithms are trained by using training sets with decreasing training examples from (a) to (b).  As we reduce the training set size, Full-PrioritizedL2 is closer to L2. Furthermore, PrioritizedL2 is always worse than Full-PrioritizedL2. These observations show the negative effects resulting from the issues of outdated priorities and insufficient sample space coverage.

\begin{figure*}[t]
	\centering
	\subfigure[$|\trainset|=4000$]{
		\includegraphics[width=\figwidththree]{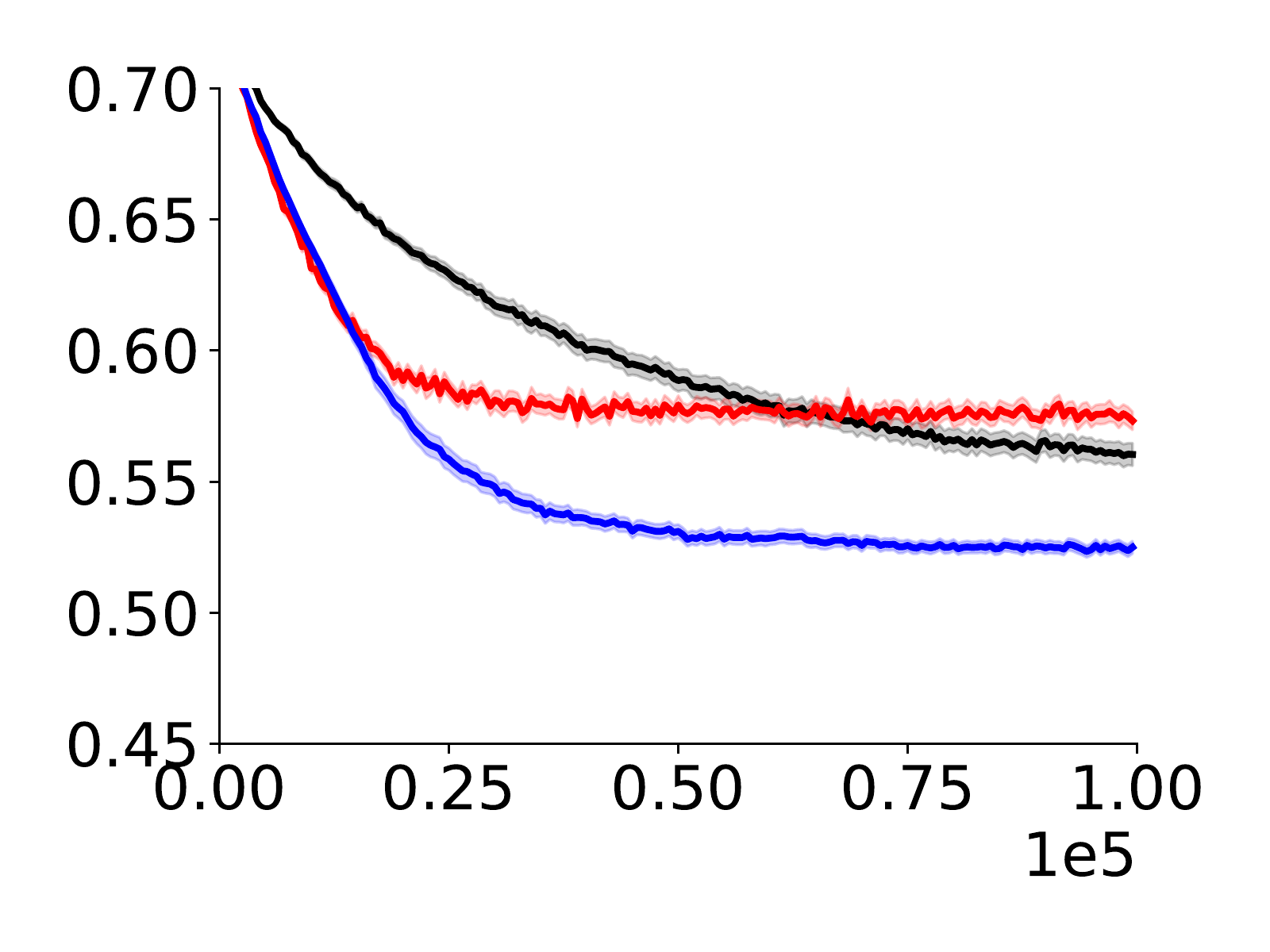}}
	\subfigure[$|\trainset|=400$]{
		\includegraphics[width=\figwidththree]{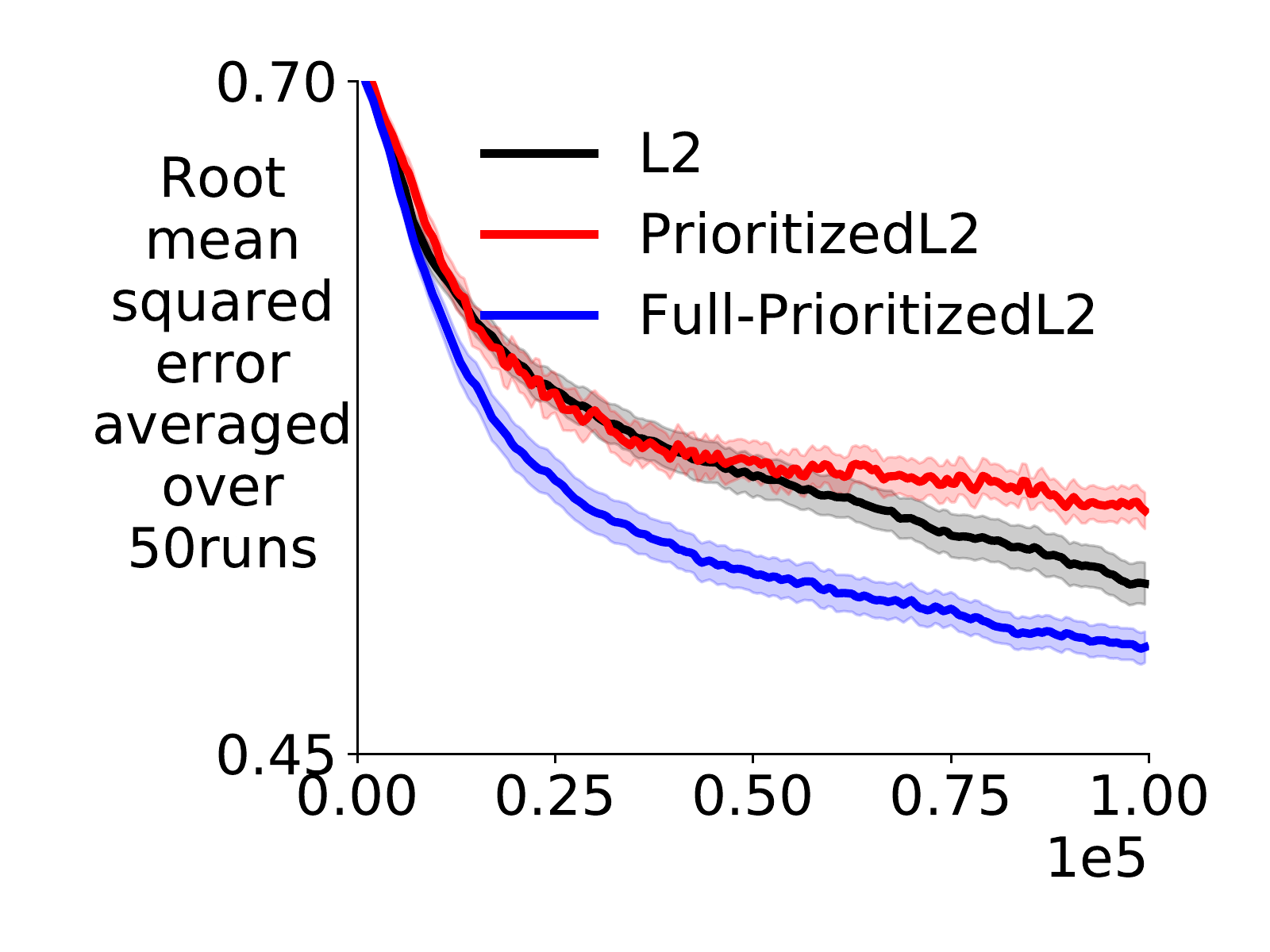}}
	\caption{
		Figure (a)(b) show the training RMSE as a function of number of mini-batch updates with a training set containing $4$k examples and another containing $400$ examples respectively. We compare the performances of \textcolor{blue}{\textbf{Full-PrioritizedL2 (blue)}}, \textcolor{black}{\textbf{L2 (black)}}, and \textcolor{red}{\textbf{PrioritizedL2 (red)}}. The results are averaged over $50$ random seeds. The shade indicates standard error. 
	}\label{fig:sin-square-limitations-supp}
\end{figure*}

\subsubsection{Empirical verification of Theorem~\ref{thm-cubic-square-eq}}

Theorem~\ref{thm-cubic-square-eq} states that the expected gradient of doing prioritized sampling on mean squared error is equal to the gradient of doing uniformly sampling on cubic power loss. As a result, we expect that the learning performance on the training set (note that we calculate gradient by using training examples) should be similar when we use a large mini-batch update as the estimate of the expectation terms become close. 

We use the same dataset as described in Section~\ref{sec:effects-limitations} and keep using training size $4$k. Figure~\ref{fig:sinhc-verify-thm1}(a)(b) shows that when we increase the mini-batch size, \emph{the two algorithms Full-PrioritizedL2 and Cubic are becoming very close to each other}, verifying our theorem. 


\begin{figure*}[t]
	\centering
	\subfigure[b=128, $\sigma=0.5$]{
		\includegraphics[width=\figwidthfour]{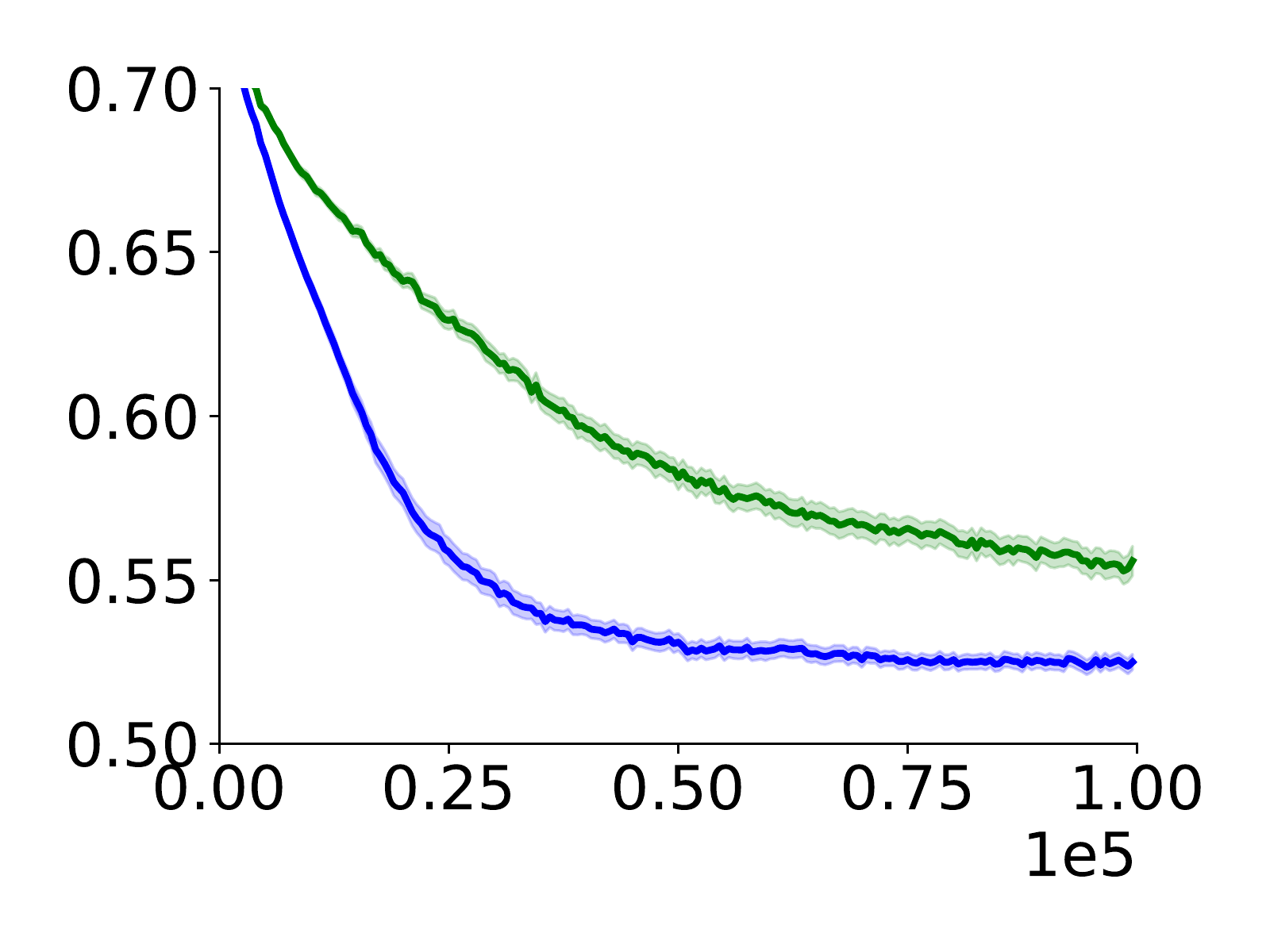}}
	\subfigure[b=512, $\sigma=0.5$]{
		\includegraphics[width=\figwidthfour]{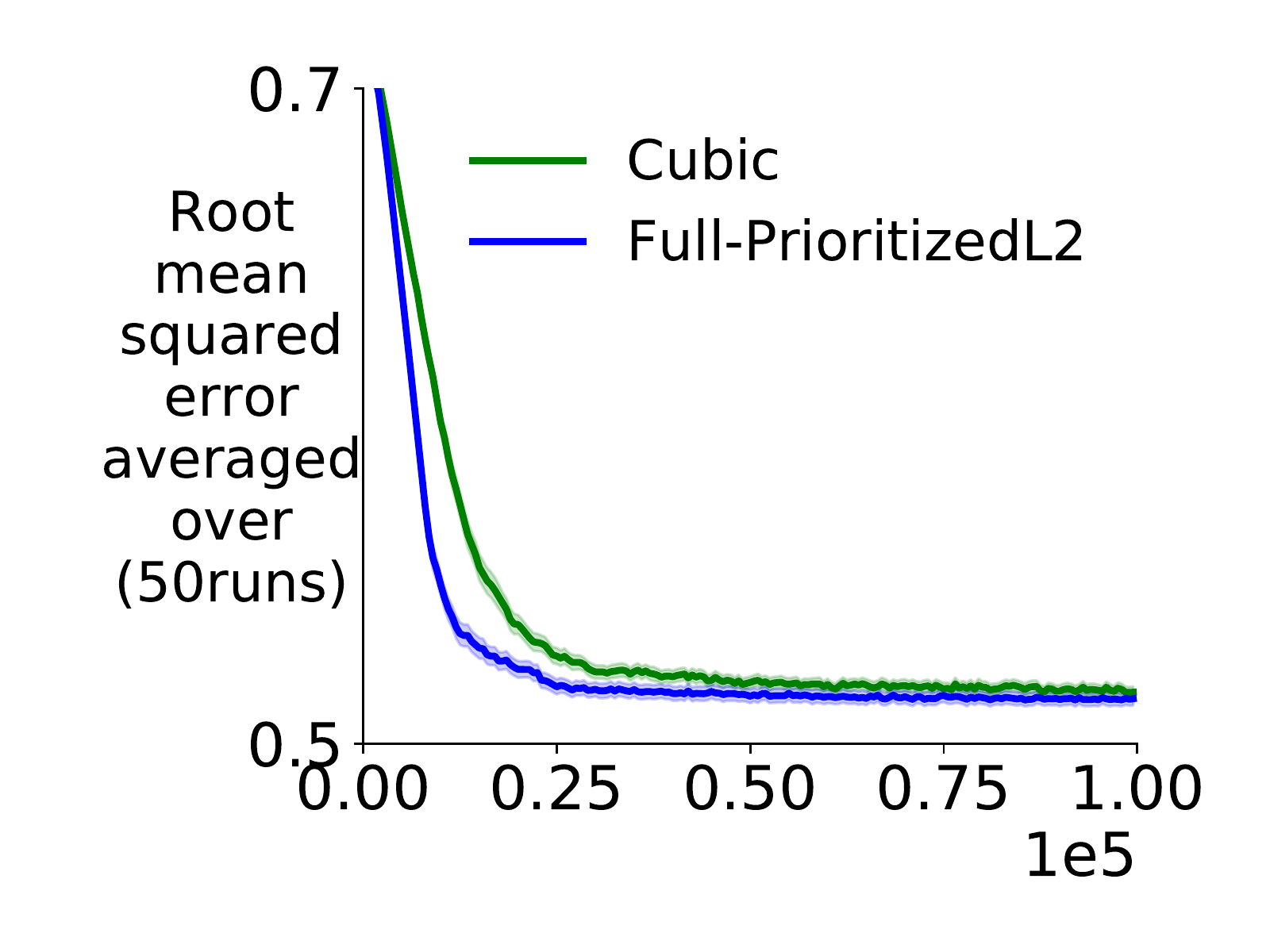}}
	\subfigure[b=128, $\sigma=0.5$]{
		\includegraphics[width=\figwidthfour]{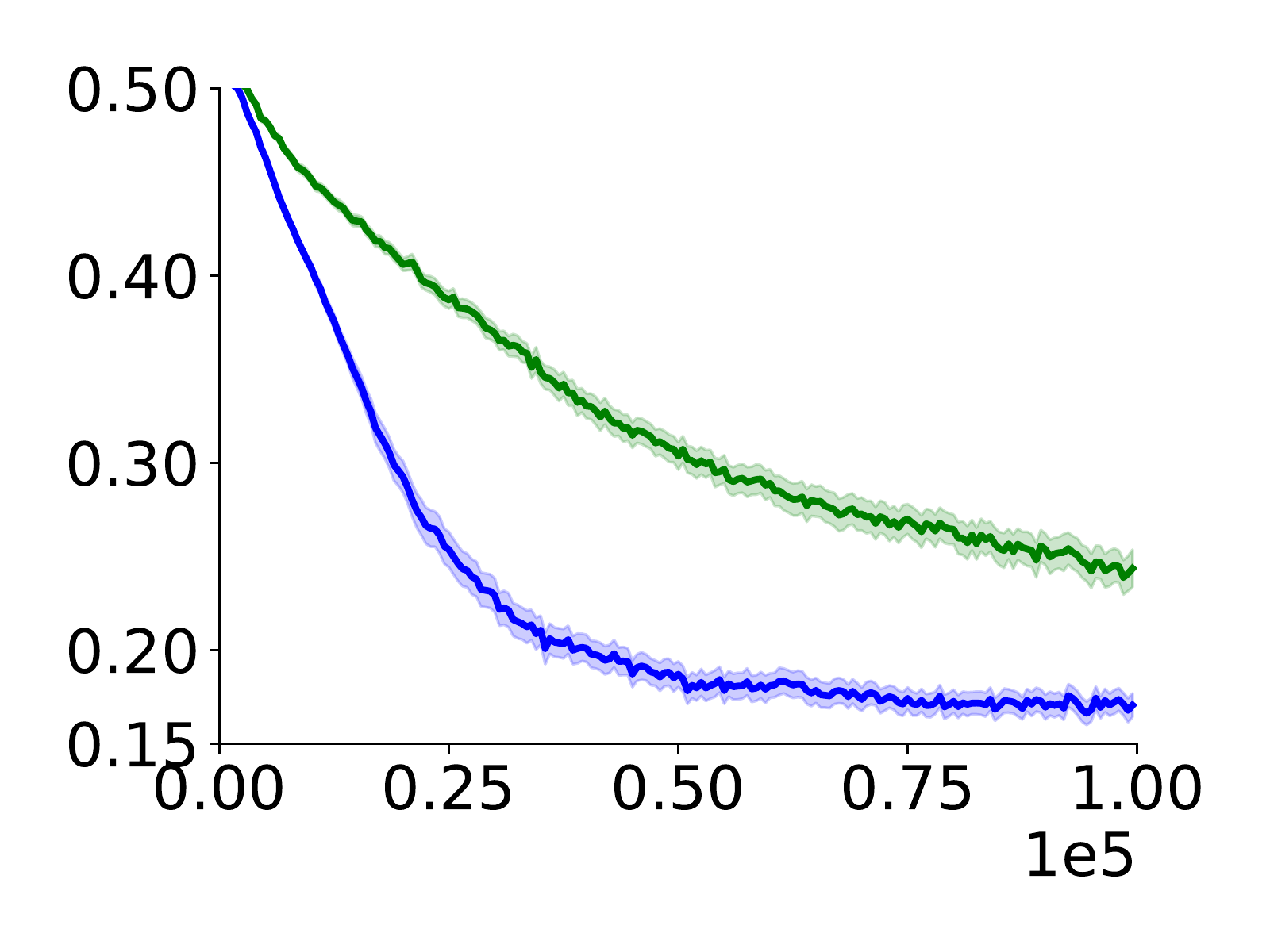}}
	\subfigure[b=512, $\sigma=0.5$]{
		\includegraphics[width=\figwidthfour]{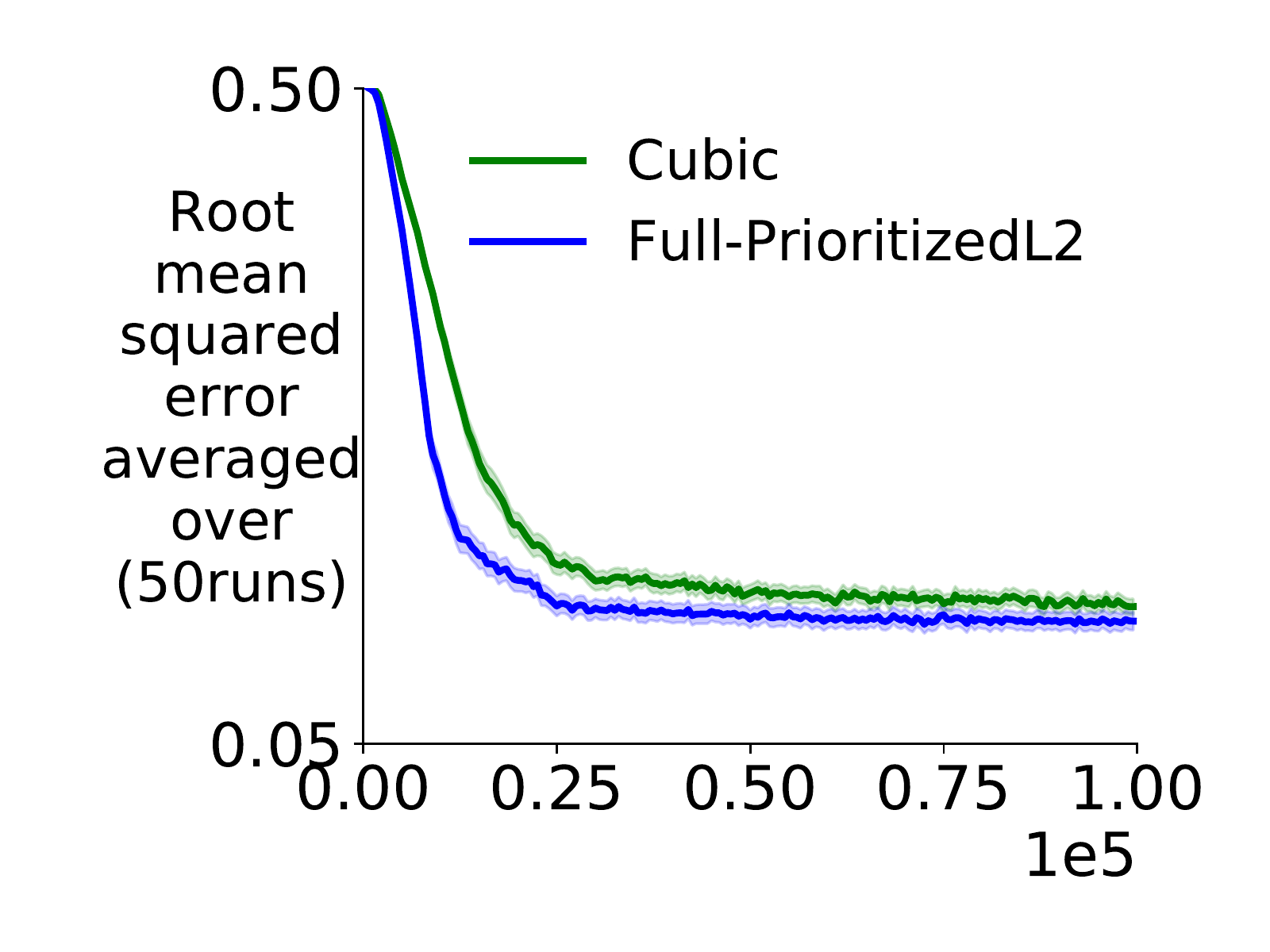}}
	\caption{
		Figure(a)(b) show the training RMSE as a function of number of mini-batch updates with increasing mini-batch size $b$. Figure (c)(d) show the testing RMSE. We compare the performances of \textcolor{blue}{\textbf{Full-PrioritizedL2 (blue)}}, \textcolor{ForestGreen}{\textbf{Cubic (forest green)}}. As we increase the mini-batch size, the two performs more similar to each other.
		 The results are averaged over $50$ random seeds. The shade indicates standard error. 
	}\label{fig:sinhc-verify-thm1}
\end{figure*} 

Note that our theorem characterizes the expected gradient calculated on the training set; hence it is sufficient to examine the learning performances measured by training errors. However, usually, the testing error is the primary concern. For completeness, we also investigate the learning performances measured by testing error and find that the tested algorithms behave similarly as shown in Figure~\ref{fig:sinhc-verify-thm1}(c)(d).  

\subsubsection{Additional Results on Discrete Benchmark Domains}\label{sec:appendix-discrete-plan5}

Figure~\ref{fig:discrete-planstep5} shows the empirical results of our algorithm on the discrete domains with plan steps $=5$.

\begin{figure}
	\subfigure[MountainCar]{
		\includegraphics[width=\figwidthfour]{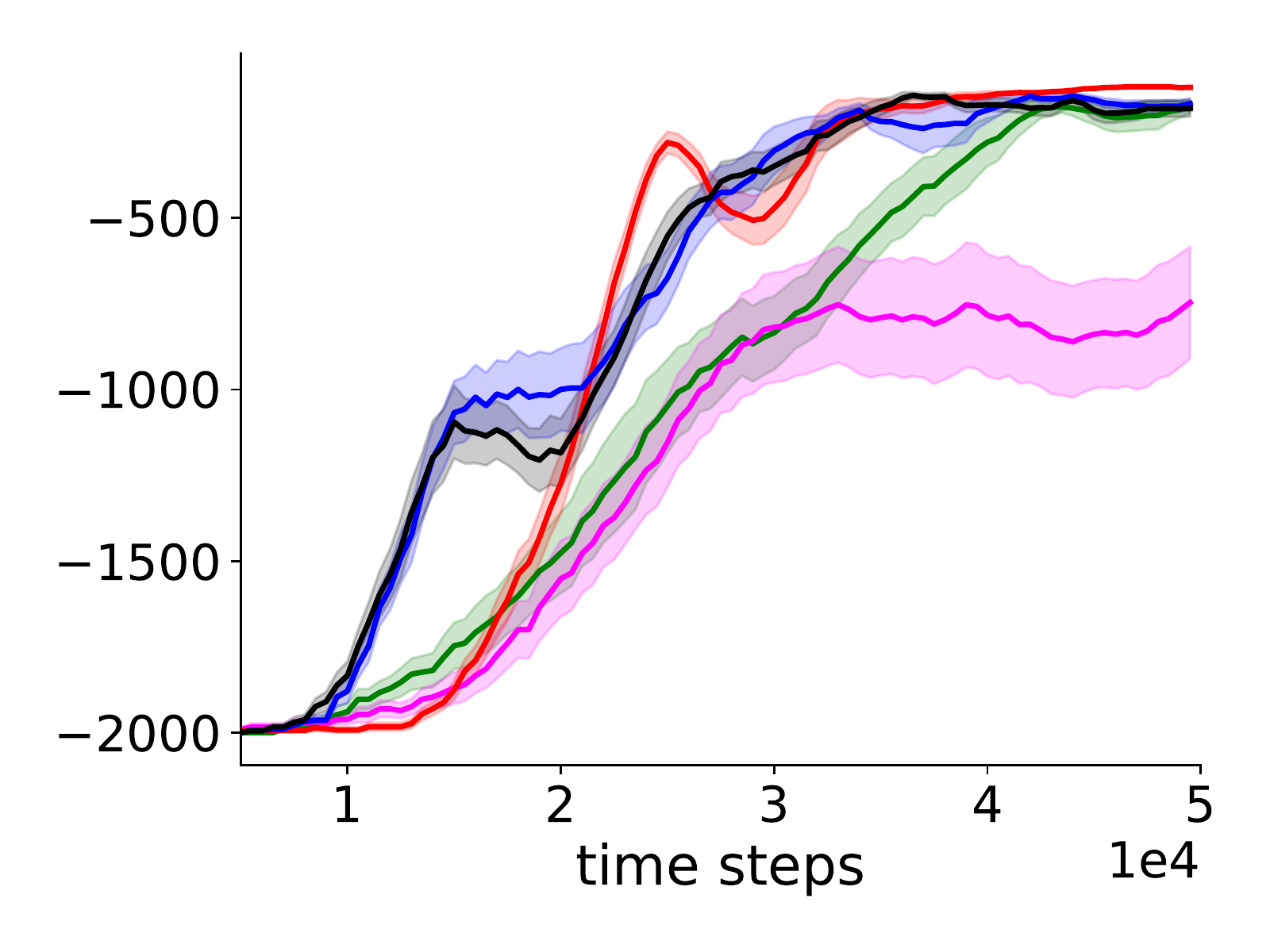}}
	\subfigure[Acrobot]{
		\includegraphics[width=\figwidthfour]{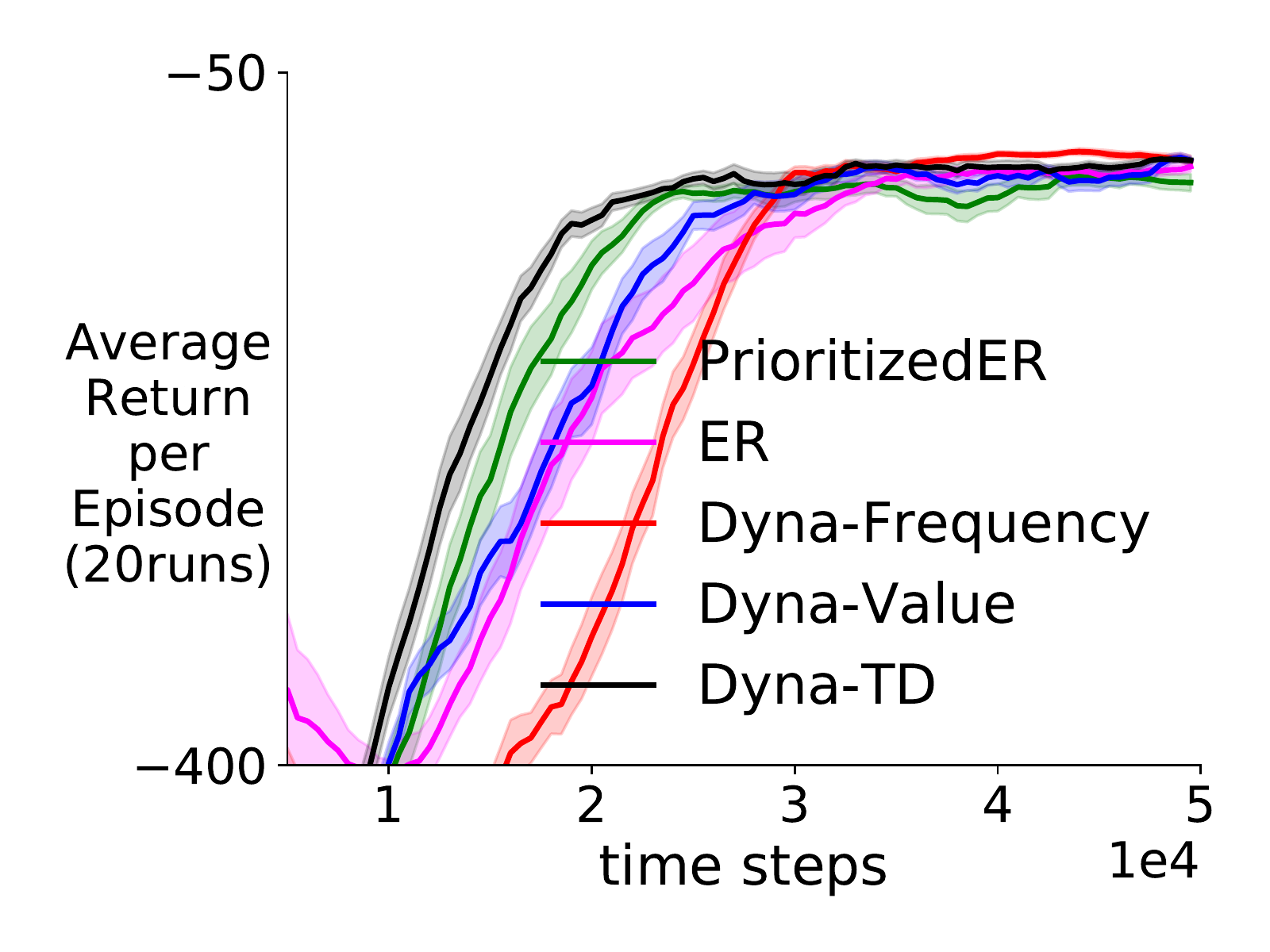}}
	\subfigure[GridWorld]{
		\includegraphics[width=\figwidthfour]{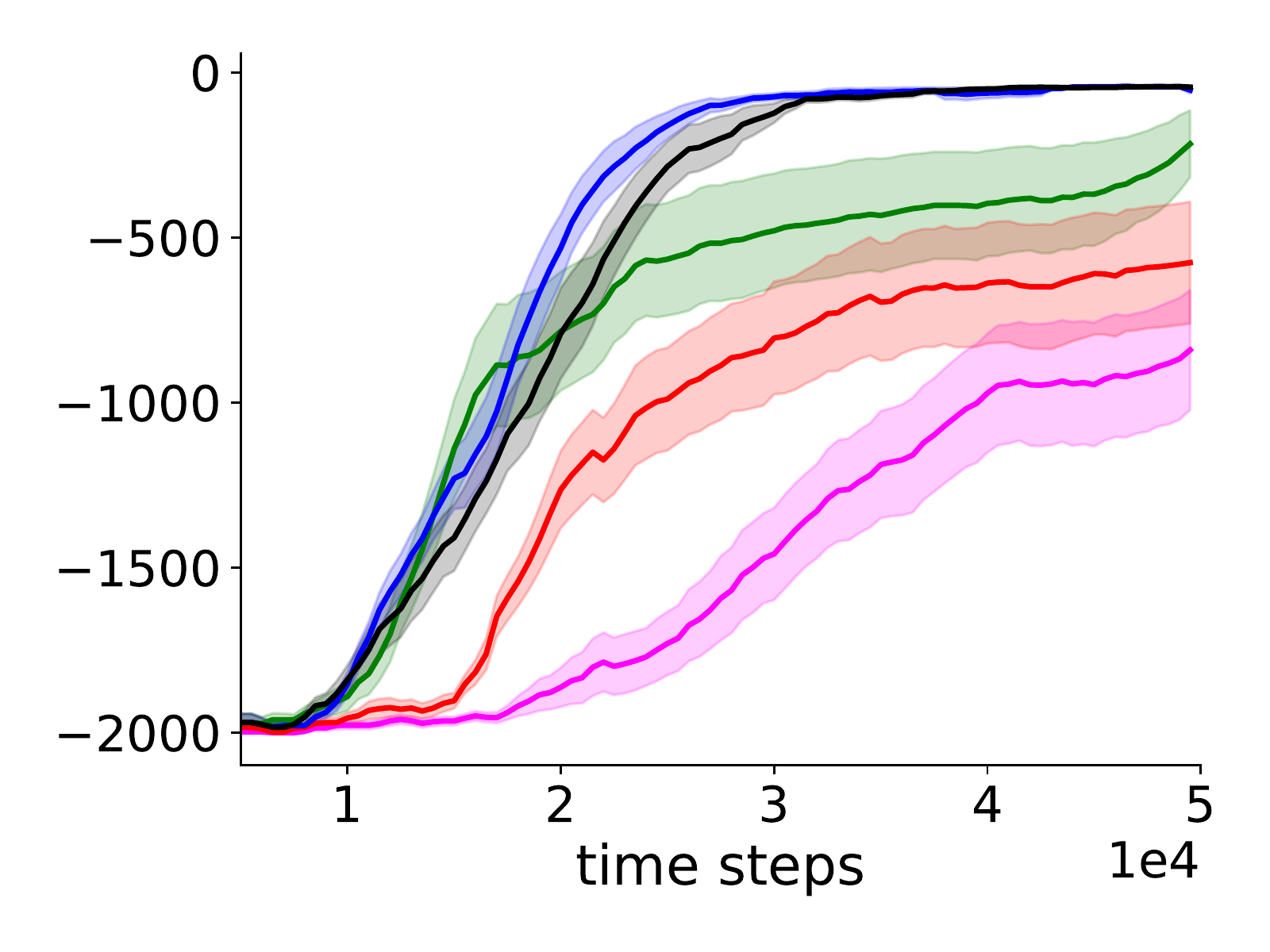} }
	\subfigure[CartPole]{
		\includegraphics[width=\figwidthfour]{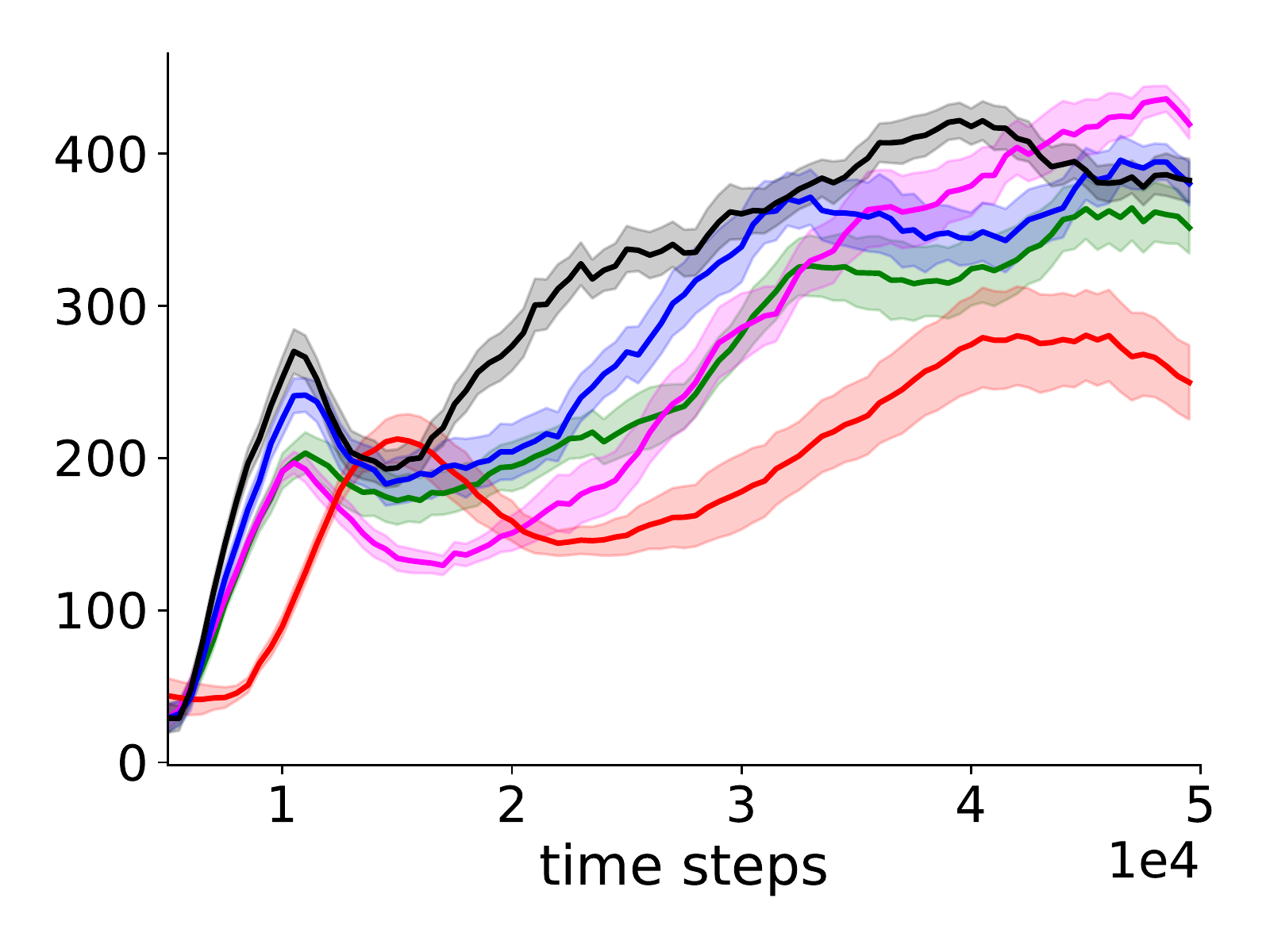} }
	\caption{
		\small
		Episodic return v.s. environment time steps. We show evaluation learning curves of \textcolor{black}{\textbf{Dyna-TD (black)}}, \textcolor{red}{\textbf{Dyna-Frequency (red)}}, \textcolor{blue}{\textbf{Dyna-Value (blue)}}, \textcolor{ForestGreen}{\textbf{PrioritizedER (forest green)}}, and \textcolor{magenta}{\textbf{ER(magenta)}} with planning updates $n=5$. 
	}\label{fig:discrete-planstep5}
\end{figure}

\if0
\subsubsection{Sampling Distribution during Late Learning Stage}\label{sec:appendix-samplingdist} 

We visualize the sampling distributions of Dyna-TD/Value/Frequency during late learning stage. Their corresponding entropy are: $4.5, 4.2, 3.9$. The small entropy indicates a high concentration of density. 

\begin{figure*}
	\centering
	\subfigure[Dyna-TD]{
		\includegraphics[width=\figwidthfour]{figures2022/ModelDQN-LogTDHC_scqueueplan30converged}}
	\subfigure[Dyna-Value]{
		\includegraphics[width=\figwidthfour]{figures2022/ModelDQN-ValueHC_scqueueplan30converged}}
	\subfigure[Dyna-Frequency]{
		\includegraphics[width=\figwidthfour]{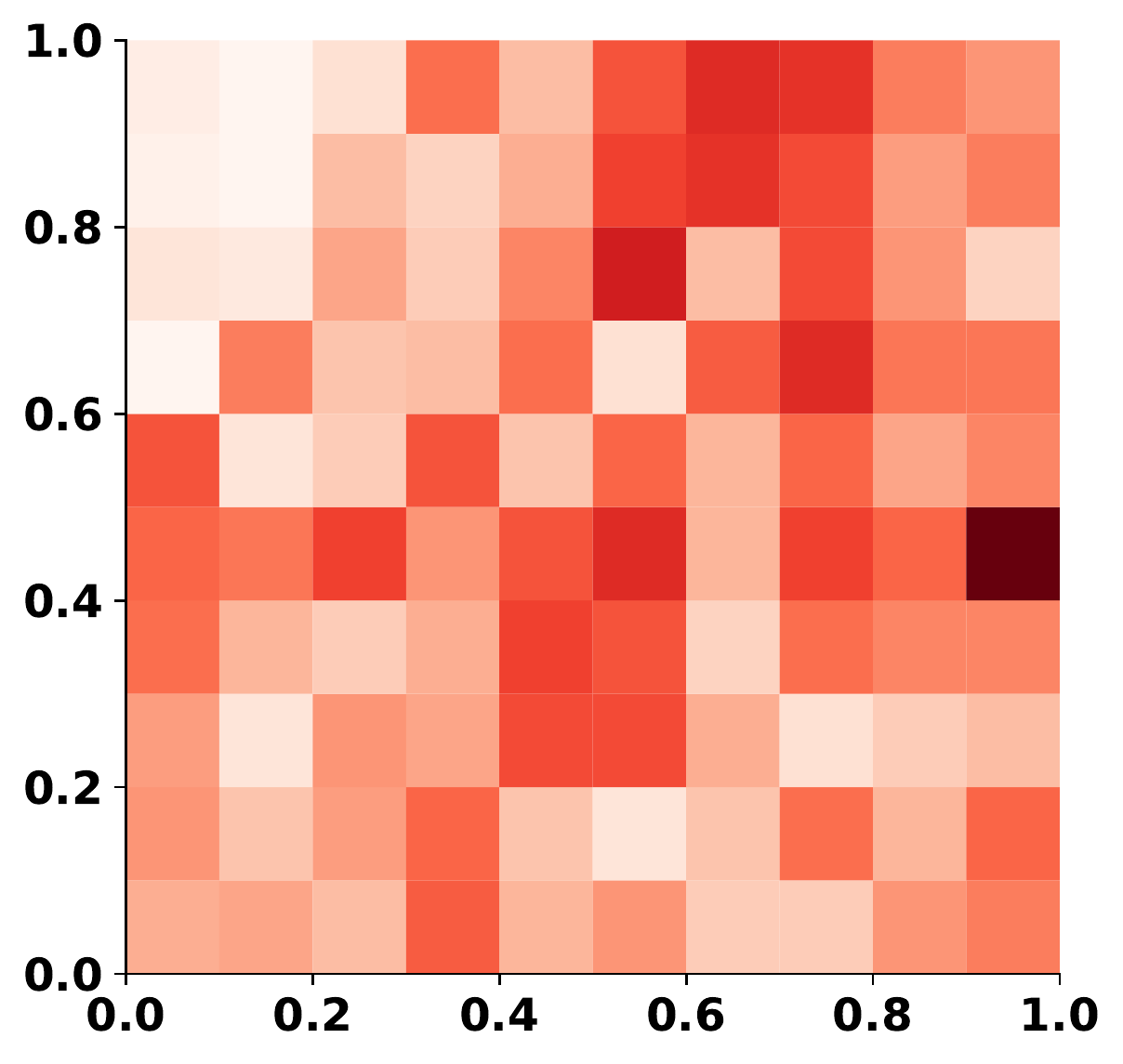}}
	\vspace{-0.3cm}
	\caption{
	Sampling distributions on the GridWorld. 
	}\label{fig:gd-check-statedist-late}
	\vspace{-0.2cm}
\end{figure*}
\fi

\subsubsection{Autonomous Driving Application} \label{sec:appendix-auto-driving}

We study the practical utility of our method in a relatively large autonomous driving application~\citep{highway-env} with an online learned model. We use the roundabout-v0 domain (Figure~\ref{fig:roundabout} (a)). The agent learns to go through a roundabout by lane change and longitude control. The reward is designed such that the car should go through the roundabout as fast as possible without collision. We observe that all algorithms perform similarly when evaluating algorithms by episodic return (Figure~\ref{fig:roundabout} (d)). In contrast, there is a significantly lower number of car crashes with the policy learned by our algorithm, as shown in Figure~\ref{fig:roundabout}(b). Figure~\ref{fig:roundabout} (c) suggests that ER and PrioritizedER gain reward mainly due to fast speed which potentially incur more car crashes. The conventional prioritized ER method still incurs many crashes, which may indicate its prioritized sampling distribution does not provide enough crash experiences to learn. 

\begin{figure}
	\vspace{-0.2cm}	
	\centering
	\subfigure[roundabout]{
		\includegraphics[width=0.23\textwidth]{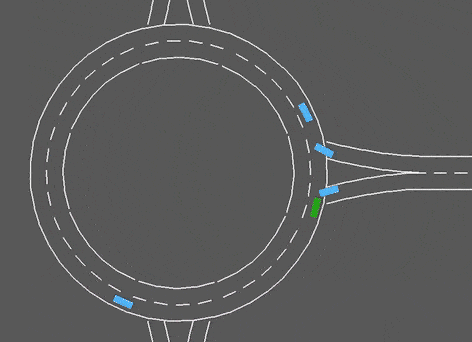}}
	\subfigure[Num of car crashes]{
		\includegraphics[width=0.25\textwidth]{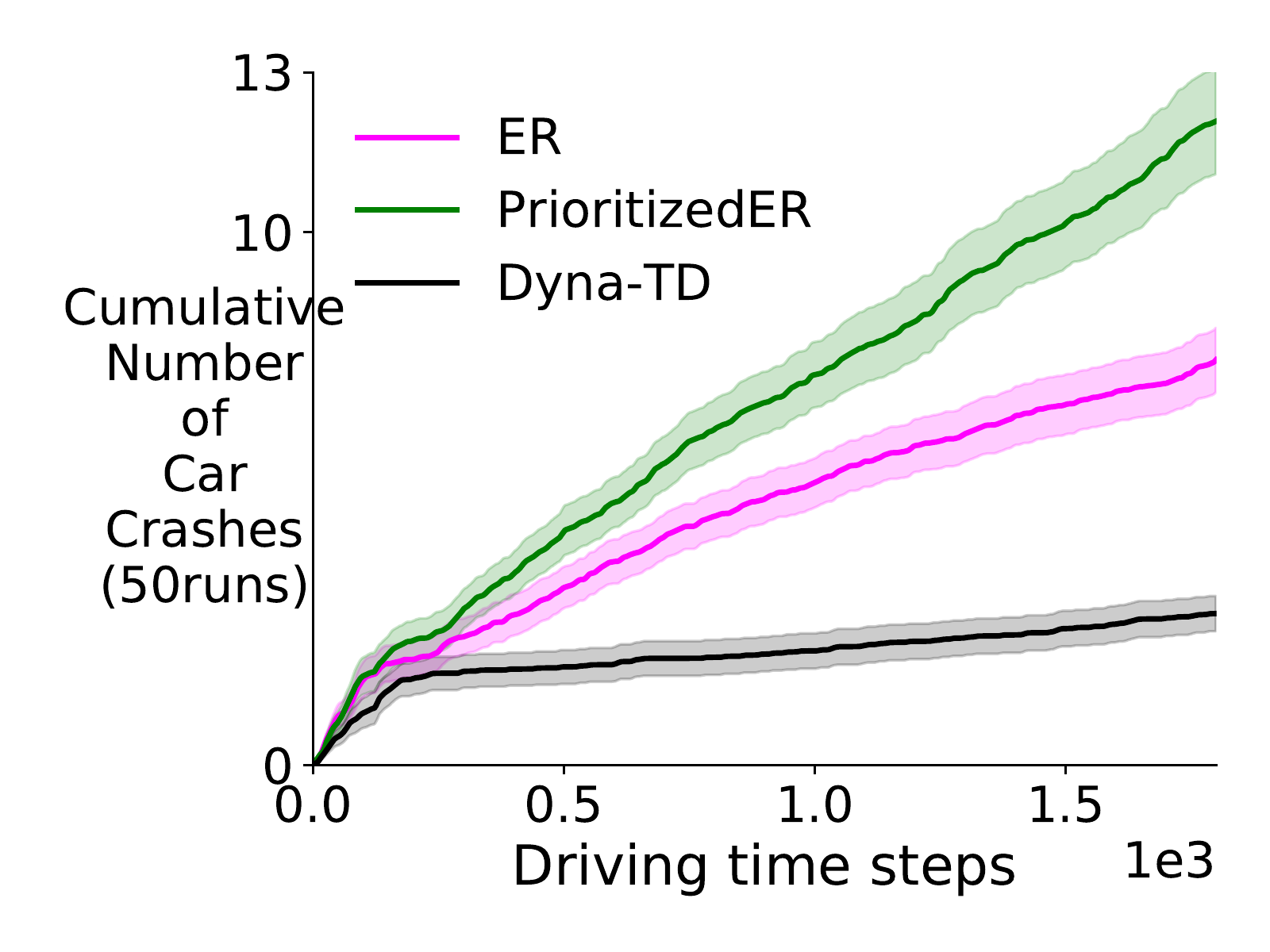}}
	\subfigure[Avg. speed]{
		\includegraphics[width=0.25\textwidth]{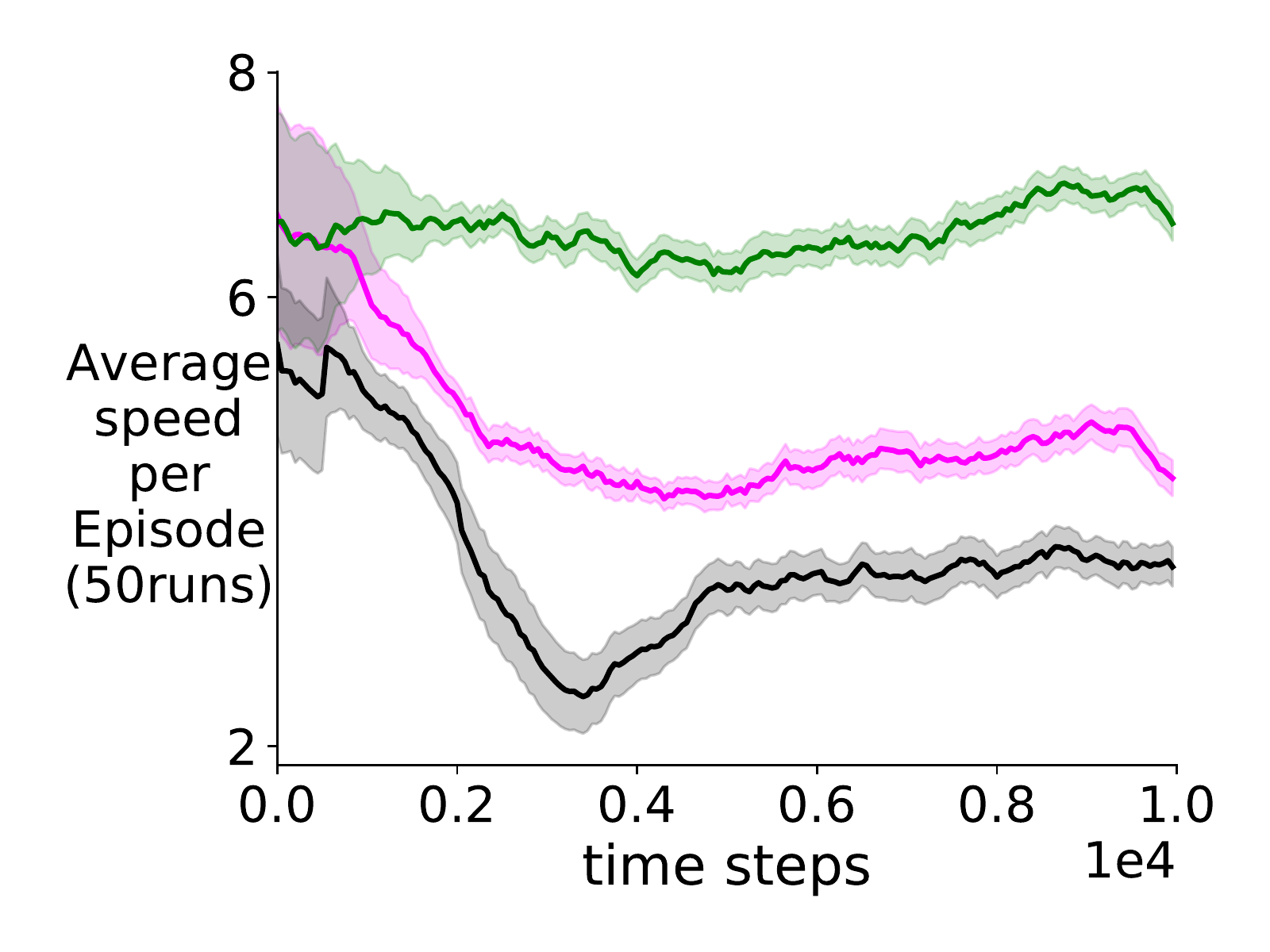}}
	\subfigure[Episodic return]{
		\includegraphics[width=\figwidthfour]{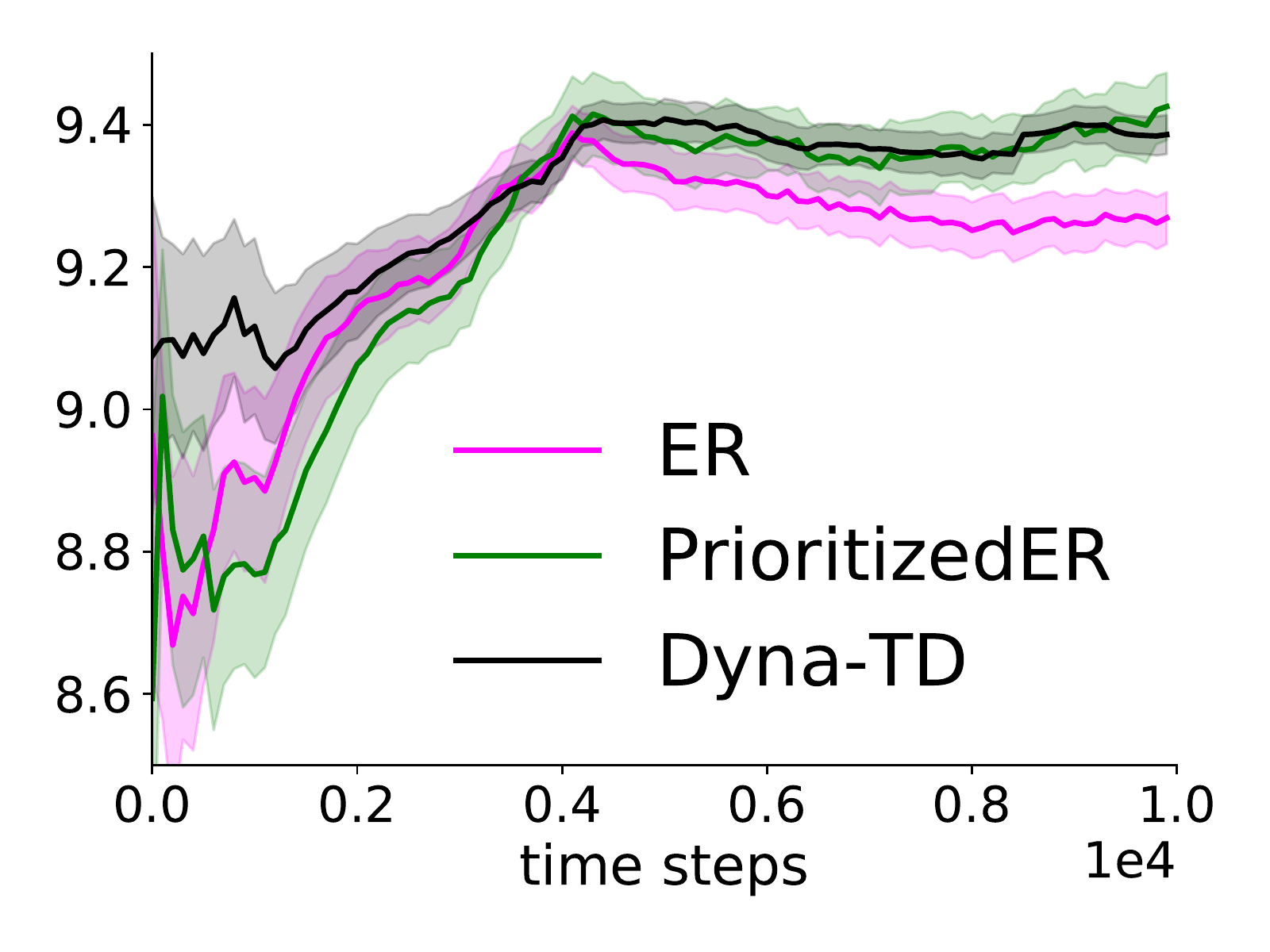}}
		\caption{
			(a) shows the roundabout domain with $\States \subset \RR^{90}$. (b) shows crashes v.s. total driving time steps during policy evaluation. (c) shows the average speed per evaluation episode v.s. environment time steps. (d) shows the episodic return v.s. trained environment time steps. We show \textcolor{black}{\textbf{Dyna-TD (black)}} with an online learned model, \textcolor{ForestGreen}{\textbf{PrioritizedER (forest green)}}, and \textcolor{magenta}{\textbf{ER (magenta)}}. Results are averaged over $50$ random seeds after smoothing over a window of size $30$. The shade indicates standard error.
		}\label{fig:roundabout}
	\vspace{-0.3cm}
\end{figure}

\subsubsection{Results on MazeGridWorld Domain}

In Figure~\ref{fig:maze-gridworld}, we demonstrate that our algorithm can work better than Dyna-Frequency on a MazeGridWorld domain~\cite{pan2020frequencybased}, where Dyna-Frequency was shown to be superior to Dyna-Value and model-free baselines. This result further confirms the usefulness of our sampling approach. 

\begin{figure*}[t]
	\centering
	\subfigure[MazeGridWorld]{
		\includegraphics[width=\figwidthfour]{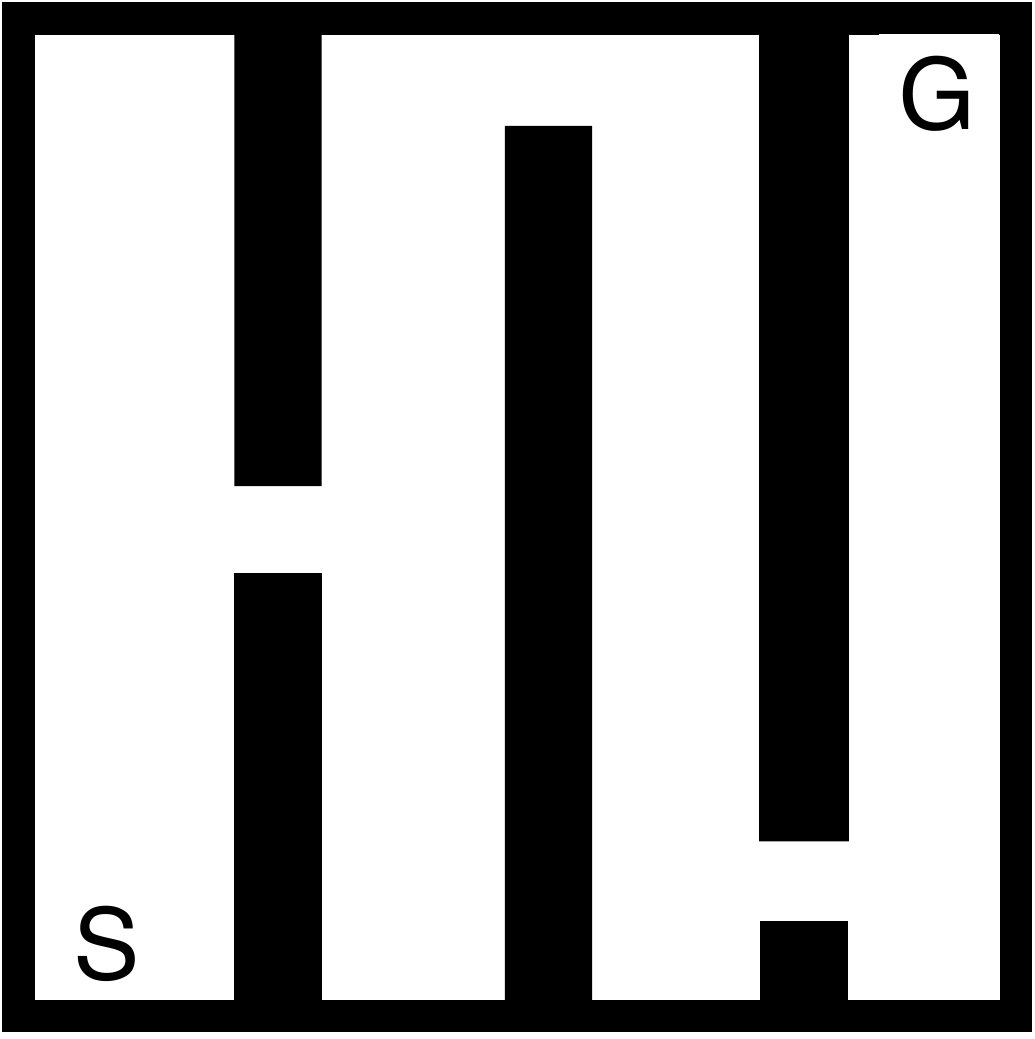}}
	\subfigure[MazeGW, $n=30$]{
		\includegraphics[width=\figwidththree]{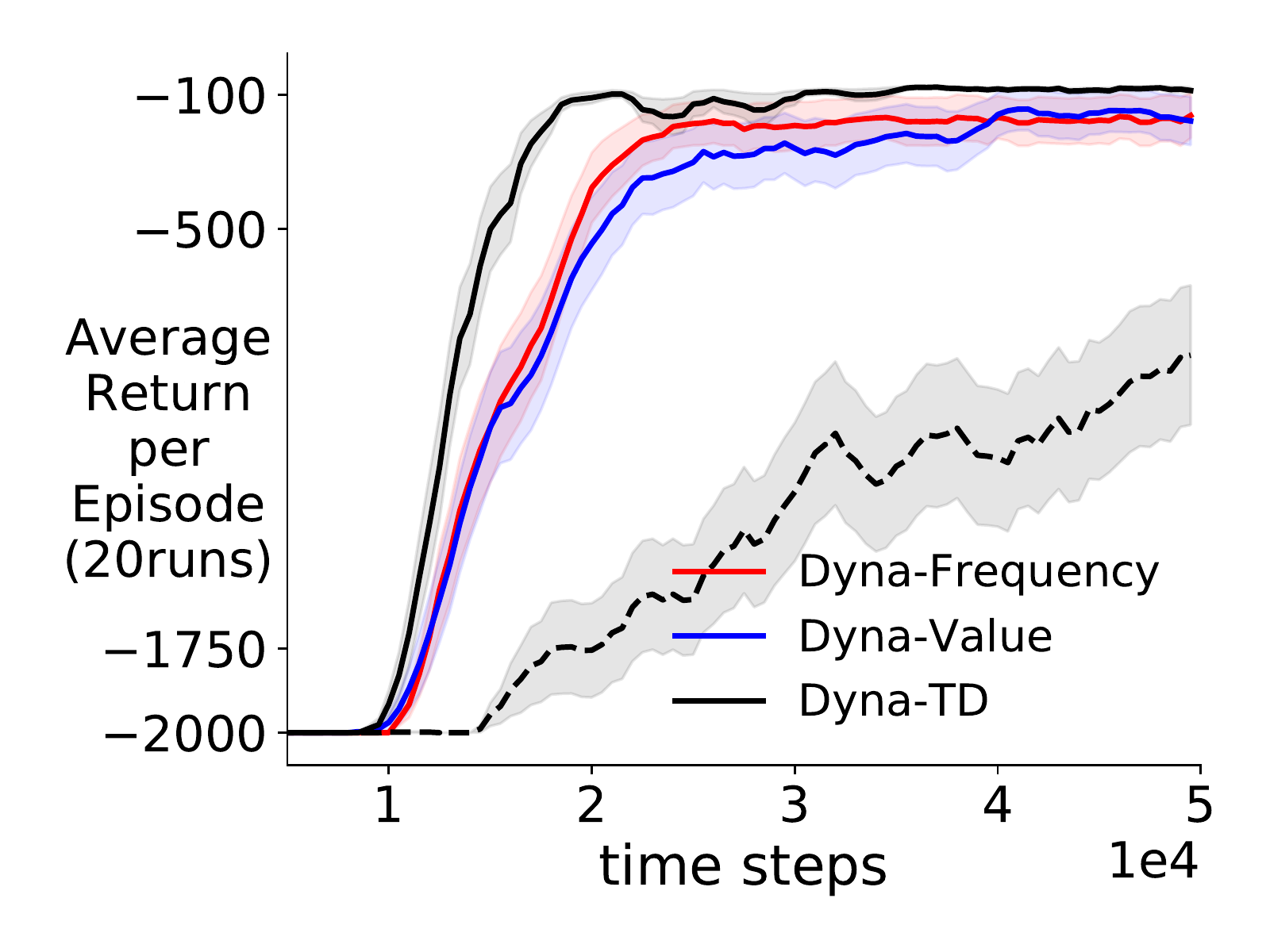}}
	\caption{
		Figure(a) shows MazeGridWorld(GW) taken from \citet{pan2020frequencybased} and the learning curves are in (b). We show evaluation learning curves of \textcolor{black}{\textbf{Dyna-TD (black)}}, \textcolor{red}{\textbf{Dyna-Frequency (red)}}, and \textcolor{blue}{\textbf{Dyna-Value (blue)}}. The dashed line indicates Dyna-TD trained with an online learned model. All results are averaged over $20$ random seeds after smoothing over a window of size $30$. The shade indicates standard error. 
	}\label{fig:maze-gridworld}
\end{figure*}


\if0
\textbf{Learning curve in terms of training error corresponding to Figure~\ref{fig:sin-square-varyingminibatch}}. In Section~\ref{sec:prioritizedsampling-cubic}, we show the learning curve in terms of testing error. We now show training error to closely match our theoretical result~\ref{thm-cubic-square-eq}. As a supplement, we include the learning curve in terms of testing error in Figure~\ref{fig:sin-square-varyingminibatch-train}. 

\begin{figure*}[t]
\centering
\includegraphics[width=1.0\linewidth]{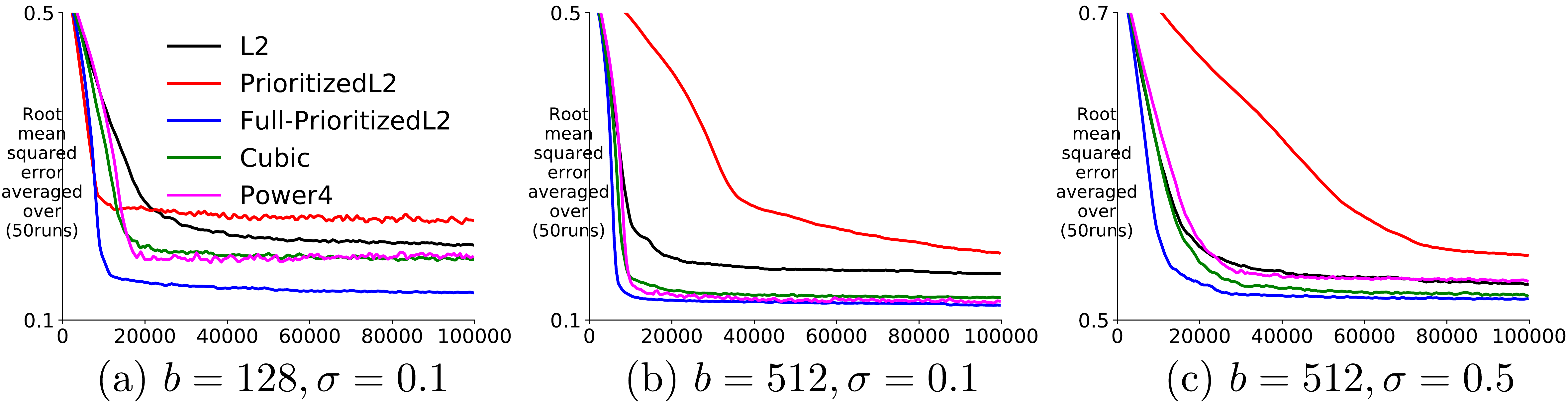}
\caption{
\small
Figure(a)(b)(c) show the training RMSE as a function of number of mini-batch updates with different mini-batch sizes or Guassian noises with different $\sigma$ added to the training targets. The results are averaged over $50$ random seeds. The standard error is small enough to get ignored.  }
\label{fig:sin-square-varyingminibatch-train}
\end{figure*}

\textbf{Learning curve with a larger neural network on the sin dataset.} We try to eliminate the effect of neural network size. Hence we use a larger neural network size ($128\times 128$ tanh units) on the same sin dataset. As one can see from Figure~\ref{fig:sin-square-varyingminibatch-train-large}, FullPrioritizedL2 still performs the best and when we increase the mini-batch size from $128$ to $512$, the high power objective versions still moves closer to FullPrioritizedL2, as we saw in Figure~\ref{fig:sin-square-varyingminibatch}.

\begin{figure*}[t]
\centering
\includegraphics[width=1.0\linewidth]{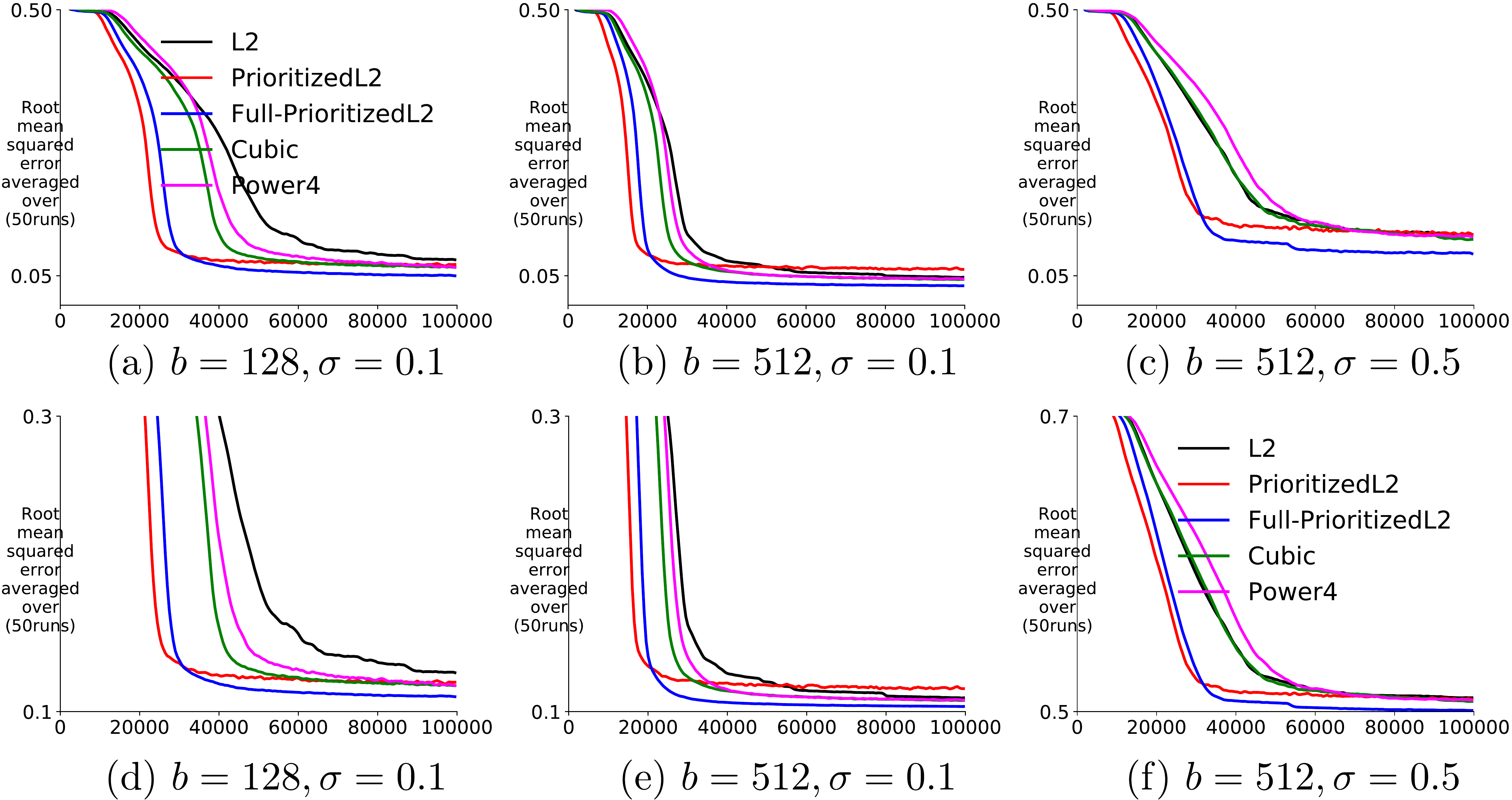}
\caption{
\small
Figure(a)(b)(c) show the testing RMSE as a function of number of mini-batch updates with different mini-batch sizes or Guassian noises with different $\sigma$ added to the training targets. (d)(e)(f) show the training RMSE. The results are averaged over $50$ random seeds. The standard error is small enough to get ignored. Note that the target variable in the testing set is not noise-contaminated.}
\label{fig:sin-square-varyingminibatch-train-large}
\end{figure*}

\begin{figure*}[t]
	\centering
	\subfigure[b=128, $\sigma=0.1$]{
		\includegraphics[width=\figwidththree]{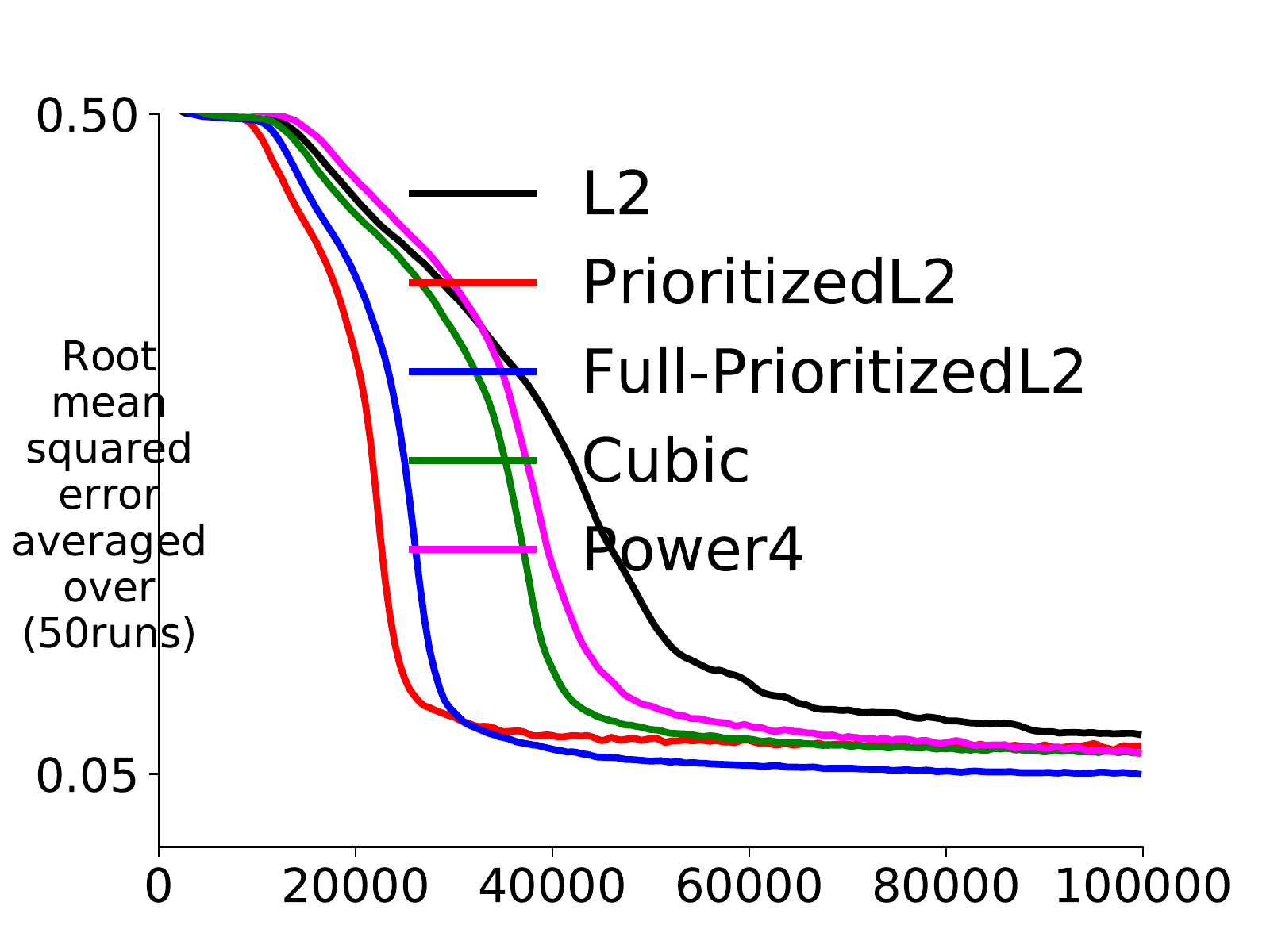}}
	\subfigure[b=512, $\sigma=0.1$]{
		\includegraphics[width=\figwidththree]{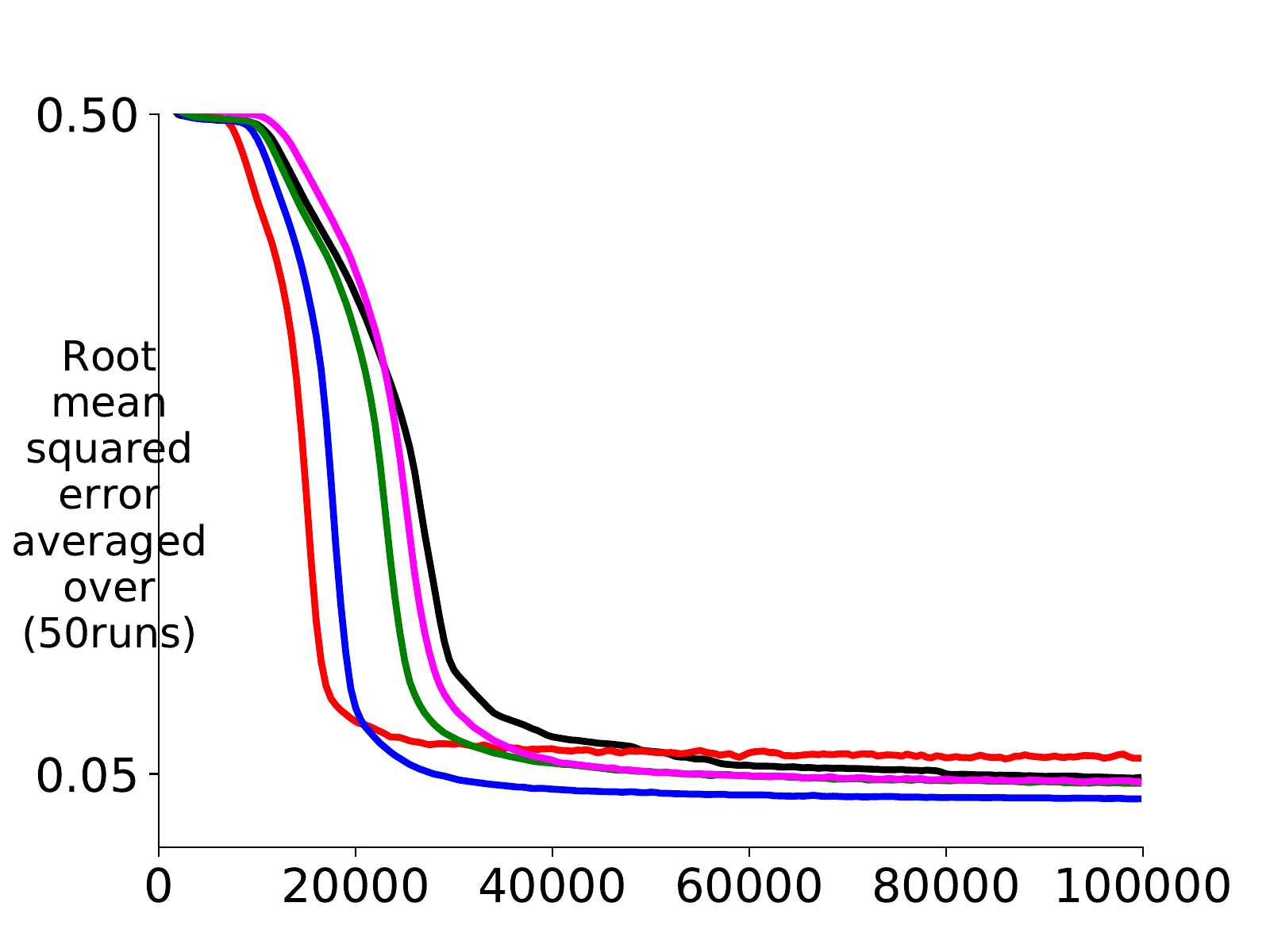}}
	\subfigure[b=512, $\sigma=0.5$]{
		\includegraphics[width=\figwidththree]{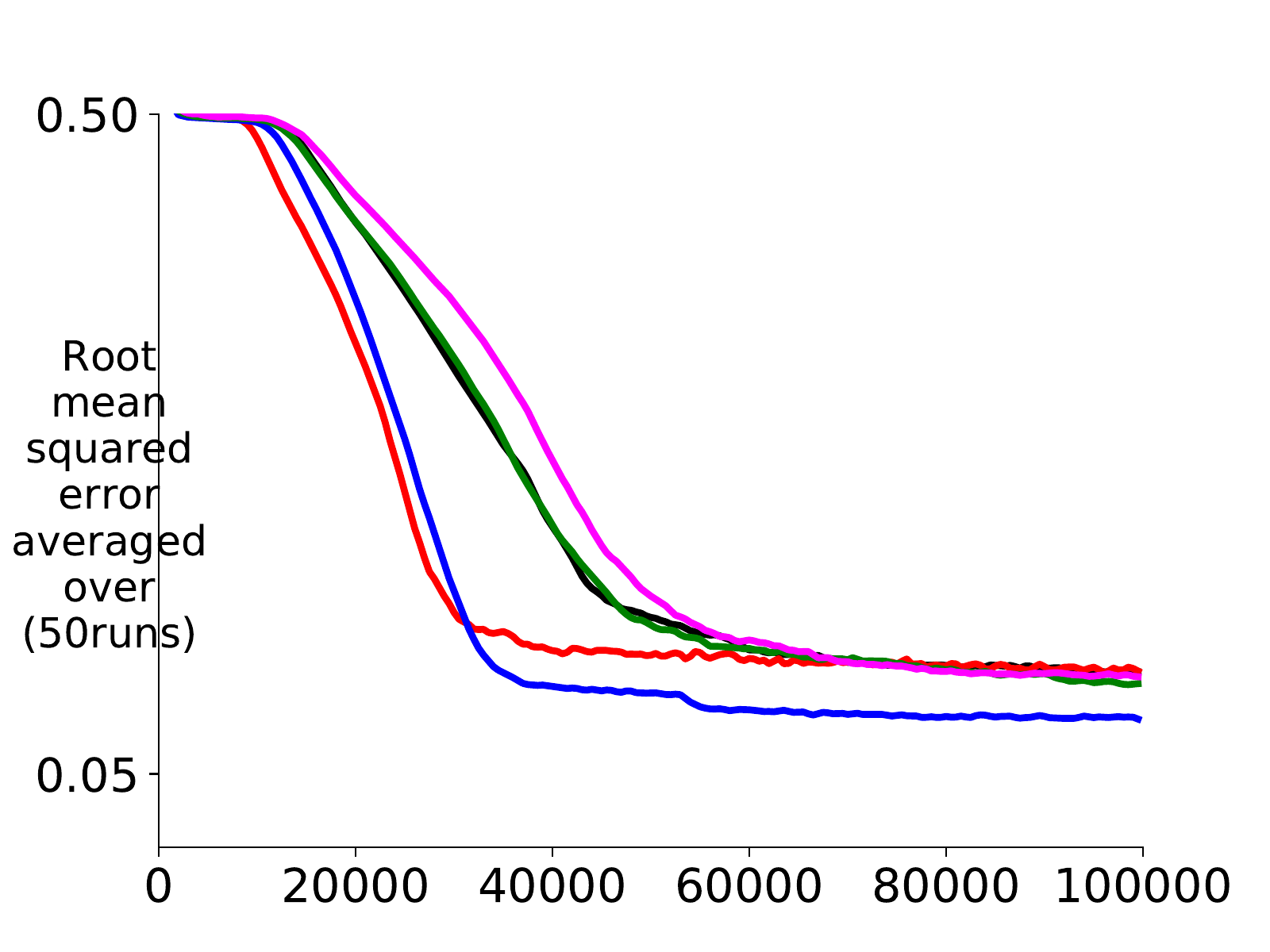}}
		\subfigure[b=128, $\sigma=0.1$]{
		\includegraphics[width=\figwidththree]{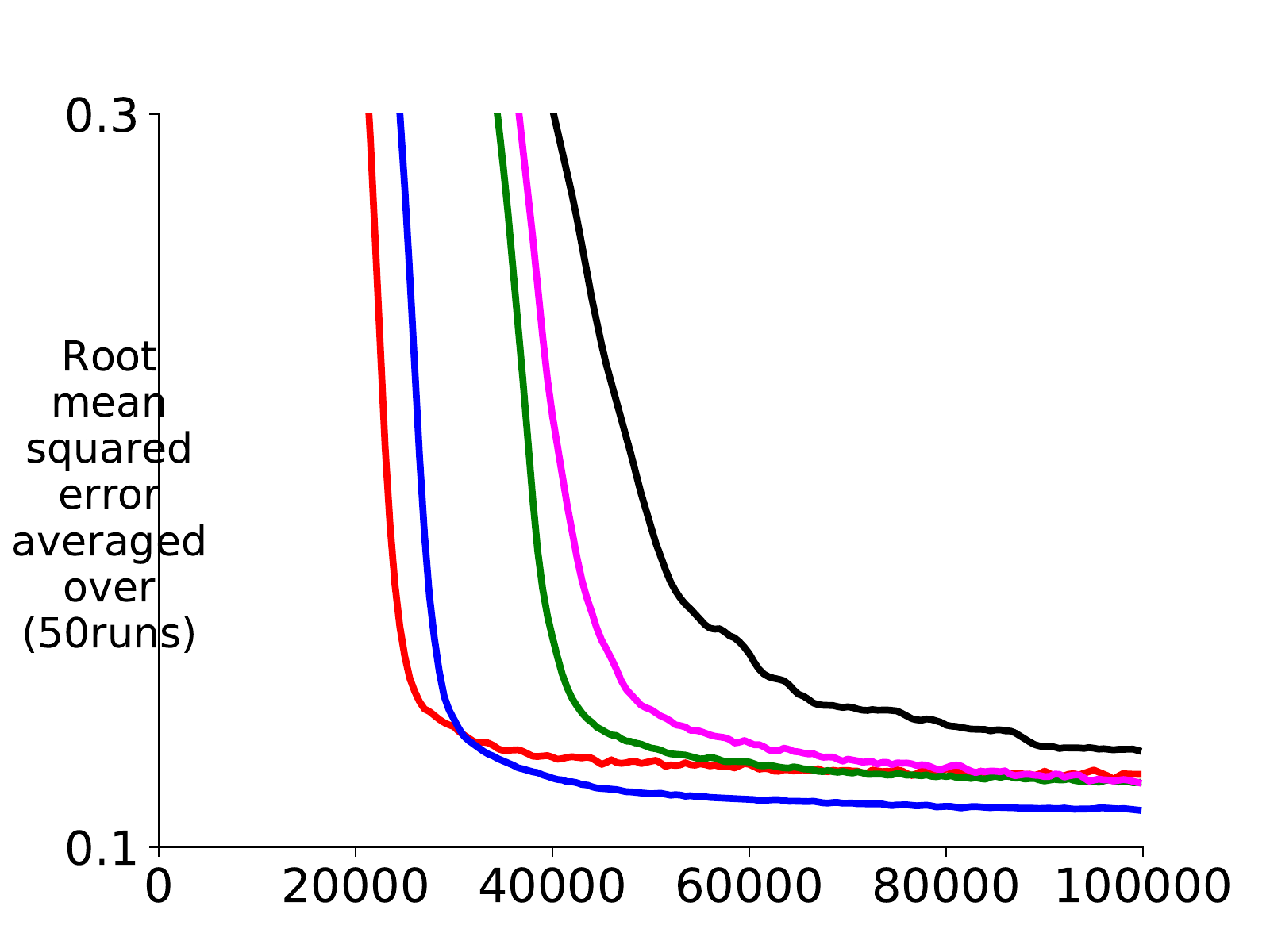}}
	\subfigure[b=512, $\sigma=0.1$]{
		\includegraphics[width=\figwidththree]{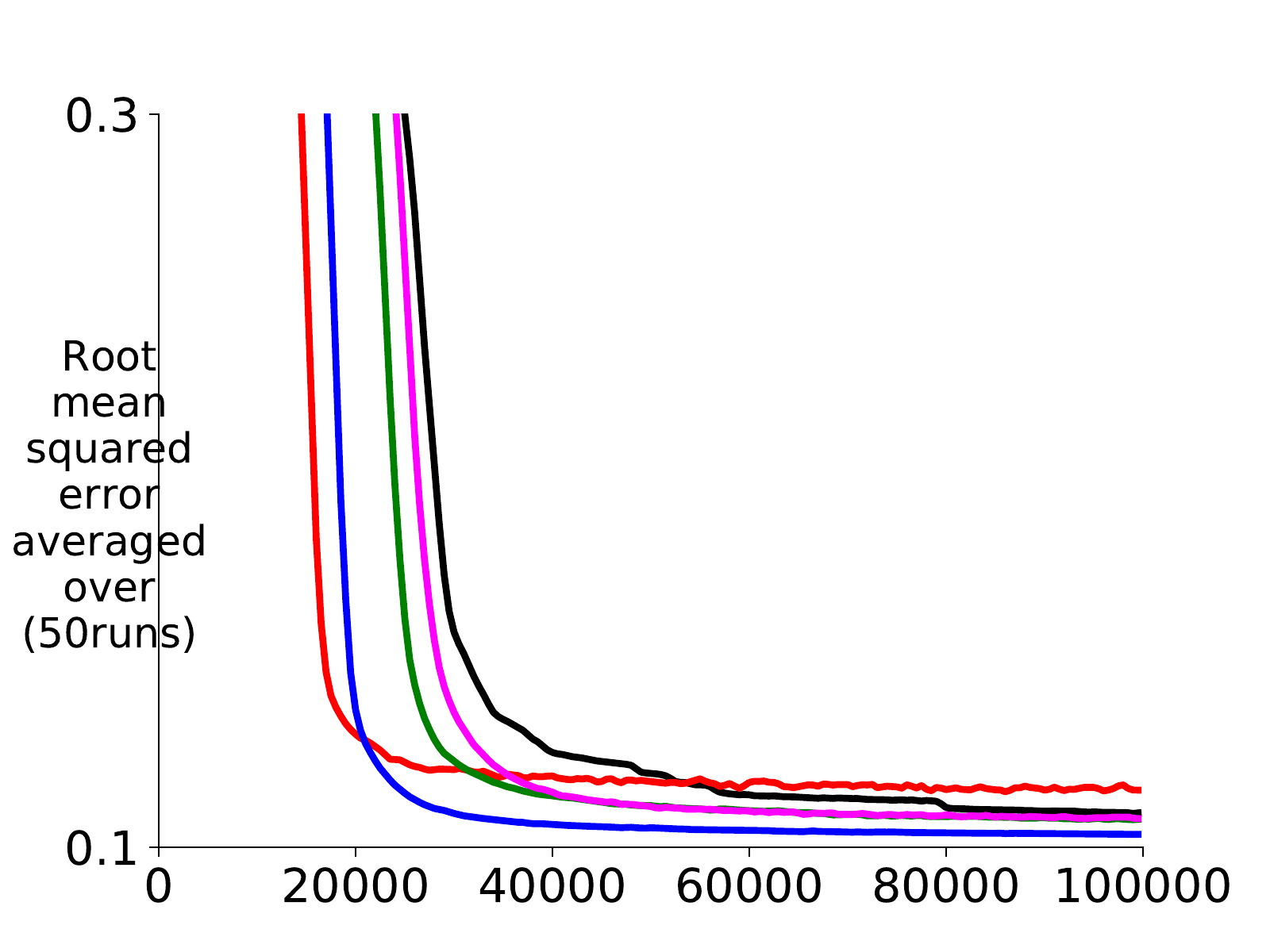}}
	\subfigure[b=512, $\sigma=0.5$]{
		\includegraphics[width=\figwidththree]{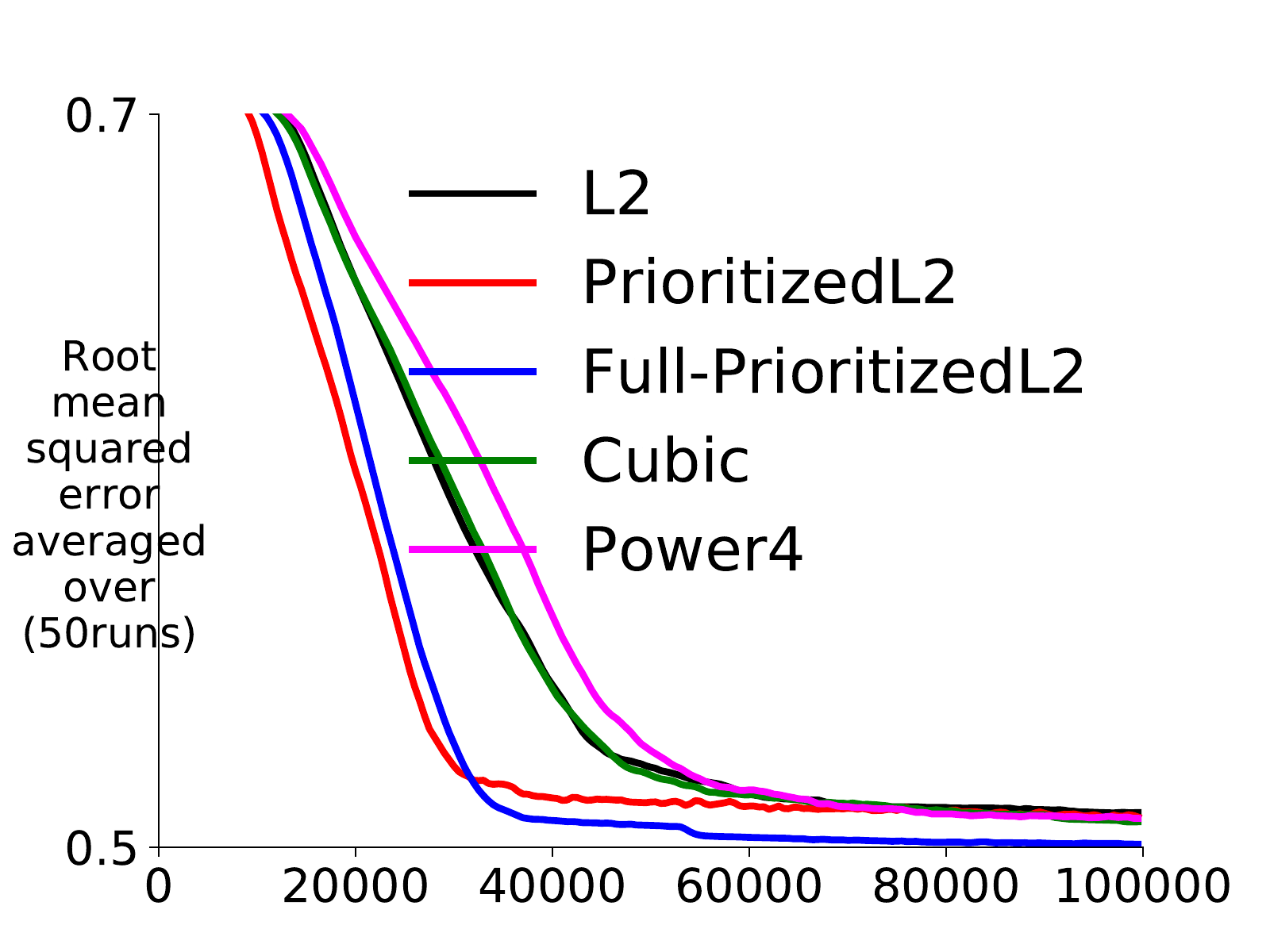}}
	\caption{
		Figure(a)(b)(c) show the testing RMSE as a function of number of mini-batch updates with different mini-batch sizes or Guassian noises with different $\sigma$ added to the training targets. (d)(e)(f) show the training RMSE. The results are averaged over $50$ random seeds. The standard error is small enough to get ignored. Note that the target variable in the testing set is not noise-contaminated.
	}\label{fig:sin-square-varyingminibatch-train-large}
\end{figure*} 

\textbf{Learning curve with on a real-world dataset.} To illustrate the generality of our Theorem~\ref{thm-cubic-square-eq}, we also conduct tests on a frequently cited regression \emph{Bike sharing dataset}~\cite{fanaee2013bikedata}. The data preprocessing is as follows. We remove attributes: date, index, year, weather situation 4, weekday 7, registered, casual. We use one-hot encoding for all categorical variables. We scale the target to $[0, 1]$ and scale it back when computing training errors. We use a $64\times64$ ReLu units neural network with mini-batch size $128$ and learning rate $0.0001$ for training. 

It should be noted that the behavior of Cubic is consistent on the previous sin example and on this real world dataset: it gets closer to FullPrioritizedL2 as we increase the mini-batch size. Another observation is that the Power4 objective is highly variant on this domain, because the real world data should be noisy and the high order objective suffers. This observation corresponds to what we observed in Section~\ref{sec:prioritizedsampling-cubic}, where we show that high power objective is sensitive to the noise. 

\begin{figure*}[!htpb]
\centering
\includegraphics[width=0.6\linewidth]{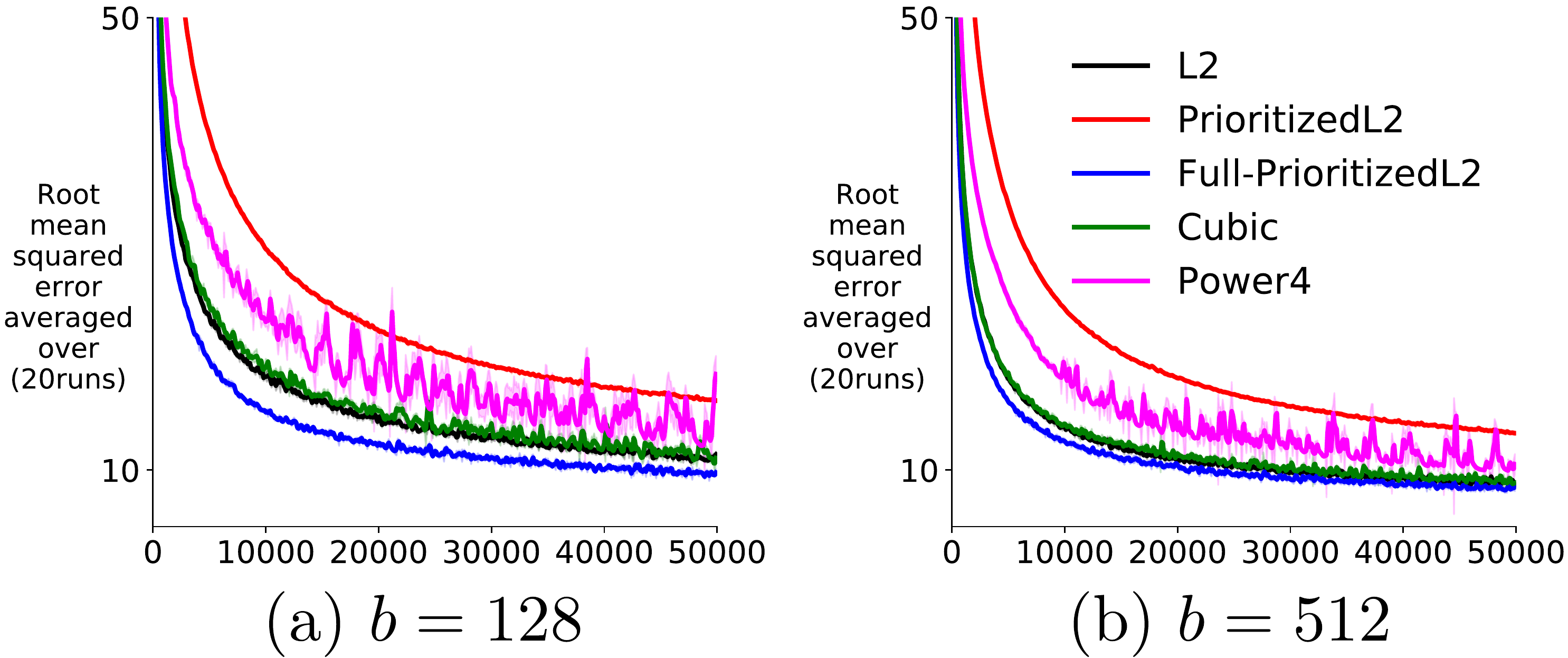}
\caption{
\small
Figure(a)(b)(c) show the training RMSE as a function of number of mini-batch updates on the Bike sharing dataset. The results are averaged over $20$ random seeds. The shade indicates standard error.}
\label{fig:bikeshare}
\end{figure*}

\if0
\begin{figure*}[!htpb]
	\centering
	\subfigure[$b=128$]{
		\includegraphics[width=\figwidththree]{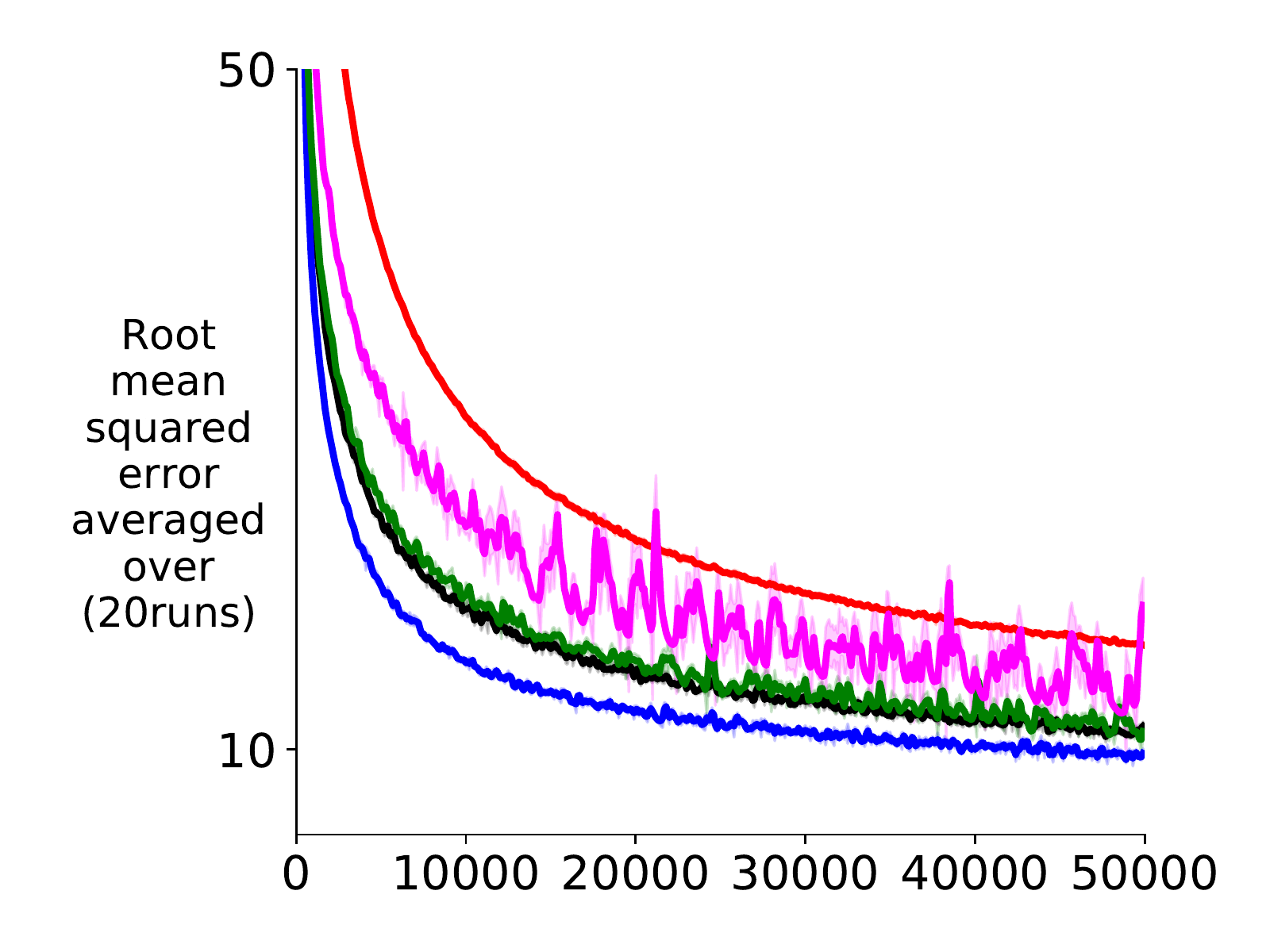}}
	\subfigure[$b=512$]{
		\includegraphics[width=\figwidththree]{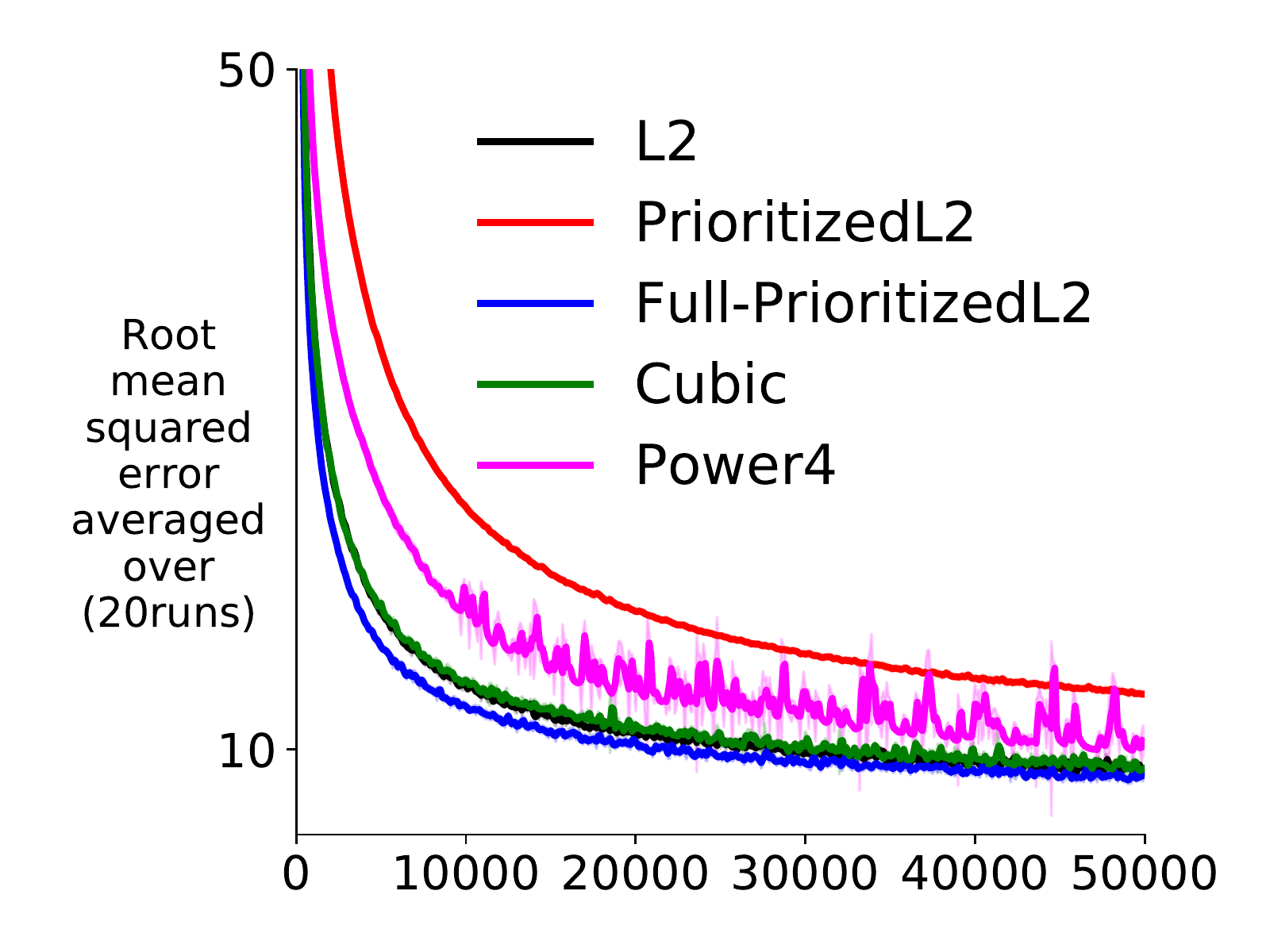}}
	\caption{
		Figure(a)(b)(c) show the training RMSE as a function of number of mini-batch updates on the Bike sharing dataset. The results are averaged over $20$ random seeds. The shade indicates standard error. 
	}\label{fig:bikeshare}
\end{figure*} 
\fi

\fi

\subsection{Reproducible Research}\label{sec:reproduce-experiments}

Our implementations are based on tensorflow with version $1.13.0$ \cite{tensorflow2015-whitepaper}. We use Adam optimizer~\cite{kingma2014adam} for all experiments. The code is available at \url{https://github.com/yannickycpan/reproduceRL.git}. 

\subsubsection{Reproduce experiments before Section~\ref{sec:experiments}}

\paragraph{Supervised learning experiment.} For the supervised learning experiment shown in \cref{sec:prioritizedsampling}, we use $32\times 32$ tanh units neural network, with learning rate swept from $\{0.01, 0.001, 0.0001, 0.00001\}$ for all algorithms. We compute the constant $c$ as specified in the Theorem~\ref{thm-cubic-square-eq} at each time step for Cubic loss. We compute the testing error every $500$ iterations/mini-batch updates and our evaluation learning curves are plotted by averaging $50$ random seeds. For each random seed, we randomly split the dataset to testing set and training set and the testing set has $1$k data points. Note that the testing set is not noise-contaminated.

\paragraph{Reinforcement Learning experiments in Section~\ref{sec:prioritizedsampling}.} We use a particularly small neural network $16\times 16$ to highlight the issue of incomplete priority updating. Intuitively, a large neural network may be able to memorize each state's value and thus updating one state's value is less likely to affect others. We choose a small neural network, in which case a complete priority updating for all states should be very important. We set the maximum ER buffer size as $10$k and mini-batch size as $32$. The learning rate is chosen from $\{0.0001,0.001\}$ and the target network is updated every $1$k steps. 

\paragraph{Distribution distance computation in Section~\ref{sec:td-hc}.} We now introduce the implementation details for Figure~\ref{fig:gd-check-dist}. The distance is estimated by the following steps. First, in order to compute the desired sampling distribution, we discretize the domain into $50\times 50$ grids and calculate the absolute TD error of each grid (represented by the left bottom vertex coordinates) by using the true environment model and the current learned $Q$ function. We then normalize these priorities to get probability distribution $p^\ast$. Note that this distribution is considered as the desired one since we have access to all states across the state space with priorities computed by current Q-function at each time step. Second, we estimate our sampling distribution by randomly sampling $3$k states from search-control queue and count the number of states falling into each discretized grid and normalize these counts to get $p_1$. Third, for comparison, we estimate the sampling distribution of the conventional prioritized ER~\cite{schaul2016prioritized} by sampling $3$k states from the prioritized ER buffer and count the states falling into each grid and compute its corresponding distribution $p_2$ by normalizing the counts. Then we compute the distances of $p_1, p_2$ to $p^\ast$ by two weighting schemes: 1) on-policy weighting: $\sum_{j=1}^{2500} d^\pi(s_j) |p_i(s_j) - p^\ast(s_j)|, i\in\{1, 2\}$, where $d^\pi$ is approximated by uniformly sample $3$k states from a recency buffer and normalizing their visitation counts on the discretized GridWorld; 2) uniform weighting: $\frac{1}{2500} \sum_{j=1}^{2500} |p_i(s_j) - p^\ast(s_j)|, i\in\{1, 2\}$. We examine the two weighting schemes because of two considerations: for the on-policy weighting, we concern about the asymptotic convergent behavior and want to down-weight those states with relatively high TD error but get rarely visited as the policy gets close to optimal; uniform weighting makes more sense during early learning stage, where we consider all states are equally important and want the agents to sufficiently explore the whole state space. 

\paragraph{Computational cost v.s. performance in Section~\ref{sec:td-hc}.} The setting is the same as we used for Section~\ref{sec:experiments}. We use plan step/updates=$10$ to generate that learning curve.  

\subsubsection{Reproduce experiments in Section~\ref{sec:experiments}}

For our algorithm, the pseudo-code with concrete parameter settings is presented in Algorithm~\ref{alg_hctddyna_detailed}.

\textbf{Common settings.} For all discrete control domains other than roundabout-v0, we use $32\times 32$ neural network with ReLu hidden units except the Dyna-Frequency which uses tanh units as suggested by the author~\cite{pan2020frequencybased}. This is one of its disadvantages: the search-control of Dyna-Frequency requires the computation of Hessian-gradient product and it is empirically observed that the Hessian is frequently zero when using ReLu as hidden units. Except the output layer parameters which were initialized from a uniform distribution $[-0.003, 0.003]$, all other parameters are initialized using Xavier initialization~\cite{xavier2010deep}. We use mini-batch size $b = 32$ and maximum ER buffer size $50$k. All algorithms use target network moving frequency $1000$ and we sweep learning rate from $\{0.001, 0.0001\}$. We use warm up steps $=5000$ (i.e. random action is taken in the first $5$k time steps) to populate the ER buffer before learning starts. We keep exploration noise as $0.1$ without decaying. 

\textbf{Hyper-parameter settings.} Across RL experiments including both discrete and continuous control tasks, we are able to fix the same parameters for our hill climbing updating rule~\ref{eq:hc-td} $s \gets s + \alpha_h \nabla_s \log |\hat{y}(s) -  \max_{a} Q (s, a; \theta_t)| + X$, where we fix $\alpha_h = 0.1, X \sim N(0, 0.01)$. 

For our algorithm Dyna-TD, we are able to keep the same parameter setting across all discrete domains: $c=20$ and learning rate $0.001$. For all Dyna variants, we fetch the same number of states ($m=20$) from hill climbing (i.e. search-control process) as Dyna-TD does, and use $\epsilon_{accept}=0.1$ and set the maximum number of gradient step as $k=100$ unless otherwise specified. 

Our Prioritized ER is implemented as the proportional version with sum tree data structure. To ensure fair comparison, since all model-based methods are using mixed mini-batch of samples, we use prioritized ER without importance ratio but half of mini-batch samples are uniformly sampled from the ER buffer as a strategy for bias correction. For Dyna-Value and Dyna-Frequency, we use the setting as described by the original papers.  

For the purpose of learning an environment model on those discrete control domains, we use a $64\times 64$ ReLu units neural network to predict $s'-s$ and reward given a state-action pair $s ,a$; and we use mini-batch size $128$ and learning rate $0.0001$ to minimize the mean squared error objective for training the environment model. 

\textbf{Environment-specific settings.} All of the environments are from OpenAI~\citep{openaigym} except that: 1) the GridWorld envirnoment is taken from~\citet{pan2019hcdyna} and the MazeGridWorld is from~\citet{pan2020frequencybased}; 2) Roundabout-v0 is from~\cite{edouard2019robustcontrol}. For all OpenAI environments, we use the default setting except on Mountain Car where we set the episodic length limit to $2$k. The GridWorld has state space $\States=[0, 1]^2$ and each episode starts from the left bottom and the goal area is at the top right $[0.95, 1.0]^2$. There is a wall in the middle with a hole to allow the agent to pass. MazeGridWorld is a more complicated version where the state and action spaces are the same as GridWorld, but there are two walls in the middle and it takes a long time for model-free methods to be successful. On the this domain, we use the same setting as the original paper for all Dyna variants. We use exactly the same setting as described above except that we change the $Q-$ network size to $64 \times 64$ ReLu units, and number of search-control samples is $m=50$ as used by the original paper. We refer readers to the original paper~\cite{pan2020frequencybased} for more details. 

On roundabout-v0 domain, we use $64 \times 64$ ReLu units for all algorithms and set mini-batch size as $64$. The environment model is learned by using a $200\times 200$ ReLu neural network trained by the same way mentioned above. For Dyna-TD, we start using the model after $5$k steps and set $m=100, k=500$ and we do search-control every $50$ environment time steps to reduce computational cost. To alleviate the effect of model error, we use only $16$ out of $64$ samples from the search-control queue in a mini-batch.  

On Mujoco domains Hopper and Walker2d, we use $200 \times 100$ ReLu units for all algorithms and set mini-batch size as $64$. The environment model is learned by using a $200\times 200$ ReLu neural network trained by the same way mentioned above. For Dyna-TD, we start using the model after $10$k steps and set $m=100, k=500$ and we do search-control every $50$ environment time steps to reduce computational cost. To alleviate the effect of model error, we use only $16$ out of $64$ samples from the search-control queue in a mini-batch.

\begin{algorithm}[t]
	\caption{Dyna-TD}
	\label{alg_hctddyna}
	\begin{algorithmic}
		\STATE \textbf{Input:}  $m$: number of states to fetch through search-control; $\bsc$: empty search-control queue; $\ber$: ER buffer; $\epsilon_{accept}$: threshold for accepting a state; initialize $Q$-network $Q_\theta$
		\FOR {$t = 1, 2, \ldots$}
		\STATE Observe $(s_t, a_t, s_{t+1}, r_{t+1})$ and add it to $\ber$	
		\STATE	// Hill climbing on absolute TD error
		\STATE Sample $s$ from $\ber$, $c\gets 0, \tilde{s}\gets s$
		\WHILE {$c < m$} 
		\STATE $\hat{y} \gets \EE_{s', r\sim \hat{\model}(\cdot | s, a)}[r + \gamma \max_a Q_\theta(s', a)]$
		\STATE Update $s$ by rule~\eqref{eq:hc-td}
		\IF {$s$ is out of the state space} 
		\STATE Sample $s$ from $\ber$, $\tilde{s}\gets s$ // restart
		\STATE \textbf{continue}
		\ENDIF	
		\IF {$||\tilde{s} - s||_2/\sqrt{d} \ge \epsilon_{accept}$} 
		\STATE // $d$ is the number of state variables, i.e. $\States \subset \RR^d$
		\STATE Add $s$ into $\bsc$, $\tilde{s} \gets s$, $c \gets c + 1$
		\ENDIF	
		\ENDWHILE		
		\STATE //$n$ planning updates
		\FOR {$n$ times}
		\STATE Sample a mixed mini-batch with half samples from $\bsc$ and half from $\ber$
		\STATE Update $Q$-network parameters by using the mixed mini-batch
		\ENDFOR
		\ENDFOR
	\end{algorithmic}
\end{algorithm}

\begin{algorithm}[t]
	\caption{Dyna-TD with implementation details}
	\label{alg_hctddyna_detailed}
	\begin{algorithmic}
		\STATE \textbf{Input or notations:}  $k=20$: number search-control states to acquire by hill climbing, $k_b=100$: the budget of maximum number of hill climbing steps; $\rho=0.5$: percentage of samples from search-control queue, $d: \States \subset \RR^d$; empty search-control queue $\bsc$ and ER buffer $\ber$
		\STATE empirical covariance matrix: $\hat{\Sigma}_s \gets \eye$
		\STATE $\mu_{ss} \gets \mathbf{0} \in \RR^{d\times d}, \mu_s \gets \mathbf{0} \in \RR^d$  \ \ \ (auxiliary variables for computing empirical covariance matrix, sample average will be maintained for $\mu_{ss}, \mu_s$)
		\STATE $n_\tau\gets 0$: count for parameter updating times, $\tau\gets 1000$ target network updating frequency
		\STATE $\epsilon_{accept} \gets 0$: threshold for accepting a state
		\STATE Initialize $Q$ network $Q_\theta$ and target $Q$ network $Q_{\theta'}$
		\FOR {$t = 1, 2, \ldots$}
		\STATE Observe $(s, a, s', r)$ and add it to $\ber$
		\STATE $\mu_{ss} \gets \frac{ \mu_{ss}(t-1) + ss^\top}{t}, \mu_{s} \gets \frac{ \mu_{s}(t-1) + s}{t}$
		\STATE $\hat{\Sigma}_s \gets \mu_{ss} - \mu_s \mu_s^\top$
		\STATE $\epsilon_{accept} \gets (1-\beta) \epsilon_{accept} + \beta ||s' - s||_2$ for $\beta = 0.001$
		\STATE	// Hill climbing on absolute TD error
		\STATE Sample $s$ from $\ber$, $c\gets 0, \tilde{s}\gets s, i\gets 0$
		\WHILE {$c < k$ and $i < k_b$} 
		\STATE // since environment is deterministic, the environment model becomes a Dirac-delta distribution and we denote it as a deterministic function $\mathcal{M}: \States\times \Actions\mapsto \States \times \RR$
		\STATE $s', r \gets \mathcal{M}(s, a)$ 
		\STATE $\hat{y} \gets r + \gamma \max_a Q_\theta(s', a)$
		\STATE // add a smooth constant $10^{-5}$ inside the logarithm
		\STATE $s \gets s + \alpha_h \nabla_s \log (|\hat{y} -  \max_{a} Q (s, a; \theta_t)|+10^{-5})+ X, X\sim N(0, 0.01\hat{\Sigma}_s)$
		\IF {$s$ is out of the state space}
		\STATE  // restart hill climbing
		\STATE Sample $s$ from $\ber$, $\tilde{s}\gets s$ 
		\STATE \textbf{continue}
		\ENDIF	
		\IF {$||\tilde{s} - s||_2/\sqrt{d} \ge \epsilon_{accept}$} 
		\STATE Add $s$ into $\bsc$, $\tilde{s} \gets s$, $c \gets c + 1$
		\ENDIF		
		\STATE $i\gets i + 1$	
		\ENDWHILE
		\FOR {$n$ times}
		\STATE Sample a mixed mini-batch $b$, with proportion $\rho$ from $\bsc$ and $1-\rho$ from $\ber$
		\STATE Update parameters $\theta$ (i.e. DQN update) with $b$
		\STATE $n_\tau \gets n_\tau + 1$
		\IF {$mod(n_\tau, \tau) == 0$}
		\STATE $Q_{\theta'} \gets Q_{\theta}$
		\ENDIF
		\ENDFOR
		\ENDFOR
	\end{algorithmic}
\end{algorithm}
\fi

\end{document}